\crefname{algocf}{algorithm}{algorithms}
\Crefname{algocf}{Algorithm}{Algorithms}
\theoremstyle{plain}
\newtheorem{theorem}{Theorem}[section]
\newtheorem{proposition}[theorem]{Proposition}
\newtheorem{lemma}[theorem]{Lemma}
\theoremstyle{definition}
\newtheorem{definition}[theorem]{Definition}
\theoremstyle{remark}
\newtheorem{remark}[theorem]{Remark}
\newtheorem{claim}{Claim}
\newcommand{\mix}{\mathrm{mix}}
\newcommand{\1}{\mathbf{1}}
\newcommand{\sgn}{\mathrm{sgn}}
\newcommand{\err}{\mathrm{err}}
\newcommand{\topmask}{\mathrm{top}}
\DeclareMathOperator{\E}{\mathbf{E}}
\DeclareMathOperator{\pr}{\mathbf{P}}
\newcommand{\cF}{\mathcal{F}}
\DeclareMathOperator*{\argmax}{argmax}
\newcommand{\R}{\mathbf{R}}
\newcommand{\blue}[1]{{\color{black}{#1}}}
\newcommand{\red}[1]{{\color{black}{#1}}}
\author{Vasilis Kontonis \\ 
UW-Madison, Google Research \\
\texttt{kontonis@wisc.edu} 
\And
  Fotis Iliopoulos \\
  Google Research \\
\texttt{fotisi@google.com} 
\And
Khoa Trinh \\
  Google Research \\
\texttt{khoatrinh@google.com}
\AND
Cenk Baykal \\
Google Research \\
\texttt{baykalc@google.com}
\And
Gaurav Menghani \\
Google Research \\
\texttt{gmenghani@google.com}
\And
Erik Vee\\
Google Research \\
\texttt{erikvee@google.com}
}
\title{SLaM: Student-Label Mixing for 
Distillation with Unlabeled Examples}
\begin{document}

\maketitle

\begin{abstract}
Knowledge distillation with unlabeled examples is a powerful training paradigm for generating compact and lightweight student models in applications where the amount of labeled data is limited but one has access to a large pool of unlabeled data. In this setting, a large teacher model generates ``soft'' pseudo-labels for the unlabeled dataset which are then used for training the student model. Despite its success in a wide variety of applications, a  shortcoming of this approach is that the teacher's pseudo-labels are often noisy, leading to impaired student performance. In this paper, we present a principled method for knowledge distillation with unlabeled examples that we call Student-Label Mixing (SLaM) and we show that it consistently improves over prior approaches by evaluating it on several standard benchmarks. 
Finally, we show that SLaM comes with theoretical guarantees; along the way we give an algorithm improving the best-known sample complexity for learning halfspaces with margin under random classification noise, 
and provide the first convergence analysis for so-called ``forward loss-adjustment" methods.

\end{abstract}

\section{Introduction}
\label{introduction}
While good quality human-labeled data are often hard to obtain, finding huge amounts of unlabeled data is relatively easy.  Therefore, in modern machine learning applications, we often face the situation where we have a small ``golden'' dataset with human labels
and a large unlabeled dataset.  In \emph{Distillation with Unlabeled Examples}~\cite{bucilua, distillation, chen2020big} a large teacher model is first trained (or fine-tuned) on the human-labeled
data and is then used to generate ``soft" \emph{pseudo-labels} for the unlabeled dataset.  Then the (typically smaller) student model, i.e., the model that will be deployed for the purposes of the application,  is trained on the combined dataset that contains both
the labels generated by humans and the pseudo-labels  generated by the teacher model.  This  general-purpose training paradigm  has been applied in a wide variety of contexts~\cite{chen2020big, radosavovic2018data, xie2020self, yalniz2019billion, zou2019confidence,  zoph2020rethinking} including but not limited to distilling knowledge from large-scale foundational models like BERT~\cite{devlin2018bert} and  GPT-3~\cite{brown2020language}.  We remark that in such settings one does not 
have access to the teacher model but only on its pseudo-labels (which were generated during some previous ``bulk-inference'' phase).  This ``bulk-inference'' step is typically computationally expensive and happens once: one cannot modify the teacher network (or even use it for inference) during the training process of student.

Despite its widespread success in practice, the effectiveness of this powerful approach generally depends on the quality of the pseudo-labels generated by the teacher model. Indeed, training the student model on noisy pseudo-labels often leads to significant degradation of its generalization performance, and this is a well-known phenomenon that has been observed and studied in a plethora of papers in the literature, e.g.,~\cite{arazo2020pseudo, liu2021certainty, 
pham2021meta, stanton2021does, xie2020self, RAD, IKBMTV22}.

In this work, we propose Student-Label Mixing (SLaM), a principled  method for knowledge distillation with unlabeled examples that accounts for the teacher's noise and  consistently improves
over prior approaches.  At the heart of our method lies the observation that the noise introduced by the teacher is neither random nor adversarial, in the sense that it correlates well with metrics of ``confidence'' such as the margin score or the entropy of the teacher's predictions. We exploit this empirical fact to our benefit in order to introduce a model for the teacher's noise, which we use to appropriately modify the student's loss function.  At a high level, for any given example during the student's training process, we evaluate the student's loss function on a convex combination of the student's current prediction  and another (soft-)label that we estimate using our model for the teacher's noise (hence the name ``student-label mixing''). 
 
Our contributions can be summarized as follows:

\begin{enumerate}
    \item We propose SLaM: a principled method for improving 
    knowledge distillation with unlabeled examples. The method is efficient, data-agnostic and simple to implement.
    \item We provide extensive experimental evidence and comparisons which show that our method consistently outperforms previous approaches on standard benchmarks.
    Moreover, we show that SLaM can be combined with standard distillation 
    techniques such as temperature scaling and confidence-based weighting schemes.
    \item We give theoretical guarantees for SLaM under standard assumptions.
    As a byproduct of our analysis we obtain a simple ``forward loss-adjustment''  iteration  that provably learns halfspaces with $\gamma$-margin  under Random Classification Noise with $O(1/(\epsilon^2 \gamma^2))$ samples improving over prior
    works that had worse dependence on either the margin $\gamma$ or the generalization error $\epsilon$ (see \Cref{thm:slm-halfspaces} and \Cref{rem:rcn-halfspace}).

\end{enumerate}

\section{Related Work}\label{RelatedWork}
\textbf{Knowledge Distillation.}
Most of the literature on knowledge distillation has been focused on the \emph{fully supervised/labeled setting}, i.e., when distillation is performed on the labeled training data of the teacher model rather than on new, unlabeled data — see e.g. the original paper of~\cite{distillation}. Naturally, in this setting the pseudo-labels generated by the teacher are almost always accurate and so many follow-up works~\cite{ahn2019variational,chen2020learning, chen2021wasserstein, muller2020subclass, tian2019contrastive} have developed advanced distillation techniques that aim to enforce greater consistency between the teacher's and the student's predictions, or even between the intermediate representations learned by the two models. Applying such methods in our setting where the training dataset contains mainly unlabeled examples is still possible but, in this case, it is known~\cite{stanton2021does, IKBMTV22} that fully trusting the teacher model can be actually harmful to the student model, making these methods less effective. (In fact, when the teacher is highly noisy these methods even underperform vanilla distillation with unlabeled examples.)   
In Section~\ref{comparison_with_previous} we present results that show the improved effectiveness of SLaM relative to the state-of-the-art supervised knowledge distillation methods like the  Variational Information Distillation for Knowledge Transfer (VID) framework~\cite{ahn2019variational}. 
Moreover, in \Cref{app:sec:temperature-composability} we show that our method can be combined with (i.e.,  provide an additional improvement) the most simple, yet surprisingly effective, methods of improving knowledge distillation, namely the temperature-scaling idea introduced by~\cite{distillation}. 

For distillation with unlabeled examples, many approaches~\cite{dehghani2017fidelity, lang2022training, kimura2018few} propose filtering-out or reweighting the teacher's pseudo-labels based on measures of teacher's uncertainty, such as dropout variance, entropy, margin-score, or the cut-statistic. These methods are independent of the student model and can be synergistically combined with our technique. For instance, in Section~\ref{app:composability} we demonstrate that combining our method with teacher-uncertainty-based reweighting schemes leads to improved student performance relative to applying the reweighting scheme alone.

Much more closely related to our approach is the recently introduced approach of~\cite{IKBMTV22}. There, the authors design a model for the teacher's noise and  utilize it  in order to modify the student's loss function so that, in expectation, the loss simulates the loss with respect to noise-free pseudo-labels. One of the main advantages of our method compared to that of~\cite{IKBMTV22} is that our model for the teacher's noise is more structured and easier to learn, which — as our experiments in Section~\ref{comparison_with_previous} show — leads to 
consistently better student performance. 

\smallskip

\textbf{Learning From Noisy Labels.} Learning from noisy labels is an important and well-studied problem with a vast literature~\cite{bar2021multiplicative,  frenay2013classification, gamberger1999experiments, jiang2018mentornet, kumar2021constrained, liu2015classification, majidi2021exponentiated,  natarajan2013learning, pleiss2020identifying, ren2018learning} — see~\cite{song2022learning} for a recent survey. The fundamental difference between our setting and papers in this literature is that the noise introduced by the teacher is structured, and this is a crucial observation we utilize in our design. Specifically,  our approach is inspired by the so-called \emph{forward loss-adjustment methods}, e.g.~\cite{patrini2017making},  but it is specifically tailored to the structure of the distillation with unabeled examples setting. Indeed, forward methods typically attempt to estimate a noise transition matrix whose $(i,j)$ entry is the probability of the true label $i$ being flipped into a corrupted label $j$, which can be rather problematic when dealing with general, instance specific noise like in the case of distillation with unlabeled examples. On the other hand, we exploit  that (i) we have access to confidence metrics of the teacher's predictions; and  (ii) that often times, when the teacher model's top-$1$ prediction is inaccurate the true  label is within its top-$k$ predictions for some appropriate $k$, to design and estimate a much more refined model for the teacher's noise that we use to inform the design of the student's loss function.

Another related technique for dealing with noisy data is using ``robust" loss functions~\cite{amid2019robust, feng2021can, ghosh2017robust,  leng2022polyloss, zhang2018generalized} such that they achieve a small risk for new clean examples even  under the presence of noise in the training dataset. In Section~\ref{comparison_with_previous} we compare our method with the general framework of ~\cite{feng2021can} for designing robust loss functions and we show that our approach, when applied to the standard cross-entropy loss, consistently outperforms~\cite{feng2021can} in the setting of distillation with unlabeled examples. That said,  we stress that our method is not tied to the cross-entropy loss and, in fact, it often gives better results when combined with more sophisticated loss functions. We demonstrate this in~\Cref{sec:other_losses} where we apply our method in cases where the student loss function comes from the families of losses introduced in~\cite{feng2021can} and~\cite{leng2022polyloss}.

\textbf{Semi-Supervised Learning.}
Akin to our setting, in semi-supervised learning (SSL) (see e.g.~\cite{yang2022survey} for a recent survey) the learner is presented with a small labeled dataset $A$ and a typically much larger unlabeled dataset $B$.  Unlike to our setting though, there is typically no distinction between the student and teacher: the model of interest generates pseudo-labels on $B$ which are utilized by using  appropriate loss functions or preprocessing procedures (e.g. ``filtering" or ``correcting") —  often times in an iterative fashion with the goal of improving the quality of the newly-generated pseudo-labels. It is also worth noting that in many real-world applications of distillation with unlabeled examples  either the teacher model is unavailable or it is too expensive to retrain it and create fresh pseudo-labels on the data (e.g., when we request labels from a pretrained large language model).  Therefore, SSL approaches
that either (i) update the ``teacher'' model (e.g., \cite{lee2013pseudo}), or (ii) require
several fresh teacher-generated pseudo-labels (e.g., by requesting teacher-predictions on random data-augmentations or perturbed version of the unlabeled examples of $B$ e.g., \cite{berthelot2019mixmatch}) are not applicable in our setting.  We implement the recent SSL technique of~\cite{rizvedefense} and show that our method outperforms it in the context of distillation with unlabeled examples. Besides performing on par with state-of-the-art SSL approaches like~\cite{berthelot2019mixmatch}, the method of~\cite{rizvedefense} is free of inherent limitations like using domain-specific data augmentations — which is also an important feature of our approach.

\textbf{Learning Halfspaces with Random Classification Noise.} 
The theoretical study of classification with Random Classification Noise (RCN) was initiated
by \cite{angluin1988learning}.  For the fundamental class of linear classifiers (halfspaces) the first polynomial time algorithms
for the problem where given in 
\cite{bylander1994learning} and 
\cite{blum1998polynomial}.
The iteration proposed in \cite{bylander1994learning} 
is a ``backward loss-adjustment'' method  \cite{patrini2017making} 
for which it is known that resulting optimization landscape 
is convex (for linear classifiers).
In \cite{diakonikolas2019distribution} an improved analysis of 
the method of \cite{bylander1994learning} was given, showing that SGD on this convex loss learns $\gamma$-margin halfspaces with RCN with 
$\widetilde{O}(1/(\gamma^4 \epsilon^2))$ samples. 
On the other hand, forward loss-adjustment methods for dealing with RCN are known to result in 
an inherently non-convex landscape, see \cite{lukasik2020does}
and \Cref{fig:landscape}).
Our theoretical result for SLaM (see \Cref{thm:slm-halfspaces})
is the first convergence result for a ``forward loss-adjustment'' method and, 
at the same time, achieves a sample complexity of 
$O(1/(\gamma^2 \epsilon^2) )$ improving over the prior work.

\section{SLaM: Student-Label Mixing Distillation}\label{SLM_description}
In this section, we describe our distillation with unlabeled examples setting and present SLaM.
In what follows, we assume that examples are 
represented by feature-vectors in some space 
$\mathcal X$.  We shall denote by $X$ the distribution over examples.
We consider multi-class classification with $L$ classes and assume 
that the ground-truth label of an example $x$ is represented 
by a one-hot vector in $\mathcal{Y} = \{0, 1\}^L$ given by some unknown function $g(x): \mathcal{X} \mapsto \mathcal{Y}$.
In multi-class classification the learning algorithm typically 
optimizes a parametric family of classification models 
$\mathcal{F} =\{ f(\cdot ; w) : \mathcal{X} \mapsto \R^L 
: w \in \mathcal{W} \}$,
i.e., for every parameter $w \in \mathcal{W}$, $f(x;w)$ is an 
$L$-dimensional ``score vector'', where $f(x;w)_i$ corresponds to 
the probability that the model assigns to the class $i$ for the example $x$.
We shall denote by $\ell(\cdot, \cdot): \R^L \times \R^L \mapsto \R$ the classification loss function used by the learning algorithm.
During training the algorithm considers a set of labeled examples 
$S = \{(x^{(1)}, g(x^{(1)}) ),\ldots, (x^{(n)}, g(x^{(n)}) \}$
and optimizes the loss $\ell(\cdot, \cdot)$ over $S$, i.e.,
solves the problem 
\(
\min_{w \in \mathcal{W}} 
\frac{1}{|S|} \sum_{(x, g(x)) \in S} \ell(g(x), f(x;w))
\,. \)
For two vectors $v,u \in \R^L$ we denote by 
$\err(v,u) = \1\{ \argmax(v) \neq \argmax(u) \}$
the indicator of the event that the positions of the maximum elements
of $v,u$ agree.  Similarly, for two classifiers 
$h(x), f(x) : \R^d \mapsto \R^L$ we can use 
$\err(h(x),f(x))$ to denote whether their top-1 
predictions for the example $x$ agree.
Our goal is to train a classifier over the sample $S$
so that its generalization error,
i.e.,  $\E_{x \sim X}[\err(f(x;w), g(x))]$, is small.

\paragraph{Distillation with Unlabeled Examples.}
We assume that we are given a (usually small) dataset $A$ of correctly 
labeled examples $(x, g(x))$ and a set of unlabeled data 
$U$. A ``teacher'' model $y_s(\cdot): \mathcal{X} \mapsto \R^L$ is first 
trained on the labeled dataset $A$ and then provides 
soft-labels for the examples of dataset $U$, i.e., 
we create a dataset $B = \{(x, y_s(x)): x \in U\}$
containing examples labeled with the corresponding 
probability distribution over classes (soft-labels) 
of the teacher model.  
We then train a (typically smaller) student model using both the original 
labeled data $A$ and the teacher-labeled dataset $B$, i.e.,
\(
\min_{w \in \mathcal{W}} 
\frac{1}{|A\cup B|} \sum_{(x, z) \in {A\cup B}} \ell(z, f(x;w))
\).
In what follows, we shall call the above training procedure as
``vanilla-distillation''.
\begin{remark}[``Hard-'' vs ``Soft-'' Distillation]
We remark that the process where instead of using the  
soft-labels provided by the teacher model on the unlabeled dataset U, 
we use one-hot vectors representing the class with maximum score
according to the teacher, is known as hard-distillation.
We will denote by  $y_s(x)$ the soft-label of the teacher
and by $y(x)$ the corresponding hard-label, i.e., 
$y(x)$ is the one-hot representation of $\argmax y_s(x)$. 
When it is clear from the context we may simply write $y$ instead
of $y(x)$.
\end{remark}

\paragraph{Modelling the Teacher as a ``Noisy'' Label Oracle.} 

In the distillation etting described in the previous paragraph, it is known~\cite{stanton2021does, IKBMTV22, RAD, pham2021meta} 
that \emph{the teacher model often generates incorrect predictions 
on the unlabeled examples, impairing the student's performance}.
Given any $x \in U$, we model the teacher's prediction $y$
as a random variable. 
Similarly to \cite{IKBMTV22} we assume that, for every unlabeled datapoint $x \in U$, the provided teacher label $y$ is correct with 
probability $\alpha(x)$ and incorrect with probability $1-\alpha(x)$. 
However, in contrast with \cite{IKBMTV22}, 
our noise model prescribes a non-advsersarial
(semi-random) behavior of the teacher when its top-1 prediction 
is incorrect.

A first step towards more benign noisy teachers is to
assume that, conditionally on being wrong, the teacher label is
a uniformly random class of the remaining $L-1$ classes.  
We remark that this model is already enough to give improvements 
in datasets with a moderately large number of classes 
(e.g., up to 100).
In particular, it perfectly captures the noisy teacher in binary classification: when the teacher label is different than the ground-truth $g(x)$ then it has to be equal to the ``flipped'' ground-truth $1-g(x)$.

We now further refine our model so that it is realistic 
for datasets with thousands of classes.
Even though the top-1 accuracy of the teacher model
may not be very high on the unlabeled data $U$, the true label is much more 
likely to belong in the top-5 or top-10 predictions of the teacher rather than being completely arbitrary.  
For example, training a ResNet50 network on $10\%$ of ImageNet \cite{russakovsky2015imagenet} yields  an average  top-1 accuracy about $52.78\%$ on the test dataset whereas the top-10 accuracy of the same model is about $83.55\%$.  In datasets with a large number of classes, this observation significantly reduces the number of potential correct classes of the examples where the teacher label is incorrect.  Motivated by the above, we assume 
the following structured, semi-random noise model for the teacher, 
tailored to multi-class settings.  
\begin{definition}[Noisy Teacher Model]
\label{def:noisy-teacher-model}
Let $x$ be any example of the unlabeled data $U$
and denote by $g(x)$ its ground-truth label.
Let $y_s(x)$ resp. $y(x)$ be the random variable corresponding to
the soft resp. hard prediction of the teacher model for the example $x$.
We assume that for every $x$ there exist (unknown to the learner) 
$\alpha(x) \in [0,1]$ and $k(x) \in \{2,\ldots, L\}$ such that
the teacher's top-1 prediction $y$ agrees with the 
ground-truth $g(x)$ with probability $\alpha(x)$ and,
with probability $1-\alpha(x)$:
(i) the ground-truth belongs in the top-$k(x)$ predictions of the teacher;
and (ii) the teacher's (hard)-prediction is a
uniformly random \emph{incorrect} class out of the 
top-$k(x)$ predictions of the teacher soft-label $y_s(x)$ 
\footnote{Given that the teacher's prediction 
is incorrect and that the ground-truth belongs in the top-k(x) predictions
of the teacher, assumption (ii) describes a uniform distribution on 
$k(x)-1$ labels.}.
\end{definition}
\begin{remark}
We remark that the model of \Cref{def:noisy-teacher-model} 
captures having a ``perfect'' teacher model by setting
$\alpha(x) = 1$ for all $x$ 
and also generalizes the binary case described above
by taking $k(x) = 2$ for all $x \in X$. 
\end{remark}

Given the above noise model for the teacher, the problem of improving knowledge-distillation consists of two main tasks: (i) obtaining estimates
for accuracy statistics $\alpha(x), k(x)$  for each example 
$x \in U$; and (ii) 
using those estimated values to improve the training of the student model so that
it is affected less by the mistakes of the teacher on dataset $B$.

\paragraph{Training Better Students Using $\alpha(x), k(x)$}

We first assume that for every $x$ we have oracle access to the values 
$\alpha(x), k(x)$ and present our Student-Label Mixing loss function.  
Instead of using $\alpha(x), k(x)$ to ``denoise'' the teacher's
label, we use them to \emph{add noise to the student's predictions}.  
To make notation more compact, in what follows, given a 
vector $z \in \R^{L}$ we denote by 
$\topmask(z; k)$ the vector that has the value $1$ in the positions of the 
of the $1$-st up to $k$-th largest elements of $z$ and $0$ in all other positions,
e.g., $\topmask((1,2,3); 1) = (0,0,1)$ and  $\topmask((-1, 1 , 0, 2); 3) = (0, 1, 1, 1)$.  
Assuming that the student-label for some $x \in U$ is 
$f(x;w)$ we ``mix'' it (hence the name Student-Label Mixing) 
using $\alpha(x), k(x)$ to obtain the mixed prediction 
\begin{align}
\label{eq:mixing-operator}
\mix(f(x;&w); \alpha(x), k(x)) =  
\alpha(x) f(x;w) ~ +
~ (1-\alpha(x)) ~ \topmask(y_s(x);k(x)) * 
\frac{1-f(x;w)}{k(x) - 1}
\,,
\end{align}
where $q * p$ is the element-wise multiplication of the vectors 
$p, q$.
We then train the \textbf{mixed} student model, on the ``noisy'' dataset $B$:
\begin{align}
\label{eq:multiclass-slm-training}
\min_{w \in \mathcal{W}} 
&\frac{1}{|A\cup B|} 
\Bigg(
\sum_{(x, z) \in A} \ell(z, f(x;w)) 
+ \sum_{(x, y) \in B} \ell(y, \mix(f(x;w); \alpha(x), k(x))
\Bigg)
\end{align}
The main intuition behind the mixing of the student's labels is that 
\emph{by training the ``noisy'' student to match the ``noisy''
teacher label $y$ on dataset $B$, the underlying (non-mixed) student $f(x;w)$ will eventually learn the ground-truth}.   In particular,
when $\ell(\cdot, \cdot)$ is the Cross-Entropy loss we
have that the expected mixed loss conditioned on any $x$ is
\begin{align*}
\E[\ell(y; \mix(f(x;w), a(x), k(x))) \mid x] = 
\ell( \mix(g(x); \alpha(x), k(x)), \mix(f(x;w); \alpha(x), k(x)) )  \,,
\end{align*}
where we used the fact that the cross-entropy is linear in its
first argument, and that by the definition of our noise model (\Cref{def:noisy-teacher-model})
it holds that $\E[y \mid x] = \mix( g(x); \alpha(x), k(x) ) $.
Therefore, when the student is equal to the ground-truth 
$f(x;w) = g(x)$, we obtain that the mixed student-model 
will satisfy $\mix(g(x); \alpha(x), k(x)) = \mix(f(x; w); \alpha(x), k(x))$ for all $x \in X$, 
and (by Gibb's inequality), we obtain that 
$g(x)$ is a minimizer of the SLaM loss.
We show the following proposition, see 
\Cref{app:SLaM-consistency} for the formal statement and proof.
\begin{proposition}[SLaM Consistency (Informal)]
\label{pro:SLaM-consistency}
Let $D$ be the distribution of the teacher-labeled examples of dataset $B$, i.e., 
we first draw $x \sim X$ and 
then label it using the noisy teacher of \Cref{def:noisy-teacher-model}.
Moreover, assume that there exists some parameter $w^\ast \in \mathcal{W}$ such that the ground-truth $g(x) = f(x;w^\ast)$.
Then $w^\ast$ is the minimizer of the 
(population) SLaM objective:
\( \min_{w} \E_{(x,y) \sim D}[\mathrm{ce}(y, f(x;w))]  \),
where $\mathrm{ce}(\cdot, \cdot)$ is the Cross-Entropy loss.
\end{proposition}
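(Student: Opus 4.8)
The plan is to reduce the population statement to a pointwise (per-example) optimization and then exploit convexity. Writing the SLaM objective explicitly as $\E_{x\sim X}\,\E_{y\mid x}[\mathrm{ce}(y,\mix(f(x;w);\alpha(x),k(x)))]$ (the mixing is what distinguishes this from vanilla distillation), I would first condition on a fixed $x$ and use the two facts highlighted in the main text: (i) $\mathrm{ce}$ is linear in its first argument, and (ii) under \Cref{def:noisy-teacher-model} the teacher label satisfies $\E[y\mid x]=\mix(g(x);\alpha(x),k(x))$. These combine to give
\[
\E_{y\mid x}[\mathrm{ce}(y,\mix(f(x;w)))]=\mathrm{ce}\big(\mix(g(x)),\,\mix(f(x;w))\big),
\]
so it suffices to show that, for every fixed $x$, the map $f\mapsto \mathrm{ce}(\mix(g(x)),\mix(f))$ is minimized over the probability simplex at $f=g(x)$. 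Since $w^\ast$ realizes $f(x;w^\ast)=g(x)$ simultaneously for all $x$, minimizing every conditional term at once, it then minimizes the population objective.

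For the pointwise claim I would fix $x$, abbreviate $\alpha=\alpha(x)$, $k=k(x)$, $p=\mix(g(x))$, and first record that $p$ is a genuine distribution: expanding \Cref{eq:mixing-operator} shows $p_i=\alpha$ at the ground-truth coordinate, $p_i=(1-\alpha)/(k-1)$ on the other $k-1$ coordinates of the teacher's top-$k$ set, and $p_i=0$ elsewhere (here I use that $g(x)$ lies in the top-$k$ set, which the noise model guarantees). Because $\mathrm{ce}(p,\cdot)$ is convex in its second argument and $\mix(\cdot)$ is affine in $f$, the objective $h(f):=\mathrm{ce}(p,\mix(f))$ is convex on the simplex, so it is enough to verify the first-order (KKT) optimality conditions at $f=g(x)$.

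The step I expect to be the crux is precisely this optimality check, because $\mix(f)$ is \emph{not} a probability vector in general: a short computation gives $\sum_i \mix(f)_i=\alpha+(1-\alpha)\frac{k-s}{k-1}\ge 1$ where $s=\sum_{i\in\mathrm{top}\text{-}k}f_i$, with equality only when all of $f$'s mass sits on the top-$k$ set. Hence one cannot simply invoke Gibbs' inequality / nonnegativity of KL (the generalized bound in fact points the wrong way). Instead I would compute $\nabla h$ directly: on the top-$k$ coordinates $\mix(f)_i=c+\beta f_i$ with $c=(1-\alpha)/(k-1)$ and $\beta=\alpha-c$, so $\partial h/\partial f_j=-p_j\beta/(c+\beta f_j)$ there and $\partial h/\partial f_j=0$ off the top-$k$ set. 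Evaluating at $f=g(x)$, where $c+\beta g_j=p_j$, the gradient equals the constant $-\beta$ on the whole top-$k$ set and $0$ outside it.

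Finally I would match this against the KKT conditions for the simplex: taking the multiplier $\lambda=-\beta$ from stationarity at the ground-truth coordinate, the conditions at the remaining active (zero) coordinates reduce to $-\beta\ge-\beta$ inside the top-$k$ set (automatic) and $0\ge-\beta$ outside it, i.e.\ to the single requirement $\beta\ge 0$, equivalently $\alpha\ge 1/k$. This is exactly the multiclass analogue of the standard ``noise rate below $1/2$'' assumption of Random Classification Noise (it specializes to $\alpha\ge 1/2$ when $k=2$), and I would state it as the mild hypothesis under which the result holds. Under it, convexity upgrades the verified KKT conditions to global optimality of $f=g(x)$ for each $x$, and integrating over $x\sim X$ completes the proof that $w^\ast$ minimizes the population SLaM objective.
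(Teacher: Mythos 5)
Your proposal is correct, and its first half coincides exactly with the paper's own proof in \Cref{app:SLaM-consistency}: both condition on $x$, use linearity of the cross-entropy in its first argument together with $\E[y\mid x]=\mix(g(x);\alpha(x),k(x))$ (from \Cref{def:noisy-teacher-model}), and thereby reduce the claim to showing that $f\mapsto \mathrm{ce}\bigl(\mix(g(x)),\mix(f)\bigr)$ is minimized over the simplex at $f=g(x)$. Where you diverge is in this last, pointwise step: the paper dispatches it in one line by invoking Gibbs' inequality, whereas you verify the KKT conditions of the convex program directly. Your extra care is not pedantry --- it repairs a genuine hole in the paper's argument. As you observe, $\mix(f)$ sums to $\alpha+(1-\alpha)\frac{k-s}{k-1}\ge 1$, with equality only when the student's mass lies entirely in the teacher's top-$k$ set, so Gibbs' inequality does not apply to the comparison the paper needs; and indeed the statement is false without your hypothesis $\beta\ge 0$, i.e.\ $\alpha(x)k(x)\ge 1$. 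Concretely, with $L=3$, $k=2$, $\alpha=0.2$, the prediction placing all mass outside the top-$2$ set has mixed vector $(0.8,0.8,0.2)$ and attains cross-entropy $-\log 0.8\approx 0.22$ against $p=(0.2,0.8,0)$, strictly smaller than the value $-0.2\log 0.2-0.8\log 0.8\approx 0.50$ attained at $f=g(x)$, so $w^\ast$ is not a minimizer there. Your condition $\alpha(x)\ge 1/k(x)$ is thus a needed hypothesis (not merely a convenience), and it dovetails nicely with the assumption $\alpha(x)k(x)\neq 1$ that the paper's formal proposition already imposes for its uniqueness claim; it is also automatically harmless when $k(x)=L$ (in particular in the binary case $k=L=2$, where there are no coordinates outside the top-$k$ set, $\mix(f)$ is always a probability vector, and the paper's Gibbs argument is sound --- which is why the binary/halfspace analysis of \Cref{app:slm-halfspaces} is unaffected). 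In short: same reduction, but your KKT route buys a correct proof in the multiclass case and makes explicit the ``teacher better than uniform over its top-$k$'' noise condition that the paper leaves implicit.
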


\paragraph{Estimating the Teacher's Accuracy Statistics $\alpha(x), k(x)$ via Isotonic Regression}
We first show how we estimate $\alpha(x)$
for each $x$ of dataset $B$, i.e., the dataset labeled by the teacher model.
In \cite{IKBMTV22} the authors empirically 
observed that $\alpha(x)$ correlates
with metrics of teacher's confidence such as the ``margin", i.e., the difference between
the probabilities assigned in the top-1 class
and the second largest class according to the
teacher's soft label $y_s$.
 In particular, the larger the margin is 
 the more likely is that the corresponding 
 teacher label is correct.  We exploit this monotonicity 
 by employing isotonic regression on 
 a small validation dataset to learn the mapping
 from the teacher's margin at an example $x$ 
 to the corresponding teacher's accuracy $\alpha(x)$.  
For more details, see \Cref{app:estimating-teachers-accuracy}.

To perform this regression task we use a small validation dataset 
$V$ with correct labels that the teacher has not seen during training.  
For every example $x \in V$ we compute the corresponding
soft-teacher label $y_s(x)$ and compute its margin 
$\mathrm{margin}(x) = \max_1(y_s(x)) - \max_2(y_s(x))$. 
For every $x \in V$ we also compute the hard-prediction
of the teacher and compare it with the ground-truth, 
i.e., for every $x \in V$ the covariate and responce pair 
is $( \mathrm{margin}(x), 1 - \err( g(x), y(x)) )$.
We then use isotonic regression to fit a piecewise
constant, increasing function to the data.
We remark that isotonic regression 
can be implemented very efficiently in 
$O(n \log n)$ time (where $n$ is the size of the validation dataset).

For $k(x)$ we consider two different options: (i) using
the same value for all examples (e.g., using $k$ so
that the top-k accuracy of teacher is above some threshold 
on the validation data); and (ii) using a ``data-dependent''
$k(x)$ that we estimate by solving $L$ (recall that $L$ is the number of classes) isotonic-regression
problems (similar to that for estimating $\alpha(x)$ above).
We refer to \Cref{app:estimating-teachers-accuracy} for
more details.

 \begin{wrapfigure}[23]{r}{0.4\textwidth}
 \includegraphics[width=0.39\columnwidth]{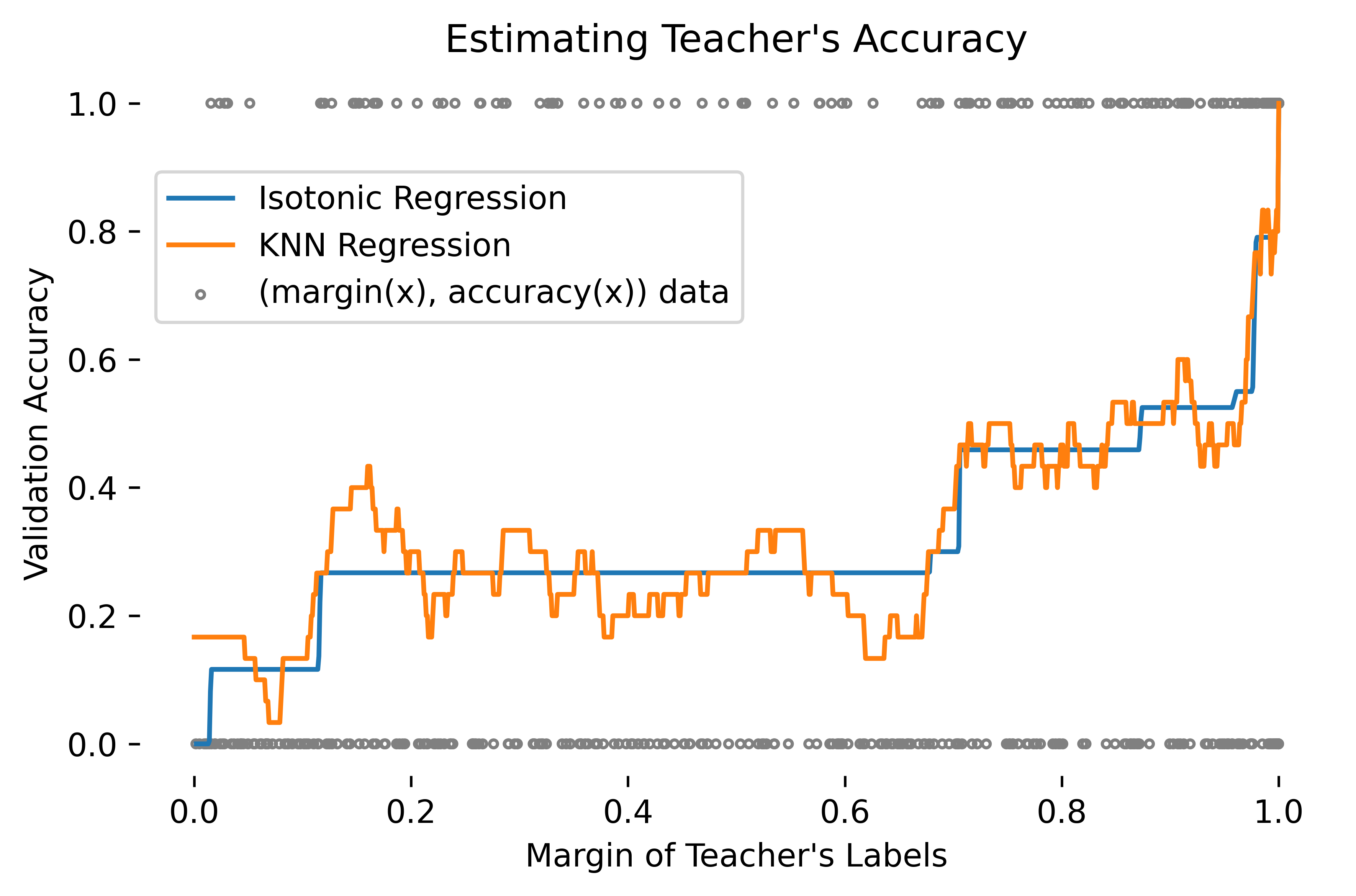}
 \caption{
 Learning $\alpha(x)$ via isotonic regression.
 The data were generated by a ResNet 110 
 teacher trained on $5000$ examples of 
 CIFAR-100 and evaluated on a validation dataset $V$
 of $500$ examples.
 The regression data 
 $\{(\mathrm{margin}(y_s(x)), 1-\err(y_s(x), g(x)) ): x \in V\}$
 are shown in gray (the response is binary $0/1$).
 By enforcing monotonicity, isotonic regression yields 
 a more stable and robust curve than, for example, the KNN predictor.  
 }
 \label{fig:isotonic-regression}
 \end{wrapfigure}

\section{Experimental Evaluation}

In this section, we present our experimental results. In Section~\ref{the_setup} we describe our experimental setup and in Section~\ref{comparison_with_previous} we compare the performance of our method with previous approaches on standard benchmarks. In Section~\ref{app:composability} we show that our method can be combined with teacher-uncertainty-based  reweighting techniques.
Finally, due to space limitations, we provide additional empirical results in the Appendix: 
in \Cref{app:sec:temperature-composability} we show that SLaM can effectively be used with distillation temperature, and in \Cref{sec:other_losses} we consider using SLaM with other losses beyond the Cross-Entropy.

\subsection{The Setup}\label{the_setup}
Here, we describe our procedure for simulating knowledge distillation with unlabeled examples on academic datasets.
We start by splitting the training dataset in two parts: dataset A and dataset C.  
We then train the teacher
and student models on dataset A (using the standard cross-entropy loss).\footnote{ We remark that our method does not require pre-training the student on dataset A, however, since \cite{IKBMTV22} 
requires pre-training the student, we do the same for
all methods that we compare.} Then we perform multiple independent trials where, for each trial, we  randomly split dataset C into a small (e.g., 500 examples 
validation dataset V
and an unlabeled training dataset U.  For each trial we (i) use 
the teacher model to label the points on dataset U to obtain the teacher-labeled dataset B (ii) initialize the weights of the student to those of the student model that was pre-trained on dataset A; (iii) train 
the student model (using each distillation method) on the combined labeled data of A, V (that have true labels) and the data of B (that have teacher labels).  We remark here that we include the validation data
V during the training of the student to be fair towards methods that do
not use a validation dataset. However, while it is important that the teacher has not seen the validation data during training, the performance
of no method was affected significantly by including (or excluding) the
validation data from the training dataset.

\subsection{Comparison with Previous Approaches} \label{comparison_with_previous}

\paragraph{The Baselines}
A natural question is whether a more sophisticated distillation method that 
enforces greater consistency between the teacher and the student, would improve distillation with unlabeled examples: we use the VID method \cite{ahn2019variational} that incorporates 
the penultimate layer of the student model (after a suitable trainable projection) in the loss. 
We also compare our method against the weighted distillation
method of \cite{IKBMTV22} that reweights the examples of 
dataset $B$ in order to ``correct'' the effect of the 
noisy pseudo-labels provided by the teacher.  The Taylor cross-entropy method of \cite{feng2021can} is a modification of CE that truncates the taylor-series of the CE loss. In \cite{feng2021can} it was shown that it offers significant improvements when the labels are corrupted by random classification noise. The fact that the
teacher's noise is much closer to random than to adversarial makes this approach a natural baseline.
The UPS loss of \cite{rizvedefense} is a semi-supervised technique that takes into account the variance (uncertainty) of the teacher model on the examples of dataset $B$ in order to transform the soft pseudo-labels provided by the teacher to more ``robust'' binary vectors and then use a modified binary CE loss. 
To estimate the uncertainty of the teacher model, we used either dropout with Monte-Carlo estimation or random data-augmentations as suggested in \cite{rizvedefense}.  We remark that, as we discussed in \Cref{RelatedWork} and \Cref{introduction}, strictly speaking, 
this method is not applicable in our setting because it requires multiple forward passes of the teacher model to estimate its variance but we implement it as it is a relevant approach that aims to improve the pseudo-labels of the teacher.

\paragraph{CIFAR-\{10,100\} and CelebA}

\begin{figure}[t!]
\includegraphics[width=0.32\columnwidth]{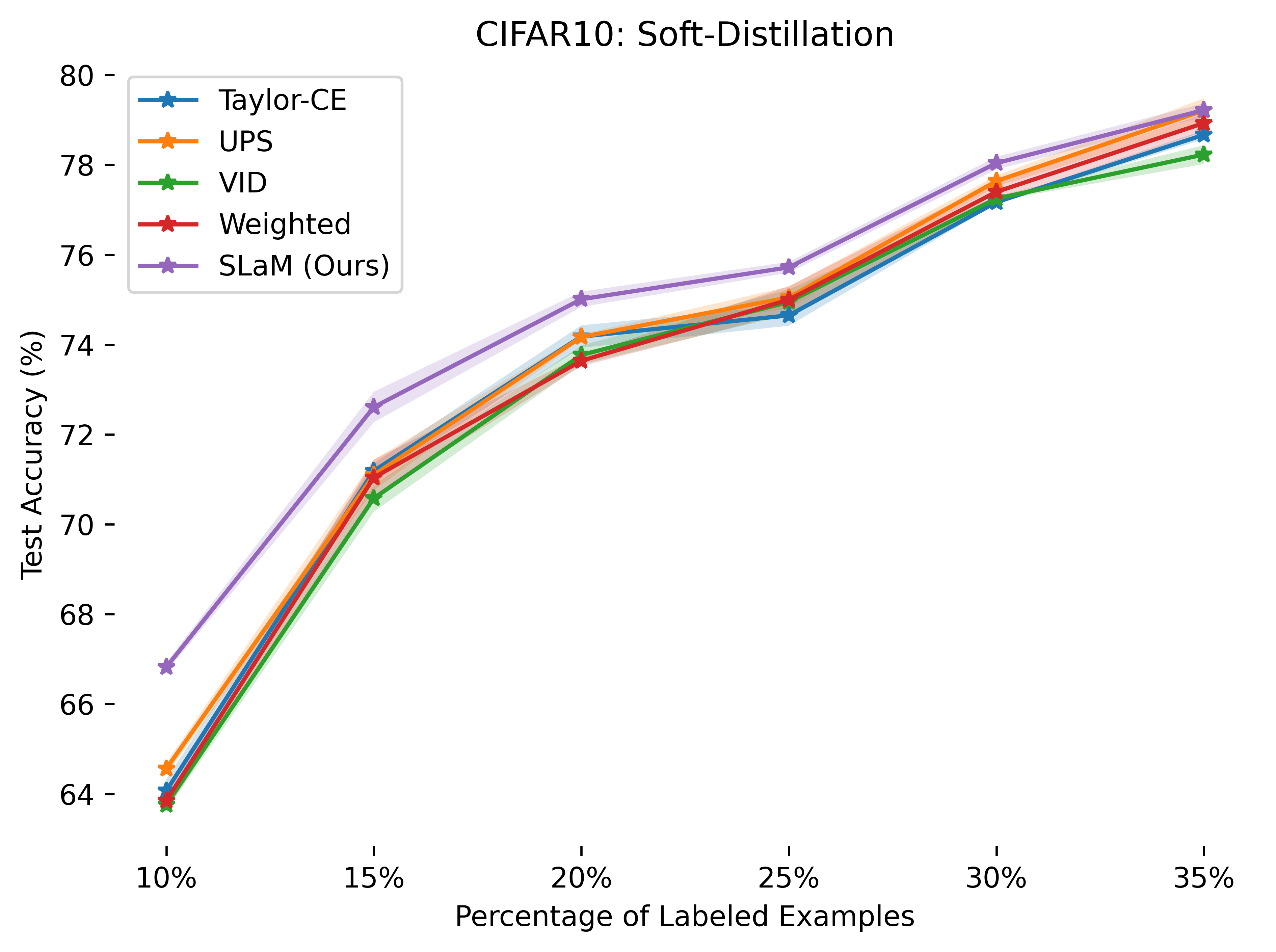}  
\includegraphics[width=0.32\columnwidth]{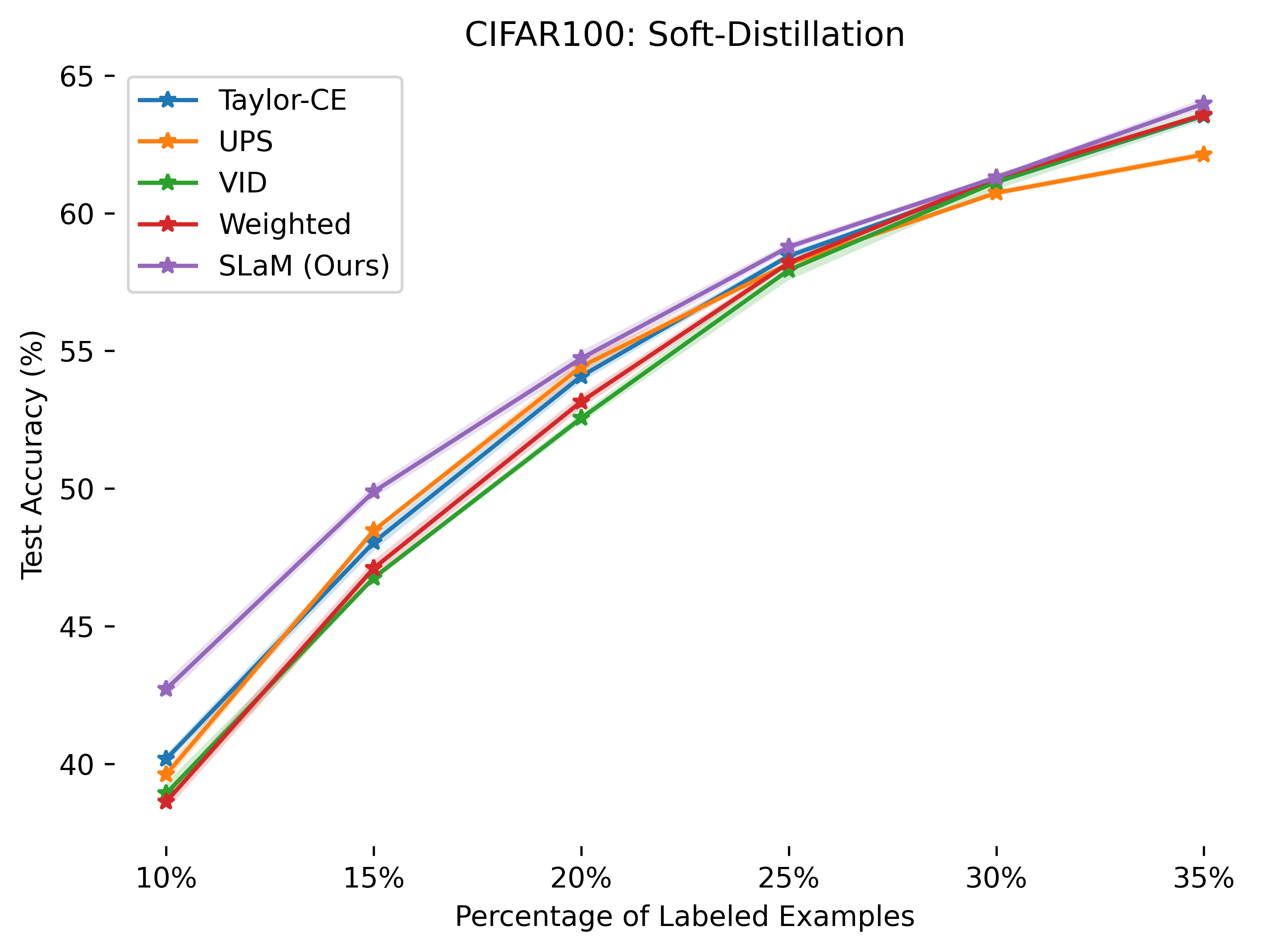}    
\includegraphics[width=0.32\columnwidth]{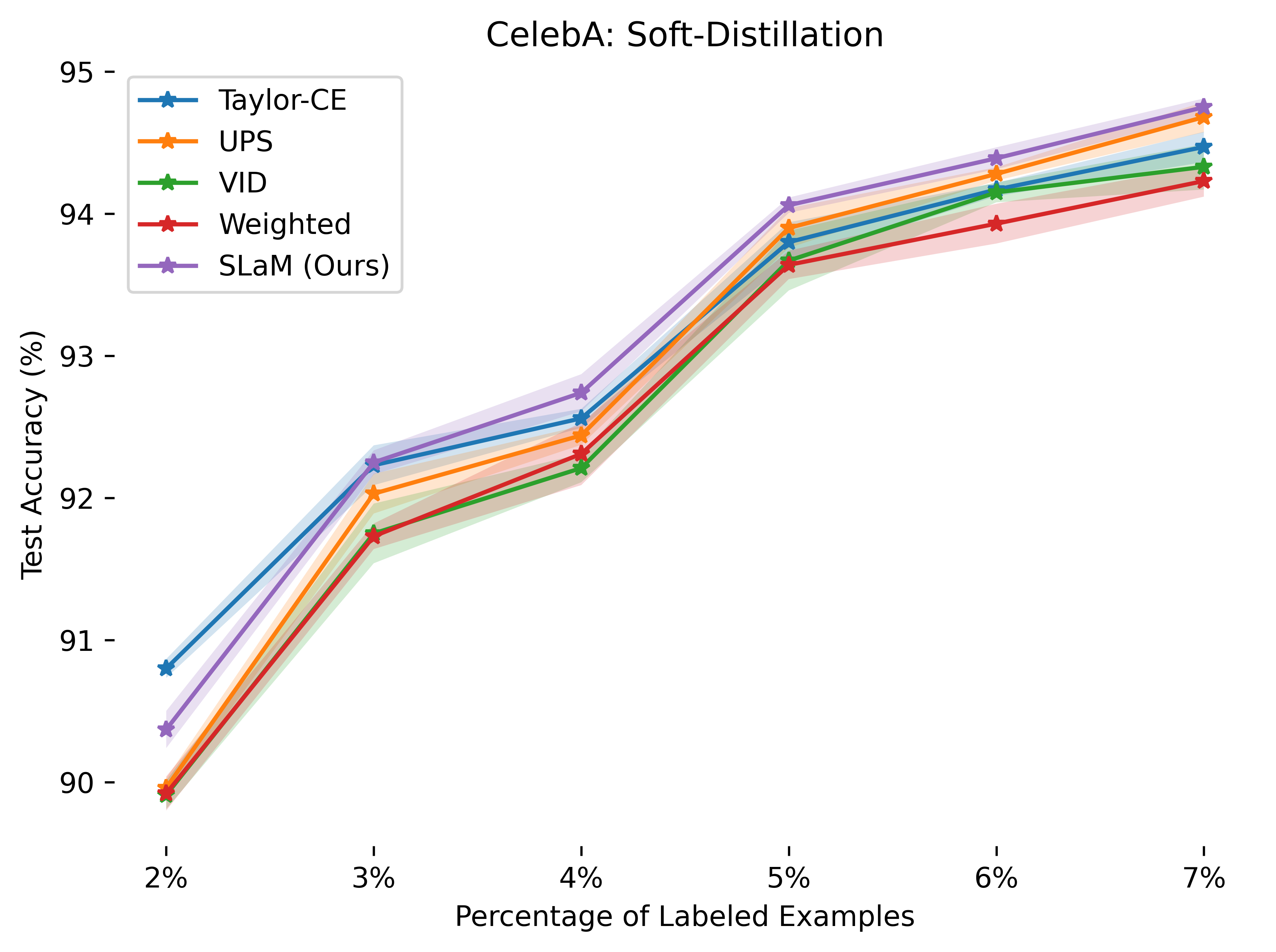}  
\caption{
Comparison of distillation methods on CIFAR-{10,100} and CelebA.
On the horizontal axis we plot the size of Dataset A as a percentage of the whole training dataset. 
On the vertical axis we plot the accuracy of the trained student-model on the test dataset.}
 \label{fig:cifar10-100-celeba-plots}
\end{figure}

Here we present our results on CIFAR-\{10, 100\}~\cite{krizhevsky2009learning} and CelebA~\cite{furlanello2018born}.  
CIFAR-10 and CIFAR-100 are image classification datasets with 10 and 100 classes respectively. 
They contain 60000 labeled images, 
which are split to a training set of 
50000 images, and a test set of 10000 images. 
From the 50000 images of the train set
we use the $10\%, 15\%, 20\%, 25\%, 30\%, 35\%$  
(or 5000, 7500, 10000, 12500,  15000, and 17500 examples) 
as the labeled dataset A where we train the teacher and pre-train the student models. 
For each size of dataset A, we perform a random split on the remaining training data 
and use 500 labeled examples as the validation dataset and the remaining
examples as the unlabeled dataset U.
For the CIFAR-10 experiments, we use a Mobilenet with depth multiplier 2 as the teacher, and a Mobilenet with depth multiplier 1 as the student. 
For CIFAR-100, we use a ResNet-110 as a teacher, and a ResNet-56 as the student.  We compare the methods both on soft- and hard-distillation.
For each trial we train the student model for $200$ epochs and keep the best test accuracy over all epochs.
We perform 3 trials and report the average of each method and 
the variance of the achieved accuracies over the trials.
The results of our experiments for soft-distillation  can be found in 
\Cref{tab:cifar10-soft-comparison}
and \Cref{tab:cifar100-soft-comparison}. The corresponding plots are given in\Cref{fig:cifar10-100-celeba-plots}.
We include our results for hard-distillation in \Cref{app:hard-distillation-comparison}.

\begin{table*}[t] 
\renewcommand{\arraystretch}{1.15}
\caption{Experiments on CIFAR-10 (\textbf{soft}-distillation). See Section~\ref{comparison_with_previous} for details.}
\label{tab:cifar10-soft-comparison}

\begin{center}
\tiny
\begin{tabular}{|c| c c c c c c|} 
 \hline
Labeled Examples  & $5000$ & $7500$ & $10000$ & $12500$ & $15000$  & $17500$\\  
 \hline
 Teacher  & $61.30 $ & $68.98 $ & $72.42 $ & $73.92$ & $ 76.63 $  & $78.63$\\ 
\hline
 Vanilla & 
 $ 63.53 \pm 0.29$ &  
 $70.39 \pm 0.11 $ & 
 $ 73.23 \pm 0.15 $ & 
 $74.29 \pm 0.25 $ & 
 $ 76.64 \pm 0.20 $ & 
 $ 78.63 \pm 0.16$ \\

 \hline
 Taylor-CE \cite{feng2021can} &   
 $64.07 \pm 0.26$ & 
 $71.19 \pm 0.17$ & 
 $74.18 \pm 0.25$ &
 $74.65 \pm 0.24$ & 
 $77.17 \pm 0.04$ & 
 $78.67 \pm 0.13$ \\
 
 \hline
 UPS \cite{rizvedefense} &   
 $ 64.56 \pm 0.13$ & 
 $ 71.10 \pm 0.34$ & 
 $ 74.17 \pm 0.06$ &
 $ 75.05 \pm 0.24$ & 
 $ 77.64 \pm 0.12$ & 
 $ \mathbf{79.21 \pm 0.27} $ \\
 \hline
 
 VID \cite{Ahn2019CVPR} & 
 $63.76 \pm 0.13$ & 
 $70.58 \pm 0.17$ & 
 $73.77 \pm 0.40$ & 
 $74.95 \pm 0.21$ & 
 $77.25 \pm 0.06$ & 
 $78.23 \pm 0.09$ \\  
 \hline
 
 Weighted \cite{IKBMTV22} &  
 $63.85 \pm 0.13$ & 
 $71.04 \pm 0.24$ & 
 $73.64 \pm 0.36$ & 
 $75.00 \pm 0.17$ & 
 $77.40 \pm 0.17$ & 
 $78.93 \pm 0.19$ \\
 \hline
 
 SLaM (Ours) &  
 $\mathbf{66.82 \pm 0.61}$ & 
 $\mathbf{72.61 \pm 0.30}$ & 
 $\mathbf{75.01 \pm 0.25}$ & 
 $\mathbf{75.72 \pm 0.17}$ & 
 $\mathbf{78.04 \pm 0.16}$ & 
 $\mathbf{79.22 \pm 0.11}$ \\
 \hline
\end{tabular}
\end{center}

\end{table*} 
\begin{table*}[t] 
\renewcommand{\arraystretch}{1.15}
\caption{Experiments on CIFAR-100 (\textbf{soft}-distillation). See Section~\ref{comparison_with_previous} for details.}
\label{tab:cifar100-soft-comparison}
\begin{center}
\tiny
\begin{tabular}{|c| c c c c c c|} 
 \hline
Labeled Examples  & $5000$ & $7500$ & $10000$ & $12500$ & $15000$  & $17500$\\  
 \hline
 Teacher  & $35.97$ & $44.65$ & $49.62$ & $55.68$ & $59.19$ & $62.05$ \\
\hline
 Vanilla & 
 $37.94 \pm 0.10$ & 
 $46.42 \pm 0.24$ & 
 $52.17 \pm 0.21$ & 
 $57.72 \pm 0.17$ & 
 $60.91 \pm 0.07$ & 
 $63.47 \pm0.23$\\
\hline
 Taylor-CE \cite{feng2021can} &   
 $40.18 \pm 0.07$ & 
 $48.05 \pm 0.29$ & 
 $54.08 \pm 0.24$ &
 $58.45 \pm 0.17$ & 
 $61.13 \pm 0.10$ & 
 $63.54 \pm 0.26 $ \\
 
 \hline
 UPS \cite{rizvedefense} &   
 $39.62 \pm 0.23$ & 
 $48.48 \pm 0.15$ & 
 $54.43\pm 0.27$ &
 $58.17 \pm 0.07$ & 
 $60.74 \pm 0.10$ & 
 $62.13 \pm 0.12$ \\
 \hline
 VID \cite{Ahn2019CVPR} &  
  $38.93 \pm 0.39$ &
  $46.76 \pm 0.10$ &
  $52.56 \pm 0.17$ &
  $57.94 \pm 0.37$ &
  $61.14 \pm 0.28$ &
  $63.56 \pm 0.18$\\  
 \hline
 Weighted \cite{IKBMTV22} &  
 $38.63 \pm 0.32$ & 
 $47.11 \pm 0.29$ & 
 $53.16 \pm 0.25$ & 
 $58.20 \pm 0.11$ & 
 $\mathbf{61.29 \pm 0.15}$ & 
 $63.58 \pm 0.07$\\
 \hline
 SLaM (Ours) & 
 $\mathbf{42.72 \pm 0.30}$ & 
 $\mathbf{49.89 \pm 0.23}$ & 
 $\mathbf{54.73 \pm 0.27}$ & 
 $\mathbf{58.78 \pm 0.15}$ & 
 $\mathbf{61.30 \pm 0.09}$ & 
 $\mathbf{63.98 \pm 0.19}$\\
 \hline
\end{tabular}

\end{center}

\end{table*}

We consider the male/female binary classification task using the CelebA dataset  \cite{furlanello2018born} consisting
of a training set of  162770 images 
and a test set of 19962 images.
We use a MobileNet with depth multiplier 2 as the teacher, and a ResNet-11 as the student. 
As the labeled dataset A we used $2\%, 3\%, 4\%, 5\%, 6\%$
percent (or 3256, 4883, 6510, 8138, 9766, 11394 examples) 
of the training dataset and split the remaining data in
a validation dataset of 500 examples and an unlabeled dataset U.
Our results for CelebA can be found in \Cref{tab:celeba-soft-comparison} (soft-distillation) and in \Cref{tab:celeba-hard-comparison} (hard-distillation).  The corresponding plots
are given in \Cref{fig:cifar10-100-celeba-plots}.
Due to space limitations our results for hard-distillation can be found
in \Cref{app:hard-distillation-comparison}.

Taken together, our comparisons show that SLaM 
consistently outperforms the baselines, often by a large margin. The reader is referred to~\Cref{sec:implementation_details}  for additional details.

\begin{remark}[Soft-Distillation and Temperature Scaling]
We remark that in the comparisons we performed soft-distillation 
with temperature set to $1$, i.e., for every example 
we do not scale the corresponding teacher and student logits.
In \Cref{app:sec:temperature-composability} we show that our method can 
readily be used together with temperature scaling to improve the
accuracy of the student model.
\end{remark}

\begin{table*}[t] 
\renewcommand{\arraystretch}{1.15}
\caption{Experiments on CelebA (\textbf{soft}-distillation). See Section~\ref{comparison_with_previous} for details.}
\label{tab:celeba-soft-comparison}
\begin{center}
\tiny
\begin{tabular}{|c| c c c c c c|} 
 \hline
Labeled Examples  & $2\%$ & $3\%$ & $4\%$ & $5\%$ & $6\%$  & $7\%$\\  
 \hline
 Teacher  & 
 $86.19$ & 
 $88.25$ & 
 $88.95$ & 
 $91.31$ & 
 $92.09$ & 
 $92.62$\\ 
\hline
 Vanilla & 
 $89.96 \pm 0.08$ &
 $91.55 \pm 0.14$ &
 $92.16 \pm 0.10$ &
 $93.42 \pm 0.06$ &
 $93.98 \pm 0.04$ &
$94.29 \pm 0.03$\\
 \hline
 Taylor-CE \cite{feng2021can} &   
 $\mathbf{90.80 \pm 0.07}$ & 
 $ \mathbf{92.23 \pm 0.1} $ & 
 $ 92.56 \pm 0.14$ &
 $ 93.80 \pm 0.20$ & 
 $ 94.17 \pm 0.07$ & 
 $ 94.47 \pm 0.01$ \\
 \hline
 UPS \cite{rizvedefense} &   
 $89.96 \pm 0.11 $ & 
 $92.03 \pm 0.09$ & 
 $92.44 \pm 0.04$ &
 $93.9 \pm 0.05$ & 
 $94.28 \pm 0.07$ & 
 $94.68 \pm 0.03$ \\
\hline
 VID \cite{Ahn2019CVPR} & 
  $89.91 \pm 0.10$ &
  $91.75 \pm 0.21$ &
  $92.21 \pm 0.10$ &
  $93.67 \pm 0.21$ &       
  $94.15 \pm 0.07 $ &
  $94.33 \pm 0.16 $ \\
 \hline
 
 Weighted \cite{IKBMTV22} &  
  $89.92 \pm 0.12$ &
  $91.73 \pm 0.09$ &
  $92.31 \pm 0.22$ &
  $93.64 \pm 0.10$ &
  $93.93 \pm 0.14$ &
  $94.23 \pm 0.11$  \\
 \hline
 SLaM (Ours) &  
  $90.37 \pm 0.17$ &
  $\mathbf{92.25 \pm 0.11}$ &
  $\mathbf{92.74 \pm 0.17}$ &
  $\mathbf{94.06 \pm 0.07}$ &
  $\mathbf{94.39 \pm 0.10}$ &
  $ \mathbf{94.75 \pm 0.08}$\\
 \hline
\end{tabular}

\end{center}

\end{table*}

\paragraph{ImageNet}
Here we present the results on ImageNet~\cite{russakovsky2015imagenet}. ImageNet is an image classification dataset with 1000 classes consisting of a training set of approximately $1.3$ million images, and a test set of 50000 images. From  the $1.3$ million images of the training set we use the  $5\%, 10\%, 15\%, 20\%$ percent (or 64058, 128116, 192174, 256232 examples) as the labeled dataset $A$ where we train the teacher and pre-train the student models. For each size of dataset $A$, we perform a random split on the remaining training data and use $10000$ labeled examples as the validation dataset and the remaining examples as the unlabeled dataset $U$. We use a ResNet-50  as the teacher, and a ResNet-18 as the student. We compare the methods on soft-distillation. For each trial, we train the student model for $100$ epochs and keep the best test accuracy over all epochs. We perform $4$ trials and report the average of each method and the variance of the achieved accuracies over the trials.  Our results for ImageNet can be found in~\Cref{tab:imagenet-soft-comparison}. \red{We remark that we do not include the results of the UPS method in \Cref{tab:imagenet-soft-comparison} because it did not improve over the accuracy achieved 
after pre-training the student model on dataset $A$.}
The reader is referred to~\Cref{sec:implementation_details} for additional details.

\begin{table*}[t!] 
\renewcommand{\arraystretch}{1.15}
\caption{Experiments on ImageNet (\textbf{soft}-distillation). See Section~\ref{comparison_with_previous} for details.}
\label{tab:imagenet-soft-comparison}
\begin{center}
\tiny
\begin{tabular}{|c| c c c c c c|} 
 \hline
Labeled Examples  & $5\%$ & $10\%$ & $15\%$ & $20\%$ & $25\%$ & $30\%$ 
\\  
 \hline
 Teacher  & $39.48$ & $52.96 $ & $59.64 $ & $63.62$ & $66.00$ & $67.85$\\ 
 \hline
 Vanilla & 
 $41.67 \pm 0.05$ & 
 $55.9 \pm 0.06$ & 
 $62.3 \pm 0.09$ & 
 $65.91 \pm 0.05$ &
 $67.98 \pm 0.07$ &
 $69.12 \pm 0.08$ 
 \\
 \hline
  Taylor-CE \cite{feng2021can} &   
  $41.61 \pm 0.06	$ &
  $56.43 \pm 0.06	$ &
  $62.38 \pm 0.11	$ &
  $65.86 \pm 0.08	$ &
  $67.70 \pm 0.22	$ &
  $68.62 \pm 0.07$ \\
\hline
 VID \cite{Ahn2019CVPR} &  
  $40.12 \pm 0.04$ &
  $52.75 \pm 0.04$ &
  $58.01 \pm 0.03$ &
  $61.21 \pm 0.06$ &
  $62.37 \pm 0.06$ &
  $63.05 \pm 0.07$
  \\  
 \hline
 Weighted \cite{IKBMTV22} &  
 $41.67 \pm 0.04$ & 
 $55.96 \pm 0.07$ & 
 $62.29 \pm 0.08$ & 
 $65.91 \pm 0.05$ &
 $ 67.96 \pm 0.06$ &
 $\mathbf{69.16 \pm 0.08} $ 
 \\
 \hline
 SLaM (Ours) & 
 $\mathbf{48.1 \pm 0.05}$ & 
 $\mathbf{59.51 \pm 0.07}$ & 
 $\mathbf{64.08 \pm 0.06}$ & 
 $ \mathbf{66.72 \pm 0.11}$  &
 $ \mathbf{68.17 \pm 0.07} $ &
 $ 69.07 \pm 0.05 $
 \\
 \hline
\end{tabular}

\end{center}

\end{table*}

\paragraph{Large Movies Reviews Dataset}
Here we present results on the Large Movies Reviews Dataset~\cite{maas-EtAl:2011:ACL-HLT2011}. This is a dataset for binary sentiment classification containing 25000 movie reviews for training and 25000 for testing. We use an ALBERT-large model~\cite{lan2019albert} as a teacher, and an ALBERT-base model as a student.  We use $2\%, 4\%, 8\%, 40\%$ percent (or 500, 1000, 2000, 10000 examples) from the training dataset and split the remaining data in a validation dataset of 500 examples and an unlabeled dataset $U$.  Our results and more experimental details can be found in
\Cref{app:imdb-comparison}.

\paragraph{Performance Gains of SLaM as a Function of The Number of Labeled Examples}
\blue{ In our experiments, the fraction of examples we consider ``labeled'' controls two things at the same time: (i) the accuracy of the teacher model — as the teacher is trained on the labeled examples available; and (ii) the number of unlabeled examples the teacher model provides pseudo-labels for.
The more inaccurate the teacher model is, the better the improvements provided by our method. (Given a ``perfect" teacher that never generates incorrect pseudo-labels for the unlabeled examples, our method is mathematically equivalent to the “vanilla” approach (see the mixing operation in \Cref{eq:mixing-operator}).
Therefore, the smaller the number of labeled examples available, the bigger the performance gains of SLaM as (i) the teacher will be less accurate; and (ii) it has to generate labels for more unlabeled examples (and therefore the absolute number of inaccurate predictions that SLaM ``corrects" increases statistically).
It is worth emphasizing that the main reason behind the enormous success of distillation is exactly that the teacher network can blow up the size of the student's training dataset: in practice, the ratio of labeled examples to unlabeled examples is typically (much) less than 1\%. }

\section{Distilling Linear Models and Learning Noisy Halfspaces}

In this section we show that, when the dataset is separable by a halfspace, 
i.e., for every example $x$, the ground-truth 
is  $g(x)=(\1\{w^\ast \cdot x > 0\}, \1\{w^\ast \cdot x \leq 0\})$ for some
unknown weight vector $w^\ast$, then using SLaM with a linear model as the student will recover the ground  truth classifier.  
We make the standard assumption that the ground-truth halfspace has 
$\gamma$-margin, i.e., that $\|w^\ast\|_2 = 1$ and that it holds 
$|w^\ast \cdot x| \geq  \gamma$ for all examples 
$x$.
For a fixed example $x$, the observed noisy teacher-label 
$y$ satisfies \Cref{def:noisy-teacher-model}, i.e.,
$y = g(x)$ w.p. $\alpha(x)$ and $y = 1-g(x)$  w.p. $1 - \alpha(x)$
(since $k=2$ for binary classification).
Our approach consists of using the standard cross-entropy loss 
$\mathrm{ce}(p, q)$ and training a student-model consisting of 
a linear layer plus a soft-max activation, i.e.,
\(
f(x;w) = \left( \frac{1}{1+e^{-w\cdot x}}, \frac{e^{-w \cdot x}}{1+e^{-w \cdot x}} \right)\,.
\)

\begin{theorem}[SLaM Convergence]
\label{thm:slm-halfspaces}
Let $X$ be a distribution on $\R^d$ and $g(x)$    
be the ground-truth halfspace with normal vector $w^\ast \in \R^d$.
Let $D$ be the distribution over (noisy) teacher-labeled examples $(x,y)$ 
whose $x$-marginal is $X$.
Assume that there exist $\beta, \gamma > 0$ such that 
for all examples $x$ in the support of $X$ it holds that 
$|w^\ast \cdot x| \geq \gamma$ and $|1/2-\alpha(x)| \leq \beta$.
Let $\epsilon > 0$.
After $T = O(1/(\beta^2 \gamma^2 \epsilon^2) )$ SGD iterations 
on the SLaM objective 
(see \Cref{alg:slm-halfspace}), with probability at least $99\%$, 
there exists an iteration $t \leq T$ where
$\pr_{x \sim X}[ \err(f(x;w^{(t)} ), g(x)) ] \leq \epsilon$.
\end{theorem}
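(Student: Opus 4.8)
The plan is to reduce the whole statement to controlling a single scalar potential, the alignment $Z_t = \langle w^{(t)}, w^\ast\rangle$, and to show that as long as the current student misclassifies more than an $\epsilon$-fraction of the mass, $Z_t$ drifts upward at a definite rate while staying bounded; this caps the number of ``bad'' iterations and forces a good one.

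First I would specialize the mixing operator to the binary case. Since $k=2$, the mask $\topmask(y_s(x);2)$ is the all-ones vector and $k-1=1$, so $\mix(f(x;w);\alpha,2)=\alpha f(x;w)+(1-\alpha)(\1-f(x;w))$; writing $p=\sigma(w\cdot x)$ its first coordinate is $m_0=\alpha p+(1-\alpha)(1-p)=\tfrac12+(\alpha-\tfrac12)(2p-1)$. A direct computation of the gradient of $\mathrm{ce}(y,\mix(\cdot))$ for a single noisy example shows it equals $h(x,y)\,x$ for a scalar $h$ with $|h|=O(|2\alpha-1|)=O(\beta)$, since $\partial_w m_0=(2\alpha-1)\,p(1-p)\,x$. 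This records the key quantitative fact: the stochastic gradient has magnitude $\Theta(\beta)$, one power larger than its mean.

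Next I would take the conditional expectation over the teacher noise. Using $\E[y\mid x]=\mix(g(x);\alpha,2)$ and linearity of cross-entropy in its first argument (the facts behind \Cref{pro:SLaM-consistency}), the pointwise population gradient simplifies so that its component along $w^\ast$ equals $(2\alpha(x)-1)^2\,\rho(p)\,|w^\ast\cdot x|$ with $\rho(p)\ge 0$; hence the field is always nonnegative along $w^\ast$, a quantitative form of consistency. Invoking the $\gamma$-margin ($|w^\ast\cdot x|\ge\gamma$) and a lower bound on $\rho$ valid on misclassified points turns this into a pointwise drift of order $\beta^2\gamma$ on every misclassified $x$, so that $\langle-\E[\nabla],w^\ast\rangle \gtrsim \beta^2\gamma\,\pr_x[\err]$ after integrating over $X$. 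I would then assemble a martingale argument along the SGD trajectory of \Cref{alg:slm-halfspace}: the increment $Z_{t+1}-Z_t=-\eta\,\langle\nabla_t,w^\ast\rangle$ has conditional mean $\ge \eta\,c\,\beta^2\gamma\epsilon$ whenever $\pr_x[\err]>\epsilon$, increments of size $O(\eta\beta)$, and conditional second moment $O(\eta^2\beta^2)$ (from the first step). If every iterate up to $T$ were bad, a Freedman/Azuma bound gives $Z_T\ge\tfrac12 T\eta c\beta^2\gamma\epsilon$ with probability $\ge 0.99$ once $T$ exceeds the per-step ratio (variance)/(drift)$^2=\Theta\!\big(\beta^2/(\beta^2\gamma\epsilon)^2\big)=\Theta(1/(\beta^2\gamma^2\epsilon^2))$; but $Z_t\le\|w^{(t)}\|\le W=O(1)$, a contradiction. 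Hence some $t\le T=O(1/(\beta^2\gamma^2\epsilon^2))$ satisfies $\pr_x[\err(f(x;w^{(t)}),g(x))]\le\epsilon$.

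The main obstacle is the non-convexity that the paper itself highlights (see \Cref{fig:landscape}): the factor $\rho(p)$ collapses as $p\to 0,1$, so a confidently-wrong student sits on a vanishing-gradient plateau and the pointwise drift bound fails. Overcoming this requires keeping $\|w^{(t)}\|\le W$ for a fixed $W=O(1)$ (via projection or a step-size/normalization argument), so that $p=\sigma(w^{(t)}\cdot x)$ stays in a compact subinterval of $(0,1)$ where $\rho$ is bounded below on misclassified points; one must then check that this constraint is compatible with the upward drift, i.e.\ that projection does not erase the progress along $w^\ast$. A secondary but essential point is the variance/drift mismatch that explains the rate: the $1/\beta^2$ factor is precisely the cost of averaging out the $\Theta(\beta)$-scale label noise against a $\Theta(\beta^2)$-scale mean drift, and (as observed above) the step size $\eta$ cancels in this ratio, so it is the operative noise level $\beta$ and the margin $\gamma$, not $\eta$, that govern the iteration count.
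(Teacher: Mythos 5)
Your potential-function plan, the gradient computation (drift proportional to $(2\alpha(x)-1)^2\rho(p)\,|w^\ast\cdot x|\ge 0$, stochastic gradient of scale $\Theta(\beta)$), and the variance-versus-drift accounting that yields the $1/(\beta^2\gamma^2\epsilon^2)$ rate all match the structure of the paper's argument. However, the step you explicitly defer --- checking ``that projection does not erase the progress along $w^\ast$'' --- is not a technicality; it is the crux, and the natural estimates go the wrong way, so your proof has a genuine gap. Projection onto a centered ball of radius $W=O(1)$ acts by rescaling, $w^{(t+1)}=\min\{1,W/\|v\|\}\,v$ with $v=w^{(t)}-\eta\nabla_t$, so at the boundary each step can remove correlation of order $Z_t(\|v\|-W)/W$; this is $O(\eta\beta)$ in the worst case and, even in expectation (using the pointwise bound $(g_0-f_0)(x\cdot w)\le 1/e$ from the paper's \Cref{clm:bounded-norm}), of order $\eta\beta^2/W$. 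Meanwhile, the drift you can certify under the cap $\|w\|\le W$ is only $\eta\,c\,\beta^2\gamma\epsilon\,\rho_{\min}$ with $\rho_{\min}=\Theta(e^{-W})$, since $\rho(p)\asymp p(1-p)$ and $|w\cdot x|\le W$. Because $\gamma\epsilon\ll 1$, the projection loss dominates the drift for every admissible constant $W$ (the ratio is $\gtrsim e^{W}/(W\gamma\epsilon)$), so the martingale argument as sketched does not close; the same obstruction defeats the standard alternative potential $\|w^{(t)}-Ww^\ast\|_2^2$. Dropping the projection does not help either: with a fixed step size the norm random-walks to $\omega(1)$ over $T=\Theta(1/(\beta^2\gamma^2\epsilon^2))$ steps, the sigmoid saturates, and the $\rho$ lower bound on misclassified points --- hence your drift --- vanishes, which is exactly the plateau you identified.

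The paper resolves this with a device your proposal misses, and which is already built into \Cref{alg:slm-halfspace} (note the theorem is a statement about that specific iteration, not about fixed-step projected SGD): the adaptive step size $\lambda^{(t)}=1/r\bigl(f(x^{(t)};w^{(t-1)}),\alpha(x^{(t)})\bigr)$ exactly cancels the vanishing factor $p(1-p)/\bigl(\mix\,(1-\mix)\bigr)$ in the stochastic gradient (\Cref{lem:slm-gradient}), turning the update into the bounded, perceptron-like step $u^{(t)}=\sgn(2\alpha-1)\bigl(y_0^{(t)}-\mix(f_0)\bigr)x^{(t)}$ with $\E[u^{(t)}\mid\cF^{(t-1)}]=|2\alpha(x)-1|\,(g_0(x)-f_0(x;w^{(t-1)}))\,x$. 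This yields a drift of order $\beta\gamma\epsilon$ (one power of $\beta$, not two) \emph{uniformly over all iterates}, with increments bounded by $1$, and removes any need for a norm constraint: \Cref{clm:bounded-norm} shows $\E\|w^{(T)}\|_2^2=O(T)$ via the elementary inequality $(g_0-f_0)(x\cdot w)\le 1/e$, and the contradiction comes from Cauchy--Schwarz by comparing the linear-in-$T$ correlation growth (\Cref{clm:correlation-with-opt}) against the $O(\sqrt{T})$ norm growth. To repair your argument you would either need this normalization --- at which point you recover the paper's proof --- or a genuinely new mechanism showing projection cannot cancel the drift; the latter is not supplied and does not appear to hold as stated.
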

\begin{remark}[Learning Halfspaces with RCN]
\label{rem:rcn-halfspace}
The problem of learning halfspaces with Random Classification Noise (RCN) 
can be modeled as having a teacher with constant accuracy probability, i.e.,
$\alpha(x) = \alpha > 1/2$ for all $x$.  
As a corollary of \Cref{thm:slm-halfspaces} we obtain
an efficient learning algorithm for $\gamma$-margin halfspaces under RCN
achieving a sample complexity of $O(1/(\gamma^2 \epsilon^2))$.
Prior to our work, the best known sample complexity 
for provably learning  halfspaces with RCN 
was $\widetilde{O}(1/(\gamma^4 \epsilon^2))$ 
\cite{diakonikolas2019distribution}
where the ``backward loss-adjustment'' of \cite{bylander1994learning} was used.
\end{remark}

\section{Conclusion, Limitations, and Broader Impact}

In this work we propose SLaM, a novel and principled
method for improving distillation with unlabeled examples.  We empirically show that
SLaM consistently outperforms the baselines, often by a large margin.  
We also showed that SLaM can be used with and improve (i) knowledge distillation with temperature scaling; (ii) loss functions beyond the standard Cross-Entropy loss;
and (iii) confidence-based weighting schemes that down-weight examples where the
teacher model is not very confident.
Apart from extensive experimental evaluation, we provide
strong theoretical guarantees establishing the consistency and optimality of SLaM.
As a byproduct of our theoretical analysis, we obtain a new iteration for learning
$\gamma$-margin halfspaces with RCN that improves 
the best known sample complexity for this problem.

A limitation of SLaM is that it does not necessarily improve over vanilla distillation when the teacher model makes only a few mistakes (this is to be expected as our method is designed for the case where the teacher-model is imperfect). 
Moreover, while our theoretical result for learning noisy halfspaces improves the theoretical SOTA, it does not match the information-theoretic lower bound of $\Omega(1/(\gamma^2 \epsilon))$.
An interesting question for future work is whether this can be further improved to match the information-theoretic lower bound.

\blue{
Knowledge-distillation is a very popular deep learning method, and therefore, potentially malicious usage of our work 
is an important societal issue, as deep learning has far-reaching applications from NLP to Robotics and Self-Driving cars.}

\bibliographystyle{plain}
\bibliography{refs}

\newpage
\appendix
\onecolumn

\section{Notation}
For two vectors $p,q \in \R^d$ we denote by 
$p \cdot q = \sum_{i=1}^d p_i q_i$ their inner product. 
We use $p * q$ to denote their element-wise product, i.e., $(p*q)_i = p_i q_i$.
We use the notation $\max_i p$ to denote the $i$-th largest element
of the vector $p$.  We use $\mathrm{margin}(p)$ to denote the difference
between the top-2 elements of $p$, i.e.,
$\mathrm{margin}(p) = \max_1 p - \max_2 p$.
Moreover, we use $\mathrm{margin}_k(p)$ to denote the top-k margin,
i.e., $\mathrm{margin}_k(p) = \sum_{i=1}^k \max_i p - \max_{k+1} p$.
Given a function $f(w): \R^d \mapsto \R$ we denote by $\partial_w f(w)$ the gradient of $f$ with respect to 
the parameter $w$.

\section{Detailed Description of SLaM}

\subsection{Estimating the Teacher's 
Accuracy Parameters: $\alpha(x), k(x)$}
\label{app:estimating-teachers-accuracy}
 
\paragraph{Estimating the Teacher's Accuracy $\alpha(x)$ via Isotonic Regression}
We now turn our attention to the problem of estimating $\alpha(x)$
for each $x$ of dataset $B$, i.e., the dataset labeled by the teacher model.
In \cite{IKBMTV22} the authors empirically 
observed that $\alpha(x)$ correlates
with metrics of teacher's confidence such as the ``margin", i.e., the difference between
the probabilities assigned in the top-1 class
and the second largest class according to the
teacher's soft label $y_s$.
 In particular, the larger the margin is 
 the more likely is that the corresponding 
 teacher label is correct.  We exploit 
 (and enforce) this monotonicity 
 by employing isotonic regression on 
 a small validation dataset to learn the mapping
 from the teacher's margin at an example $x$ 
 to the corresponding teacher's accuracy $\alpha(x)$.  

To perform this regression task we use a small validation dataset 
$V$ with correct labels that the teacher has not seen during training.  
For every example $x \in V$ we compute the corresponding
soft-teacher label $y_s(x)$ and compute its margin 
$\mathrm{margin}(x) = \max_1(y_s(x)) - \max_2(y_s(x))$. 
For every $x \in V$ we also compute the hard-prediction
of the teacher and compare it with the ground-truth, 
i.e., for every $x$ the covariate and responce pair 
is $( \mathrm{margin}(x), 1 - \err( g(x), y(x)) )$.
We then use isotonic regression to fit a piecewise
constant, increasing function to the data.
Sorting the regression data 
$\{(\mathrm{margin}(x), 1-\err(g(x), y(x))) x \in V\}$ 
by increasing margin to obtain a list 
$(c^{(1)}, \ldots, r^{(1)}), \ldots, (c^{(m)}, r^{(m)})$, isotonic regression solves the following task
\begin{align*}
&\min_{\hat r^{(1)}, \ldots, \hat r^{(m)}}
\sum_{i=1}^m (r^{(i)} - \hat r^{(i)})^2 
\\
&\text{subject to ~ }
\mathrm{lb} \leq \hat r^{(i)} \leq \hat r^{(i+1)}
\leq 1,
\end{align*}
where  the parameter $\mathrm{lb}$ is 
a lower bound on the values $\hat r^{(i)}$ and is 
a hyper-parameter that we tune.  On the other hand, 
the upper bound for the values can be set to $1$ since we know that the true value $\alpha(x)$ is at most $1$ for every $x$ (since it corresponds to the probability that the teacher-label is correct).
After we compute the values $\hat r^{(1)},\ldots, \hat r^{(m)}$
for any given $c \in [0, 1]$ the output of the regressor is 
the value of $\hat r^{(i)}$ corresponding to 
the smallest $c^{(i)}$ that is larger-than or equal to $c$.
This is going to be our estimate for $\alpha(x)$.
We remark that finding the values $r^{(i)}$ can be done efficiently
in $O(n)$ time after sorting the data (which has a runtime of $O(n \log n)$) so the whole isotonic regression task can
be done very efficiently.

\paragraph{Estimating $k(x)$.}
We now describe our process for estimating  the values of $\alpha(x)$ and $k(x)$ for every example of dataset $B$.  Similarly to the binary classification setting, we estimate the accuracy probability $\alpha(x)$ using isotonic
regression on a small validation dataset. The value of $k(x)$ can 
be set to be equal to a fixed value of $k$ for all data, so that the top-k accuracy of the teacher on the validation data is reasonable (say above $60\%$).
For example, in our ImageNet experiments, we used $k=5$.  We also provide a data-dependent method to find different values $k(x)$ for every example $x$.
To do this we adapt the method for estimating the top-1 accuracy $\alpha(x)$ 
of the teacher from the validation dataset.  For every value of 
$k = 2,\ldots, L-1$ we compute the top-k margin of the teacher's predictions 
on the validation data which is equal to the 
sum of the top-k probabilities of the teacher soft-label minus the probability
assigned to the $k+1$-th class, i.e.,
\[ \mathrm{margin}_k(y_s(x)) = 
\Big( \sum_{i=1}^k \max_{i} y_s(x) \Big) - \max_{k+1} y_s(x)
\,. \]
Using the top-k margin as the covariate and the top-k accuracy as the response 
we solve the corresponding regression task using isotonic regression to obtain
the value $\alpha_k(x)$ representing the probability that the true label belongs
in the top-k predictions of the teacher soft-label.  For some threshold, say $90\%$,
for every $x$ we set $k(x)$ to be the smallest value of $k$ so that 
$\alpha_k(x) \geq 90\%$.  We empirically observed that using larger thresholds for
the top-k accuracy (e.g., $90\%$ or $95\%$), is better.  
We remark that while using the
top-k margin as the covariate in the regression task is reasonable, our method
can be used with other ``uncertainty metrics'' of the teacher's soft-labels, e.g.,
the entropy of the distribution of $y_s(x)$ after grouping together the top-k elements.
The higher this entropy metric is the more likely that the top-k accuracy probability
$\alpha(x)_k$ of the teacher is low.

\subsection{SLaM for Distillation with Unlabeled Examples: Pseudocode}
In this section we present pseudo-code describing the distillation with unlabeled examples setting and the SLaM method, \Cref{alg:SLaM}.
\begin{remark}
We remark that in our experiments, we observed that not normalizing
the mixing operation with $k(x) - 1$ resulted in better results overall.
Therefore, the mixing operation used in our experimental evaluation of SLaM
is $\mix(f(x;w); \alpha(x), k(x)) = \alpha(x) f(x;w) + (1-\alpha(x)) 
(1-f(x;w)) * \topmask(y_s(x); k(x))$.  For more details we refer the reader to the code provided in the supplementary material.
\end{remark}

\begin{algorithm}
  \caption{Student Label Mixing (SLaM) Distillation}
  \label{alg:SLaM}
\begin{algorithmic}
  
\STATE \textbf{Input:} Labeled Dataset A, Labeled Validation dataset V, 
Unlabeled Dataset U
\STATE \textbf{Output:} A trained Student model $f(x;w)$
\STATE 

\STATE Train Teacher model on Labeled Dataset A
\STATE Pre-train Student model on Labeled Dataset A

\STATE

\STATE \emph{\# ~ Label examples of Dataset U using the Teacher}
\STATE $B \gets \emptyset$
\FOR{each $x \in U$}
\STATE Add $(x, y_s(x))$ to $B$ ~~~ \emph{\# For hard-distillation use $y(x)$}
\ENDFOR
\STATE

\STATE  \emph{\# ~ Learn Teacher Accuracy Statistics $\alpha(x), k(x)$ 
\Cref{alg:accuracy}}
\STATE $\hat \alpha(x), \hat k(x) \gets \mathrm{LearnAccuracyStatistics}(y(\cdot), V, B)$
\STATE Train student $f(x;w)$ using the SLaM loss:
\[
 \sum_{(x,y) \in A \cup V} \ell(y, f(x;w))
+ 
\sum_{(x,y) \in B} 
\ell(y, \mathrm{mix}{(f(x;w); \hat a(x), \hat k(x)) })
\]

\end{algorithmic}
\end{algorithm}

\begin{algorithm}
\begin{algorithmic}
\caption{Estimating Teacher's Accuracy Statistics $\alpha(x), k(x)$}
\label{alg:accuracy}
  
\STATE \textbf{Input:} 
(Noisy) Teacher Model $y_s(x)$, Labeled Validation dataset V, 
\\ Isotonic-Regression lower-bound $\mathrm{lb}\in [0,1]$, and top-k accuracy threshold $t\in [0,1]$.

\STATE \textbf{Output:} 
Estimates $\hat \alpha(x), \hat k(x)$ of the actual $\alpha(x), k(x)$.
\STATE
\STATE Create Soft-labels for the Validation dataset using the teacher model 
$\{y_s(x): x \in V \}$.
\FOR{$j=1$ to $L-1$}
\STATE \emph{\# Map $y_s(x)$ to top-j margin and accuracy pairs on the Validation V}
\STATE \[C \gets \left\{ 
\left(\sum_{r=1}^j \max_r y_s(x) - \max_{j+1} y_s(x), ~ 1-\err(y_s(x), z)
\right) : (x,z) \in V 
\right\} 
\,.
\]
\STATE Set $\hat \alpha_j(x)$ to be the output of 
Isotonic-Regression with lower-bound $\mathrm{lb}$  on the (covariate, responce) pairs in $C$.
~~~ \# \emph{See \Cref{app:estimating-teachers-accuracy}}
\ENDFOR
\STATE $\hat a(x) \gets \hat a_1(x)$
\STATE $\hat a_L(x) \gets 1$ ~~~ \emph{\# The top-L accuracy is always (trivially) equal to 1}
\STATE Given example $x$ for some threshold $t$ set $\hat k(x)$ to be the smallest integer $r\in\{1,\ldots, L\}$ so that $a_r(x) \geq t$.
\end{algorithmic}

\end{algorithm}

\section{SLaM Consistency}
\label{app:SLaM-consistency}
In the following proposition we show that any minimizer of the SLaM loss over the noisy teacher-data must agree with the ground-truth for all $x$ (that have positive density).
To keep the presentation simple and avoid measurability issues (e.g., considering measure zero sets under $X$)
in the following we will assume that the example distribution $X$ 
is supported on a finite set.
We remark that one can easily adapt the proof to hold for 
any distribution $X$ (but the result will hold after excluding measure-zero sets under $X$).
\begin{proposition}[SLaM Consistency]
Let $D$ be the distribution of the teacher-labeled examples of dataset $B$, i.e., 
we first draw $x \sim X$ and 
then label it using the noisy teacher of \Cref{def:noisy-teacher-model}.
Moreover, assume that there exists some parameter $w^\ast \in \mathcal{W}$ such that the ground-truth $g(x) = f(x;w^\ast)$.
Denote by $\mathcal{L}^{\text{SLaM}}(w) = 
\E_{(x,y) \sim D}[\ell(y, \mix(f(x;w); \alpha(x), k(x) )]. 
$ the SLaM objective.
The following hold true.
\begin{enumerate}
\item 
$w^\ast$ minimizes the SLaM objective.
\item 
Assuming further that for all $x$ it holds that 
$\alpha(x) k(x) \neq 1$, we have that 
\emph{any} minimizer $w$ of the SLaM objective satisfies:
$f(x;w) = g(x)$ for all $x$.
\end{enumerate}
\end{proposition}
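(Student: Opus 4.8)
The plan is to reduce everything to a per-example analysis. Fix any $x$ in the (finite) support of $X$ and condition on it. Since the cross-entropy $\mathrm{ce}(y,q)=-\sum_i y_i \log q_i$ is linear in its first argument and the target $q=\mix(f(x;w);\alpha(x),k(x))$ does not depend on the random teacher label $y$, I would pull the expectation over $y$ inside: $\E_y[\mathrm{ce}(y,\mix(f(x;w)))\mid x]=\mathrm{ce}(\E[y\mid x],\mix(f(x;w)))$. By \Cref{def:noisy-teacher-model} the expected hard label places mass $\alpha(x)$ on the true class and $\tfrac{1-\alpha(x)}{k(x)-1}$ on each other top-$k(x)$ class, i.e.\ $\E[y\mid x]=\mix(g(x);\alpha(x),k(x))$. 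Thus the conditional objective is $\mathrm{ce}(\mix(g(x)),\mix(f(x;w)))$, and $\mathcal{L}^{\text{SLaM}}$ is its $x$-average. Because the realizability hypothesis gives a single $w^\ast$ with $f(\cdot;w^\ast)=g$ that attains the per-$x$ minimum simultaneously for every $x$, a parameter $w$ minimizes $\mathcal{L}^{\text{SLaM}}$ iff $f(x;w)$ minimizes each conditional term; so both claims reduce to understanding $\min_{f\in\Delta^{L-1}}\mathrm{ce}(\mix(g(x)),\mix(f))$ and its minimizers.

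For claim (1) I would first check that $p:=\mix(g(x))$ is an honest probability vector: this is exactly where the assumption that $g(x)$ lies in the teacher's top-$k(x)$ set is used, since it guarantees $\topmask(y_s(x);k(x))$ selects the coordinate carrying $g(x)$ and the mixing weights then sum to one. Viewed as a function of $f$, the conditional objective is convex, being the convex map $q\mapsto\mathrm{ce}(p,q)$ precomposed with the affine map $f\mapsto\mix(f)$. I would then verify the simplex KKT stationarity conditions at $f=g(x)$: since each coordinate $\mix(f)_i$ depends only on $f_i$, with slope $c:=(\alpha(x)k(x)-1)/(k(x)-1)$ on the top-$k$ coordinates and $\alpha(x)$ off them, and since $\mix(g(x))_i=p_i$, the stationarity equations collapse to a single multiplier value and hold at $g(x)$. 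Convexity then upgrades this to a global minimum, and averaging over $x$ shows $w^\ast$ minimizes the SLaM objective.

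The delicate part, and where $\alpha(x)k(x)\neq1$ is essential, is uniqueness. The main obstacle is that $\mix(f)$ is generally unnormalized — its entries sum to one only when all of $f$'s mass lies on the top-$k$ set — so the equality case of Gibbs' inequality cannot be applied to $\mix(f)$ directly. I would handle this in two steps. First, the stationarity analysis shows that at any minimizer every off-top-$k$ coordinate of $f$ must vanish (a positive such coordinate violates the multiplier balance unless $c=0$), so the optimal $\mix(f)$ is a probability vector supported on the top-$k$ set. Second, restricted to the top-$k$ simplex, $\mix$ is the affine map with slope $c$ on each coordinate, which is a bijection of that simplex precisely when $c\neq0$, i.e.\ when $\alpha(x)k(x)\neq1$. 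Now Gibbs' inequality applies cleanly (both $p$ and $\mix(f)$ are distributions on the top-$k$ set), its equality case forces $\mix(f)=\mix(g(x))$, and injectivity then forces $f=g(x)$. The same computation also explains the hypothesis: when $\alpha(x)k(x)=1$ one has $c=0$, the map collapses the top-$k$ information and the conditional objective is constant in $f$, so $g(x)$ remains a minimizer (consistent with claim (1)) but is no longer the unique one. Carrying this out for every $x$ in the support yields $f(x;w)=g(x)$ everywhere; the only nonroutine work is the KKT bookkeeping and verifying that $c\neq0$ is exactly the invertibility condition of the affine mixing map.
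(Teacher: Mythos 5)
Your route is genuinely different from the paper's, and in one respect more careful. The paper proves claim (1) in one line by applying Gibbs' inequality to $\mathrm{ce}\bigl(\mix(g(x)),\mix(f(x;w))\bigr)$, and proves claim (2) by asserting that any minimizer must satisfy $\mix(f(x;w))=\mix(g(x))$ coordinate-wise and then solving $(1-f(x;w)_0)\bigl(\alpha(x)-\tfrac{1-\alpha(x)}{k(x)-1}\bigr)=0$; it never addresses the fact that $\mix(f)$ is unnormalized (its entries sum to strictly more than one whenever $f$ has mass off the top-$k$ set and $\alpha(x)<1$), which is precisely what Gibbs' inequality needs. You correctly isolate this as the crux, and your two-step repair — first force a minimizer's mass onto the top-$k$ set, then apply Gibbs plus injectivity of the affine map restricted to the top-$k$ simplex — is the right architecture for a rigorous proof, whereas the paper's argument simply glosses over the issue.

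There is, however, a concrete gap: both of your KKT assertions silently require $c=\tfrac{\alpha(x)k(x)-1}{k(x)-1}$ to be nonnegative (resp.\ positive), which the stated hypothesis $\alpha(x)k(x)\neq 1$ does not guarantee. At $f=g(x)$ the partial derivatives of the conditional objective are $-c$ on the top-$k$ coordinates and $0$ off them; the single active coordinate forces the multiplier to equal $-c$, and the off-top-$k$ coordinates (where $f_i=0$) then need the \emph{inequality} $0\geq -c$, i.e.\ $c\geq 0$, which you never check. Similarly, a positive off-top-$k$ coordinate at a minimizer forces the multiplier to be $0$, and this contradicts stationarity on the top-$k$ coordinates only when $c>0$. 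When $c<0$ and $k(x)<L$ the conditional objective $-\sum_{i}p_i\log\bigl(\tfrac{1-\alpha(x)}{k(x)-1}+c\,f_i\bigr)$ is \emph{increasing} in each top-$k$ coordinate, so minimizers push all mass off the top-$k$ set. Concretely, take $L=3$, $k=2$, $\alpha=0.4$, ground truth $(1,0,0)$, top-$2$ set $\{1,2\}$: then $p=\mix(g(x))=(0.4,0.6,0)$, while $f=(0,0,1)$ gives $\mix(f)=(0.6,0.6,0.4)$ and conditional loss $-\log 0.6\approx 0.51$, strictly below the value $\approx 0.67$ at $f=g(x)$. So for $\alpha(x)k(x)<1$ your KKT claims are false — and in fact the proposition itself (hence also the paper's proof) fails in that regime. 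The repair is to assume $\alpha(x)k(x)>1$ (or $k(x)=L$), under which your monotonicity and KKT steps go through verbatim; as written, the claim that stationarity ``holds at $g(x)$'' is incorrect in a regime your hypotheses permit.
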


\begin{proof}
Fix any example $x \in X$.  
By \Cref{def:noisy-teacher-model} we have that the corresponding 
teacher label $y$ is correct with probability $\alpha(x)$ and a uniformly
random incorrect label out of the top-k labels according to the 
teacher soft-label $y_s(x)$.  Recall for an $L$-dimension score vector 
$p$, by $\topmask(p;k) \in \{0,1\}^L$ we denote the vector that has 
$1$ on the positions of the top-k elements of $p$, e.g.,
$\topmask((1,2,3,4,5); 2) = (0, 0, 0, 1, 1)$.
Conditional on $x$, the corresponding expected noisy teacher label
is
\begin{align*}
\E[y \mid x] &= \pr[y = g(x) \mid x] g(x) + \pr[y \neq g(x)] 
\E[y \mid x, y \neq g(x)] 
\\
&= \alpha(x) g(x) + (1-\alpha(x)) \E[y \mid y \neq g(x), x] 
\,.
\end{align*}
We know that the expected teacher label conditional on it being wrong
$ \E[y \mid y \neq g(x), x] $ is a uniformly random incorrect label from
the top-k labels of the corresponding teacher soft-label $y_s(x)$.
Assume first that $k = L$, 
since the ground-truth is represented by a one-hot vector, the 
distribution of uniformly random incorrect labels conditional on 
$x$ can be written as $(1-g(x))/(L-1)$.  For example, if the ground-truth
label is $g(x) = (1,0,0,0,0)$ then a uniformly random incorrect label
has probability distribution $(0, 1/4, 1/4, 1/4, 1/4)$.   
Assume now that $k(x) = 3$ and $\topmask(y_s(x); 3) = (1,1,1, 0, 0)$.
Then the distribution of the (incorrect) teacher label becomes
$(0, 1/2, 1/2, 0, 0)$.  Using $*$ to denote element-wise multiplication
of two vectors, we have 
\[
\E[y \mid x, y \neq g(x)] = \frac{1-g(x)}{k(x)-1} * \topmask(y_s(x); k(x))
\]
Therefore, we obtain 
\[
\E[y \mid x] = 
\alpha(x) g(x) + (1-\alpha(x)) \frac{1-g(x)}{k(x)-1} * \topmask(y_s(x); k(x))
= \mix(g(x); \alpha(x), k(x)) \,.
\]
Therefore, by using the fact that Cross-Entropy is linear in its first
argument, we obtain that the expected SLaM loss on some example $x$ is
\begin{align*}
\E[\mathrm{ce}(y, \mix(f(x;w); \alpha(x), k(x))) \mid x]
&=
\mathrm{ce}( \E[y\mid x], \mix(f(x;w); \alpha(x), k(x)) ) 
\\
&= \mathrm{ce}(\mix(g(x;w) ; \alpha(x), k(x)), \mix(f(x;w); \alpha(x), k(x))) \,.
\end{align*}

We first have to show that there exist some parameter
$w \in \mathcal{W}$ that matches the (expected) observed labels $\E[y \mid x]$.
Observe first that by using the realizability assumption, i.e.,that there exists $w^\ast$ so that $f(x;w^\ast) = g(x)$
we obtain that, for every $x$, it holds 
$\mix(g(x); \alpha(x), k(x)) = \mix(f(x;w^\ast); \alpha(x), k(x))$. In fact, by Gibb's inequality (convexity of Cross-Entropy) we have that $w^\ast$ is a (global) minimizer of the SLaM objective. 

We next show that \emph{any (global) minimizer} of the SLaM
objective must agree with the ground-truth for every $x$.
Since we have shown that $w^\ast$ is able to match the 
(expected) labels $\E[y \mid x]$ any other minimizer $w$ 
must also satisfy
$ \mix(g(x) ; \alpha(x), k(x)) =  \mix(f(x;w); \alpha(x), k(x))) $.
Assume without loss of generality that $g_0 = 1$, i.e., the ground-truth label is $0$.
We observe that by using that 
$ \mix(g(x;w); \alpha(x), k(x)) = \alpha(x) g(x) + (1-\alpha(x)) 
\frac{1-g(x)}{k(x) - 1} * \topmask(y_s(x); k(x)) $ and the fact
that the ground-truth belongs in the top-$k(x)$ 
of the teacher's predictions conditional
that the teacher's top-1 prediction is incorrect
(thus $\topmask(y_s(x))_0 = 1$), we obtain that 
\[
\alpha(x) g_0(x) + (1-\alpha(x)) (1-g_0(x)) / (1-k(x)) = 
\alpha(x) f(x;w)_0 + 
(1-\alpha(x)) (1-f(x;w)_0)/(k(x) - 1) 
\,.
\]
Using the fact that $g_0 = 1$ we can simplify the above
expression to 
\[
(1-f(x;w)_0) \left(\alpha(x) - \frac{1-\alpha(x)}{k(x) - 1}\right) = 0 \,.
\]
Using the assumption that $a(x) k(x) \neq 1$ we obtain that
the term
$ \left(\alpha(x) - \frac{1-\alpha(x)}{k(x) - 1}\right) $
is not vanishing and therefore it must hold that 
$f(x;w)_0 = 1 = g_0$, i.e., the student model must be equal
to the ground-truth.

\end{proof}

\section{Extended Experimental Evaluation}

We implemented all algorithms in Python and used the TensorFlow deep learning
library~\cite{abadi2016tensorflow}. We ran our experiments on 64 Cloud TPU v4s each with two cores.

\subsection{Implementation Details: Vision Datasets}
\label{sec:implementation_details}

Here we present the implementation details for the vision datasets we considered. 

\begin{remark}\label{VID_remark}
We note that in all our experiments, ``VID" corresponds to the implementation of the loss described in equation (2), (4) and (6) of~\cite{ahn2019variational} (which requires appropriately modifying the student model so that we have access to its embedding layer).
\end{remark}

\paragraph{Experiments on CIFAR-\{10/100\} and CelebA}

For the experiments on CIFAR-10/100 and CelebA we use the Adam optimizer with initial learning rate $\mathrm{lr} = 0.001$. We then proceed according to the following learning rate schedule  (see, e.g.,~\cite{he2016deep}):
\begin{align*}
\mathrm{lr} \leftarrow
\begin{cases}
\mathrm{lr} \cdot 0.5 \cdot 10^{-3}, & \text{if  $\#\mathrm{epochs} > 180$ } \\
\mathrm{lr}  \cdot 10^{-3}, & \text{if  $\#\mathrm{epochs} > 160$ } \\
\mathrm{lr}  \cdot 10^{-2}, & \text{if  $\#\mathrm{epochs} > 120$ } \\
\mathrm{lr}  \cdot 10^{-1}, & \text{if  $\#\mathrm{epochs} > 80$ }
\end{cases}
\end{align*}
Finally, we use data-augmentation. In particular, we use random horizontal flipping and random width and height translations with width and height factor, respectively, equal to $0.1$.

The hyperparameters of each method are optimized as follows. For SLaM we always use $0.5$ as the lower bound for isotonic regression (i.e., the parameter $\mathrm{lb}$ in \Cref{alg:accuracy}).  As CelebA is a binary classification benchmark 
$k(x)$ is naturally set to $2$ for all examples.
For CIFAR-10/10 we used the data-dependent method for
estimating $k(x)$ (see \Cref{alg:accuracy}) with threshold
parameter $t=0.9$.
For weighted distillation we do a grid search over updating the weights every $\{1, 25, 50,  100, 200\}$ epochs and we report the best average accuracy achieved. Finally, for VID we search over $\{ 0.001, 0.1, 0.2,  0.5, 0.8, 1.0, 2.0, 10.0, 50.0, 100.0\}$  for the coefficient of the VID-related term of the loss function, and for the PolyLoss we optimize its hyperparameter over $\{-1.0, -0.8, -0.6, -0.4, -0.2, 0.5, 1.0, 2.0, 50.0, 100.0\}$.

\paragraph{Experiments on ImageNet} 

For the ImageNet experiments we use SGD with momentum $0.9$ as the optimizer. For data-augmentation  we use random horizontal flipping and random cropping. Finally, the learning rate schedule is as follows. For the first $5$ epochs the learning rate $\mathrm{lr}$ is increased from $0.0$ to $0.1$ linearly. After that, the learning rate changes as follows:
\begin{align*}
\mathrm{lr} =
\begin{cases}
0.01, & \text{if  $\#\mathrm{epochs} > 30$ } \\
0.001, & \text{if  $\#\mathrm{epochs} > 60$ } \\
0.0001, & \text{if  $\#\mathrm{epochs} > 80$ }.
\end{cases}
\end{align*}

The hyperparameters of each method are optimized as follows. For SLaM we do a hyperparameter search over $\{0.55, 0.60, 0.65,  0.70 \}$ for the lower bound for isotonic regression, and we keep the best performing value for each potential size of dataset $A$. 
We used the fixed value $5$ for $k(x)$, as the top-5 accuracy of the teacher model was satisfactory (much higher than its top-1 accuracy) on the validation dataset.
For Taylor-CE we did a hyper-parameter search for the Taylor series truncation values in $\{1,2,3,4,5,6, 10, 20, 50, 80, 100\}$.
For weighted distillation we compute the weights in a one-shot fashion using the pre-trained student  (as in the ImageNet experiments in~\cite{IKBMTV22}). For VID we search over $\{0.1, 0.3, 0.5 \}$ for the coefficient of the VID-related term of the loss function, and for the PolyLoss we optimize its hyperparameter over $\{1.0, 2.0, 50.0, 100.0 \}$.

\begin{figure}[t!]
\begin{center}
\includegraphics[width=0.5\columnwidth]{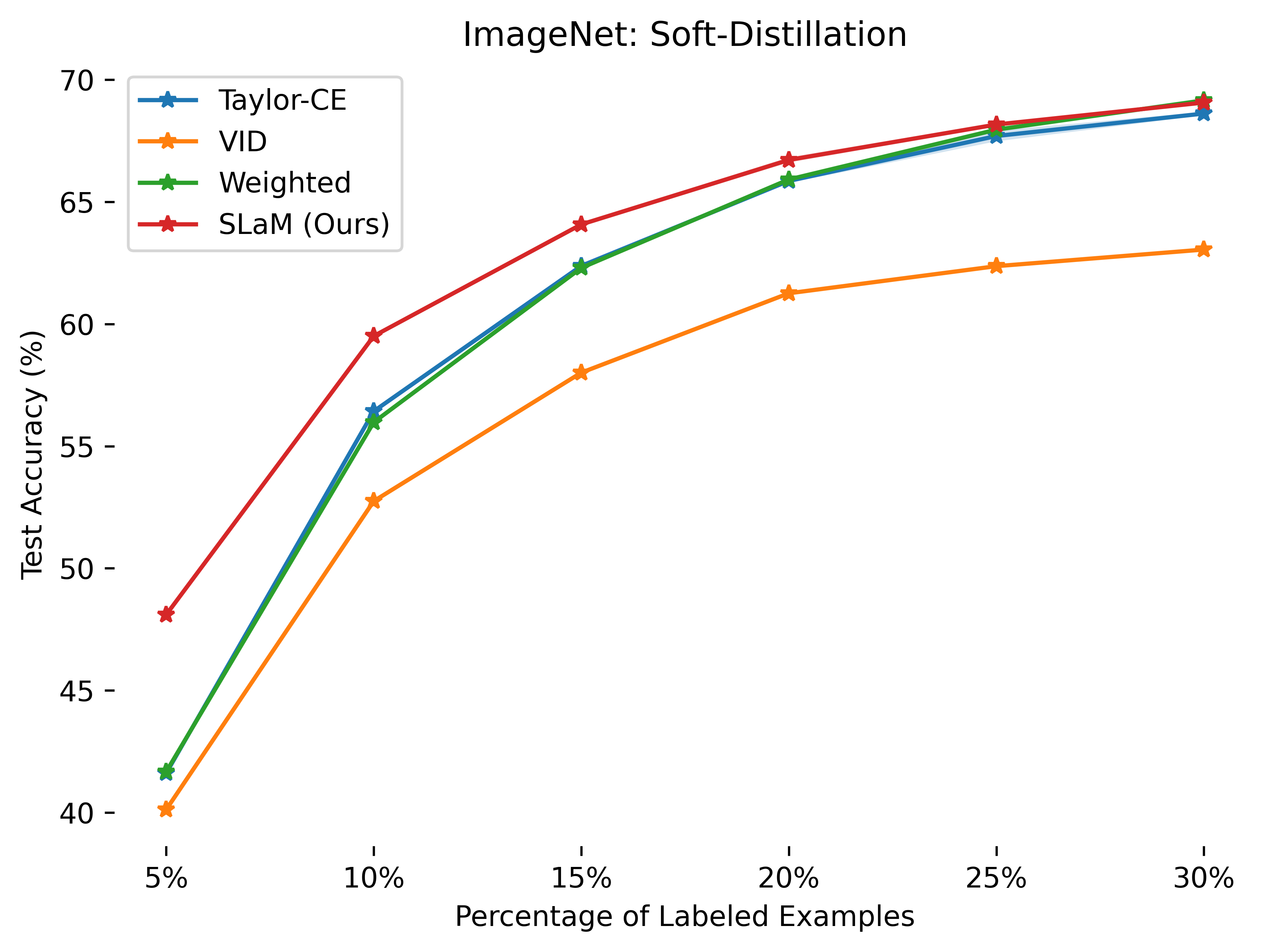}  
\caption{
Comparison of distillation methods on ImageNet.
On the horizontal axis we plot the size of Dataset A as a percentage of the whole training dataset. 
On the vertical axis we plot the accuracy of the trained student-model on the test dataset.}
 \label{fig:Imagenet-comparison}
 \end{center}
\end{figure}

\subsection{Hard-Distillation}
\label{app:hard-distillation-comparison}

Here we present results on hard-distillation. 
The hyper-parameters of all methods are chosen the same way as in our soft-distillation experiments,
see \Cref{sec:implementation_details}.
Tables~\ref{tab:cifar10-hard-comparison},~\ref{tab:cifar100-hard-comparison} and~\ref{tab:celeba-hard-comparison} contain our results on CIFAR-10, CIFAR-100 and CelebA, respectively. We observe that in almost all cases, SLaM consistently outperforms the other baselines. Moreover, for CIFAR-10 and CIFAR-100 hard-distillation performs worse than soft-distillation (as it is typical the case) but in CelebA hard-distillation seems to be performing on par with (sometimes even outperforming) soft-distillation. A plausible explanation for the latter outcome is that in our CelebA experiments the teacher and student  have different architectures (MobileNet and ResNet, respectively) so that soft-labels from the teacher are not so informative for the student. (This  is also a binary classification task where the information passed from the teacher to the student through its soft-labels is limited.)

\begin{table*}[t] 
\renewcommand{\arraystretch}{1.15}
\caption{Experiments on CIFAR-10 (\textbf{hard}-distillation). See Section~\ref{comparison_with_previous} for details.}
\label{tab:cifar10-hard-comparison}
\begin{center}
\tiny
\begin{tabular}{|c| c c c c c c|} 
 \hline
Labeled Examples  & $5000$ & $7500$ & $10000$ & $12500$ & $15000$  & $17500$\\  
 \hline
 Teacher  & $61.30 $ & $68.98 $ & $72.42 $ & $73.92 $ & $ 76.63 $  & $78.63 $\\ 
\hline
 Vanilla & 
$ 62.26 \pm 0.45$ &
$ 69.07 \pm 0.11$ &
$ 72.09 \pm 0.11$ &
$ 73.43 \pm 0.16$ &
$ 75.93 \pm 0.25$ &
$ 77.43\pm  0.15$\\
 \hline
Taylor-CE \cite{feng2021can} &   
 $ 63.14 \pm 0.07$ & 
 $ 69.98 \pm 0.11$ & 
 $ 72.72 \pm 0.36 $ &
 $ 73.77 \pm 0.28 $ & 
 $ 76.26 \pm 0.29 $ & 
 $ 77.88 \pm 0.20 $ \\
 \hline
 UPS \cite{rizvedefense} &   
 $  64.27 \pm 0.08$ & 
 $  70.93 \pm 0.26 $ & 
 $  73.78 \pm 0.16$ &
 $  74.66\pm  0.29$ & 
 $  77.38 \pm 0.37$ & 
 $  78.95\pm  0.08$ \\
 \hline
 VID \cite{Ahn2019CVPR} & 
 $ 61.95 \pm 0.22 $ &           
 $ 66.91 \pm 0.21$ &           
 $ 69.59 \pm 0.24$ &           
 $ 72.16 \pm 0.47 $ &
 $ 74.83 \pm 0.11 $ &
 $ 75.55 \pm 0.21 $ \\
 \hline
 Weighted \cite{IKBMTV22} &
$63.22 \pm 0.45 $ &
$71.04 \pm 0.26 $ &
$72.84 \pm 0.12$ &
$74.20 \pm 0.16$ &
$76.56 \pm 0.24$ &
$78.23 \pm 0.15$\\

 \hline
 SLaM (Ours) & 
 $\mathbf{66.40 \pm 0.31}$ &
 $\mathbf{72.44 \pm 0.17}$ &
 $\mathbf{74.77 \pm 0.13}$ &
 $\mathbf{75.64 \pm 0.19}$ &
 $\mathbf{77.99 \pm 0.36}$ & 
 $\mathbf{79.26 \pm 0.26}$ \\ 
 \hline
\end{tabular}

\end{center}

\end{table*}
 
\begin{table*}[t] 
\renewcommand{\arraystretch}{1.15}
\caption{Experiments on CIFAR-100 (\textbf{hard}-distillation). See Section~\ref{comparison_with_previous} for details.}
\label{tab:cifar100-hard-comparison}
\begin{center}
\tiny
\begin{tabular}{|c| c c c c c c|} 
 \hline
Labeled Examples  & $5000$ & $7500$ & $10000$ & $12500$ & $15000$  & $17500$\\  
 \hline
 Teacher  & 35.97 & 44.65 & 49.62 & 55.68 & 59.19 & 62.05 \\
\hline
 Vanilla &  
 $36.36 \pm 0.04$ & 
 $44.15 \pm 0.10$ & 
 $50.22 \pm 0.07$ & 
 $55.55 \pm 0.24$ & 
 $58.85 \pm 0.1$  & 
 $61.43 \pm 0.19$\\
 \hline
Taylor-CE \cite{feng2021can} &   
 $ 39.12 \pm 0.14$ & 
 $ 46.87 \pm 0.10$ & 
 $ 52.64 \pm 0.22$ &
 $ 57.19 \pm 0.28$ & 
 $ 59.95 \pm 0.11$ & 
 $ 62.36 \pm 0.21$ \\
 \hline
 UPS \cite{rizvedefense} &   
 $ 39.49\pm 0.13$ & 
 $ 48.36\pm 0.44$ & 
 $ 53.95\pm 0.10$ &
 $ 57.95\pm 0.10$ & 
 $ 60.59\pm 0.29$ & 
 $ 62.09\pm 0.28$ \\
 \hline
 VID \cite{Ahn2019CVPR} & 
 $37.19 \pm 0.09$ &
 $44.67 \pm 0.16$ &
 $50.63 \pm 0.35$ &
 $54.78 \pm 0.07$ &
 $59.27 \pm 0.14$ &
 $62.01 \pm 0.05$\\
 \hline
 Weighted \cite{IKBMTV22} & 
 $38.04 \pm 0.29$ & 
 $46.45 \pm 0.22$ & 
 $52.33 \pm 0.18$ & 
 $57.43 \pm 0.13$ & 
 $60.81 \pm 0.09$ & 
 $63.02 \pm 0.06$\\
 \hline
 SLaM (Ours) & 
 $\mathbf{42.01 \pm 0.29}$ & 
 $\mathbf{49.08 \pm 0.14}$ & 
 $\mathbf{54.49 \pm 0.17}$ & 
 $\mathbf{58.53 \pm 0.04}$ &
 $\mathbf{61.12 \pm 0.15}$ & 
 $\mathbf{63.21 \pm 0.18}$\\
 \hline
\end{tabular}

\end{center}

\end{table*}

\begin{table*}[t] 
\renewcommand{\arraystretch}{1.15}
\caption{Experiments on CelebA (\textbf{hard}-distillation). See Section~\ref{comparison_with_previous} for details.}
\label{tab:celeba-hard-comparison}
\begin{center}
\tiny
\begin{tabular}{|c| c c c c c c|} 
 \hline
Labeled Examples  & $2\%$ & $3\%$ & $4\%$ & $5\%$ & $6\%$  & $7\%$\\  
 \hline
 Teacher  & 
 $86.19$ & 
 $88.25$ & 
 $88.95$ & 
 $91.31$ & 
 $92.09$ & 
 $92.62$\\ 
\hline
 Vanilla & 
   $89.73  \pm 0.08$ &
   $91.61  \pm 0.09$ &
   $92.05  \pm 0.11$ &
   $93.41  \pm 0.13$ &
   $94.02  \pm 0.15$ &
   $94.05  \pm 0.04$\\
 \hline
Taylor-CE \cite{feng2021can} &   
 $ \mathbf{90.62 \pm 0.05}$ & 
 $ 92.19\pm  0.02$ & 
 $ 92.66\pm  0.11 $ &
 $ 93.60 \pm 0.14$ & 
 $ 94.00 \pm 0.04$ & 
 $ 94.38 \pm 0.10$ \\
 \hline
 UPS \cite{rizvedefense} &   
 $ 89.35 \pm 0.04 $ & 
 $ 91.30 \pm 0.04$ & 
 $ 91.95 \pm 0.12$ &
 $ 93.18 \pm 0.07$ & 
 $ 93.71 \pm 0.04$ & 
 $ 94.18 \pm 0.03$ \\
 \hline
 VID \cite{Ahn2019CVPR} & 
$ 89.92 \pm 0.21 $ &
$91.60 \pm  0.11$ &
$92.20 \pm  0.12$ &
$93.51 \pm 0.15$ &
$94.08 \pm 0.15 $ &
$94.27 \pm 0.10$
 \\
 \hline
 Weighted \cite{IKBMTV22} &
$90.06 \pm 0.06 $ &
$91.97\pm 0.13 $ &
$92.45\pm 0.10 $ &
$93.60\pm 0.07 $ &
$93.94\pm 0.12  $ &
$94.25\pm 0.16  $\\

 \hline
 SLaM (Ours) & 
 $90.43 \pm 0.05$ &
 $\mathbf{92.25 \pm 0.11}$ &
 $\mathbf{92.71 \pm 0.08}$ &
 $\mathbf{93.96 \pm 0.17}$ &
 $\mathbf{94.39 \pm 0.21}$ &
 $\mathbf{94.52 \pm 0.12}$ \\
 \hline
\end{tabular}

\end{center}

\end{table*}

\begin{figure}[t!]
\includegraphics[width=0.32\columnwidth]{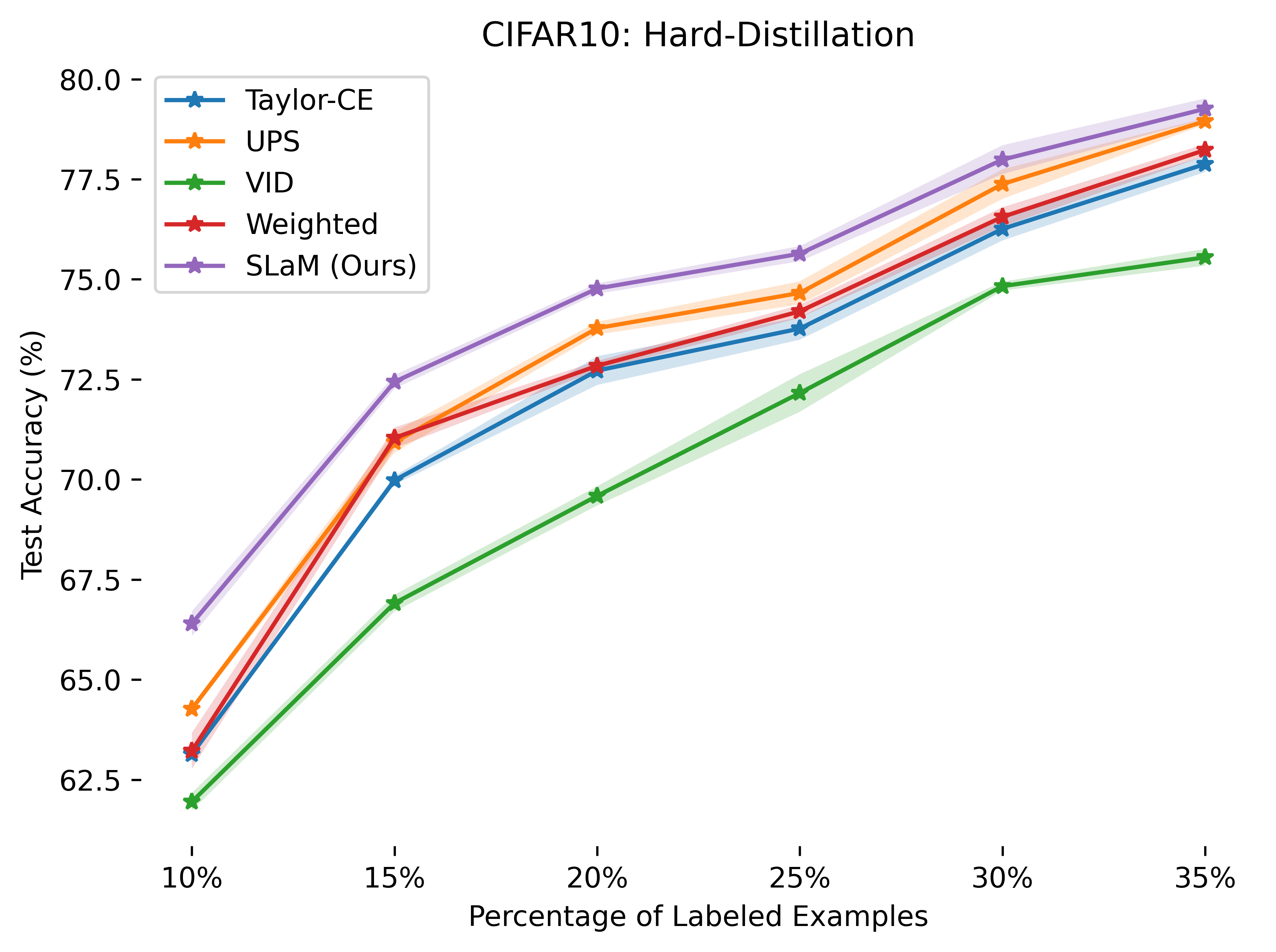}  
\includegraphics[width=0.32\columnwidth]{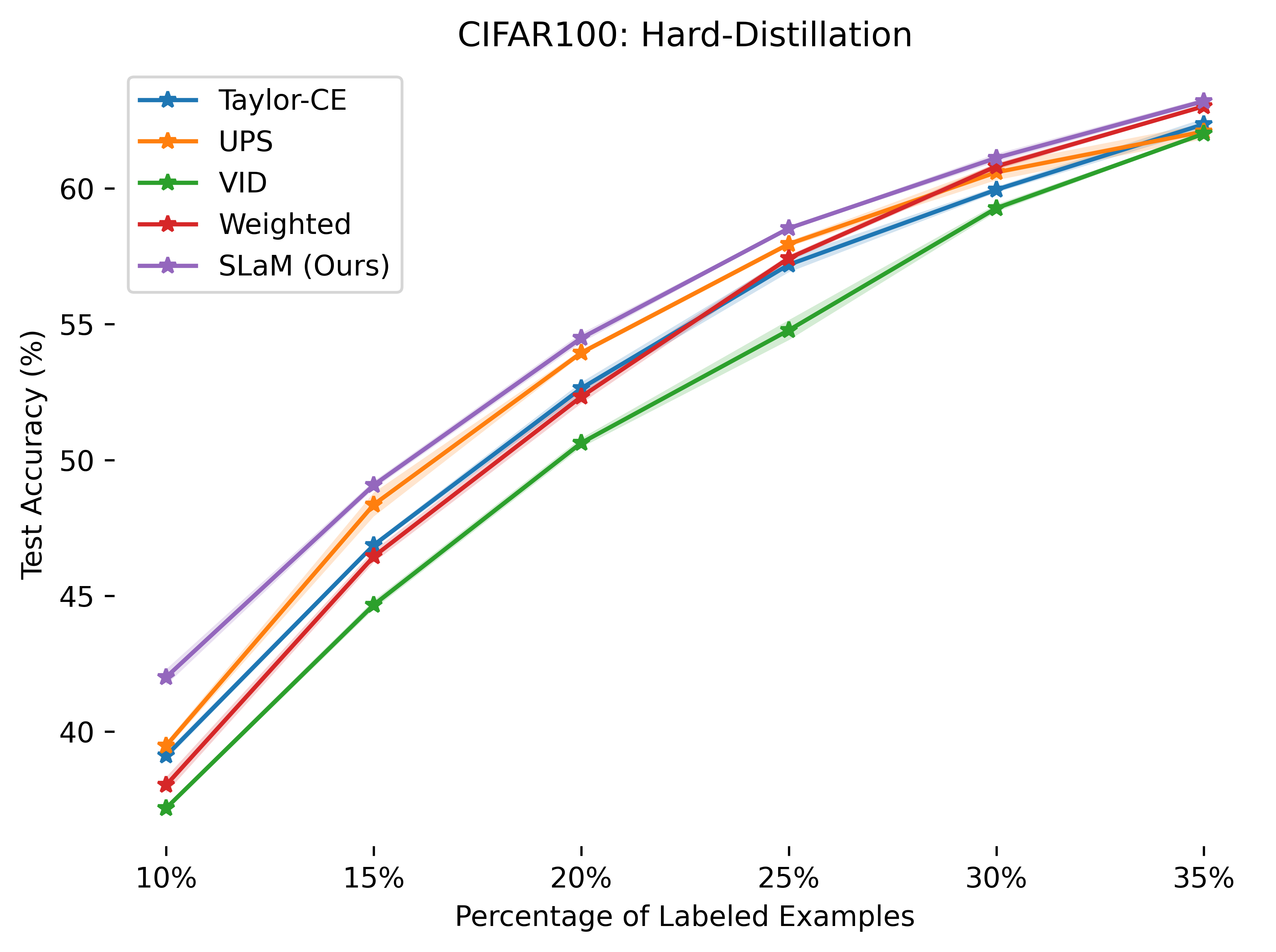}    
\includegraphics[width=0.32\columnwidth]{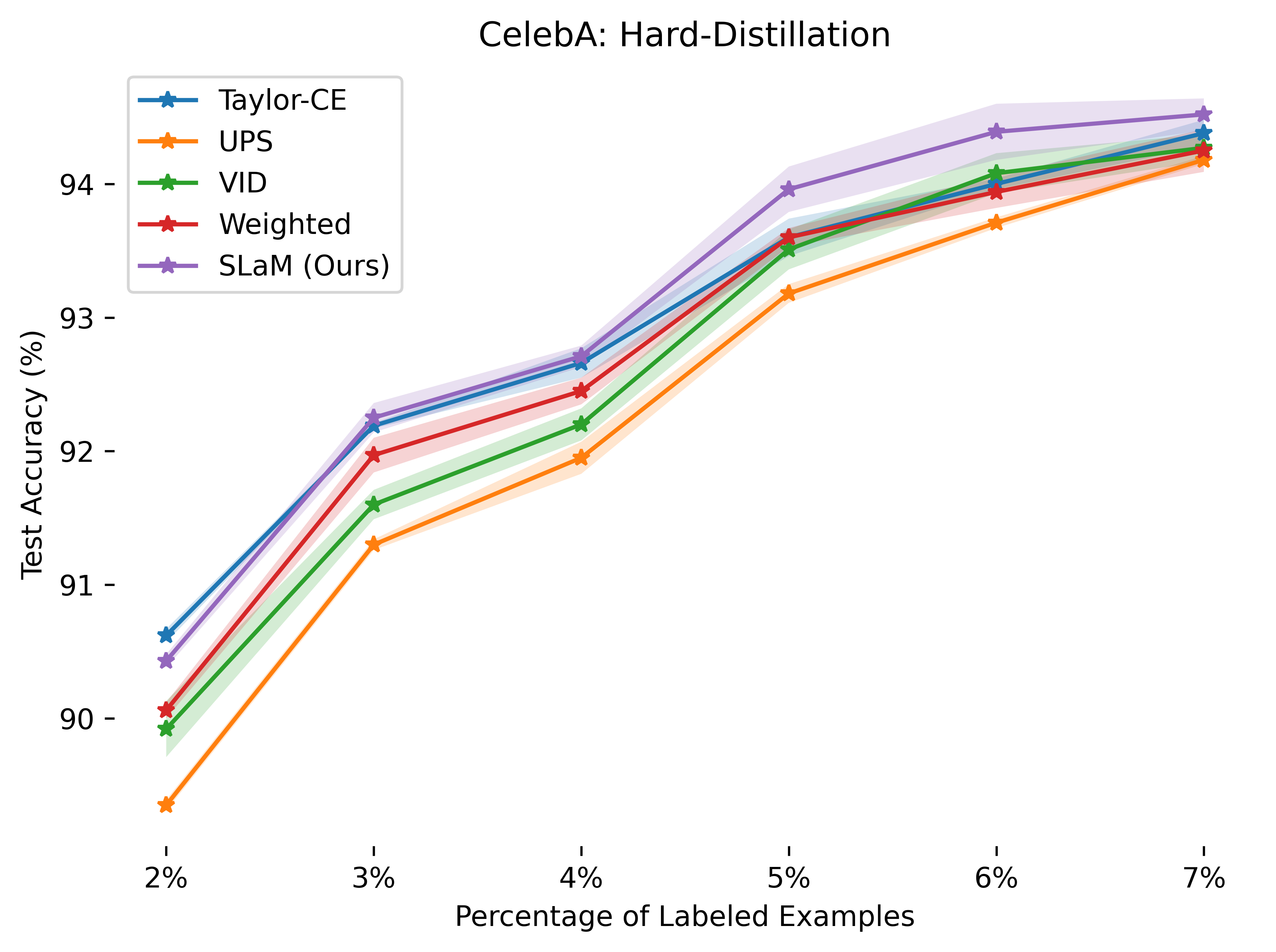}  
\caption{
Comparison of distillation methods on CIFAR-{10,100} and CelebA.
On the horizontal axis we plot the size of Dataset A as a percentage of the whole training dataset. 
On the vertical axis we plot the accuracy of the trained student-model on the test dataset.}
 \label{fig:cifar10-100-celeba-hard-plots}
\end{figure}

\subsection{Large Movies Reviews Dataset Results}
\label{app:imdb-comparison}

Here we present the results and the implementation details regarding the experiments on the Large Movies Reviews dataset. Recall that we use an ALBERT-large model as a teacher, and an ALBERT-base model as a student. We also use
$2\%, 4\%, 8\%, 40\%$ percent (or 500, 1000, 2000, 10000 examples) from the training dataset and split the remaining data
in a validation dataset of 500 examples and an unlabeled
dataset U. We compare the methods on the soft-distillation. For each trial we train the student model for $40$ epochs and keep the best test accuracy over all epochs.
We perform $3$ trials and report the average of each method and the variance of the achieved accuracies over the trials. The results of our experiments can be found in Table~\ref{tab:imdb-soft-comparison}.  
We remark that we did not implement the UPS method for this dataset as the data-augmentation method for estimating the teacher's accuracy could not be readily used for this NLP dataset.
Moreover, using dropout and Monte Carlo estimation for the uncertainty was also not compatible with the Albert model used in this experiment.

Since we are dealing with ALBERT-models (which are already pre-trained), we do not pre-train the student model on dataset A except in the case of ``weighted-distillation''~\cite{IKBMTV22}, where we pre-train the  student model on dataset A just for $1$ epoch. The teacher model is trained using the Adam optimizer for $20$ epochs with initial learning rate $10^{-6}$. The student model is trained also using the Adam optimizer but for $40$ epochs and with learning rate $10^{-7}$.

The hyperparameters of each method are optimized as follows. For SLaM we do a hyperparameter search over $\{0.5, 0.6, 0.7, 0.8, 0.9\}$ for the lower bound for isotonic regression, and we keep the best performing value for each potential size of dataset $A$. 
As this is a binary classification benchmark we naturally set $k(x)=2$ for all examples.
For weighted distillation we do a grid search over updating the weights every $\{1, 10, 20, 40\}$ epochs and, similarly, we report the best average accuracy achieved. Finally, for VID (recall also Remark~\ref{VID_remark}) we search over $\{0.1, 0.5, 1.0, 2.0 \}$ for the coefficient of the VID-related term of the loss function, and for the PolyLoss we opitmize its hyperparameter over $\{-1.0, -0.8, -0.6, -0.4, -0.2, 0.5, 1.0, 2.0 \}$.

\begin{table*}[t] 
\renewcommand{\arraystretch}{1.15}
\caption{Experiments on the Large Movies Reviews Dataset (\textbf{soft}-distillation). See Section~\ref{app:imdb-comparison} for details.}
\label{tab:imdb-soft-comparison}

\begin{center}
\tiny
\begin{tabular}{|c| c c c c |} 
 \hline
Labeled Examples  & $2\%$ & $4\%$ & $8\%$ & $40\%$ 
\\  
 \hline
 Teacher  & $77.52$	& $84.04$	& $85.44$	& $88.3$\\
\hline
 Vanilla & 
 $80.93 \pm 0.10$ & 
 $85.12 \pm 0.29$ & 
 $85.99 \pm 0.08$ & 
 $87.50 \pm 0.6$  \\
 \hline
Taylor-CE \cite{feng2021can} &   
 $ 79.5 \pm 0.38 $ & 
 $ 85.14 \pm 0.13$ & 
 $ 85.98 \pm 0.14$ &
 $ 87.57\pm 0.3$ \\
 \hline
 VID \cite{Ahn2019CVPR} &  

$ 81.76 \pm 0.32$ &
$85.33 \pm 0.35 $ &
$86.17 \pm 0.06 $ & 
$87.71 \pm 0.01 $ 
\\  
 \hline
 Weighted \cite{IKBMTV22} &  
$ 81.1 \pm +0.1 $&
$85.2 \pm 0.05$  &
$86.13 \pm 0.17 $&	
$\mathbf{87.8 \pm 0.25}$  \\
 \hline
 SLaM (Ours) & 

$ \mathbf{81.88 \pm 0.23}$	&
$ \mathbf{85.5 \pm 0.09}$ &
$ \mathbf{86.23 \pm 0.13}$ &
$87.73 \pm 0.38$  \\
 \hline
\end{tabular}

\end{center}

\end{table*}

\subsection{Combining with Teacher-Uncertainty-Based Reweighting Techniques}
\label{app:composability}

As we discussed in Section~\ref{RelatedWork}, our method can in principle be combined  with teacher-uncertainty filtering and weighting schemes as these can be seen as preprocessing steps. To demonstrate this, we combine our method with the so-called fidelity-based weighting scheme of~\cite{dehghani2017fidelity}.   
The fidelity weighting scheme
reweights examples using some uncertainty measure for teacher's labels, e.g., by performing random data-augmentations and estimating the variance of the resulting
teacher labels or using dropout and Monte Carlo estimation. 
More precisely,  for every example $x$  in the teacher-labeled dataset 
$B$, the fidelity-weighting scheme assigns the weight 
$w^{\mathrm{Fid}}(x) = \exp(- \beta ~ \mathrm{uncertainty}^{\mathrm{teacher}}(x))$  for some hyper-parameter $\beta>0$.
In our experiments we 
performed $10$ random data augmentations (random crop and resize), estimated the coordinate-wise variance of the resulting teacher soft-labels, and finally computed the average of the variances of the $k$-classes,
as proposed in \cite{dehghani2017fidelity}.
We normalized the above uncertainty of each example by the total uncertainty of the teacher over the whole dataset $B$. The weights of examples in dataset $A$ are set to $1$ and the
reweighted objective is optimized over the combination of the datasets $A, B$.
\begin{align}
\mathcal{L}^{\mathrm{fid}}(w) = 
&\frac{1}{|A\cup B|} 
\Bigg(
\sum_{(x, y) \in A} \ell(y, f(x;w)) 
+ \sum_{(x, y) \in B} w^{\mathrm{Fid}}(x) ~ \ell(y, f(x;w))
\Bigg)
\,.
\label{eq:fidelity-weighted-objective}
    \end{align}

\begin{figure}
\begin{center}
 \includegraphics[width=0.42\columnwidth]{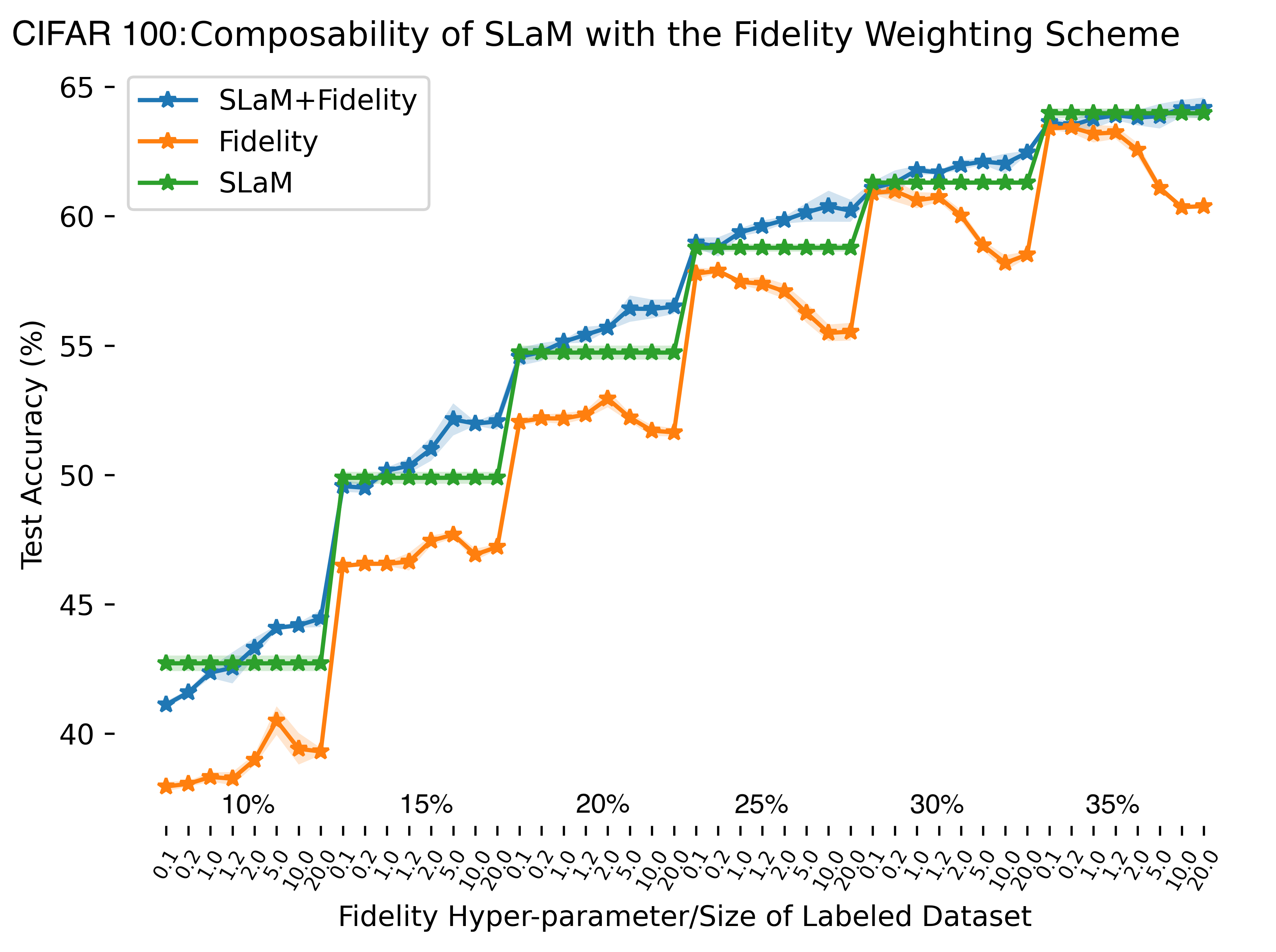}  
 \end{center}
 \caption{Composability the fidelity-based weighting scheme of \cite{dehghani2017fidelity}. 
 The $x$-axis shows the different values of the fidelity
 hyper-parameter $\beta$ and the size of dataset A.
 From left to right we increase the size of dataset A
 from $10\%$ to $35\%$ and for each size we try 
 different values of $\beta$. 
 We observe that SLaM on its own (shown in green) is usually much better than the fidelity weighting scheme (shown in orange). Moreover, using SLaM on top of the fidelity weighting scheme (shown in blue) consistently improves its performance.
 }
 \label{fig:fidelity-composability}
\end{figure}

To demonstrate the composability of our method with such uncertainty-based
weighting schemes, we use CIFAR100 and the percentage of the labeled dataset A (as a fraction of the whole training set) is $10\%, 15\%, 20\%, 25\%, 30\%, 35\%$, similar to the setting of \Cref{comparison_with_previous}.  The teacher is a ResNet110 and the student is a ResNet56.  We first train
the student using only the fidelity weighting scheme, i.e.,
optimize the loss function of \Cref{eq:fidelity-weighted-objective}
using different values for the hyperparameter $\beta \in \{0.1, 0.2, 1.0, 1.2, 2.0, 5.0, 10.0, 20.0\}$, i.e., ranging from mildly reweighting the examples of 
dataset B to more agressively ``removing'' examples where the teacher's entropy
is large.  For the same values of $\beta$ we then train the student using the reweighted SLaM objective:
\begin{align}
\mathcal{L}^{\mathrm{Fid+SLaM}}(w) = 
\frac{1}{|A\cup B|} 
\Bigg(
\sum_{(x, y) \in A} \ell(y, f(x;w)) 
+ \sum_{(x, y) \in B} w^{\mathrm{fid}}(x) ~ \ell(y, \mix(f(x;w); \alpha(x), k(x) )
\Bigg)
\,.
\label{eq:fidelity-weighted-objective}
\end{align}
For the combined SLaM + Fidelity method we did not perform hyper-parameter search and used the same
parameters for the isotonic regression as we did in the ``standard''
SLaM experiment in CIFAR100 of \Cref{sec:implementation_details}.
We present our comprehensive results
for all sizes of dataset A and values of the hyper-parameter $\beta$ in \Cref{fig:fidelity-composability}.  Our results show that, regardless 
of the value of the hyperparameter $\beta$
and the size of the labeled dataset A, 
using SLaM together with the fidelity weighting scheme provides consistent improvements. 
Moreover, in \Cref{fig:fidelity-composability},
we observe that by using SLaM the achieved accuracy depends less on the hyper-parameter  $\beta$: since SLaM takes into account the fact
that some of the teacher's predictions are incorrect, it is not crucial to down-weight them or filter them out.

\subsection{Using Distillation Temperature}
\label{app:sec:temperature-composability}

In this section we show that our approach can be effectively combined with temperature-scaling~\cite{distillation}.
Choosing the right distillation temperature
often provides significant improvements. 
In our setting, the teacher provides much more 
confident predictions (e.g., soft-labels with high-margin) on dataset A (where the teacher was trained) compared 
to the teacher soft-labels of dataset B where the teacher is, on average, less confident.
Given this observation, it is reasonable to use 
different distillation temperatures for dataset A
and dataset B.  We try different temperatures for
dataset A and dataset B and perform vanilla distillation
with temperature and also consider applying the temperature
scaling before applying SLaM. For each size of dataset A
we try pairs of temperatures $t_A, t_B \in  \{0.01, 0.1, 0.5, 0.8, 1., 2., 5., 10., 100.\}$ and report the best accuracy achieved by vanilla distillation 
and the best achieved by first applying temperature scaling and then SLaM.
In \Cref{fig:temperature-composability} we observe that SLaM with temperature scaling consistently improves over vanilla distillation with temperature.

\begin{figure}
\begin{center}
\includegraphics[width=0.5\columnwidth]{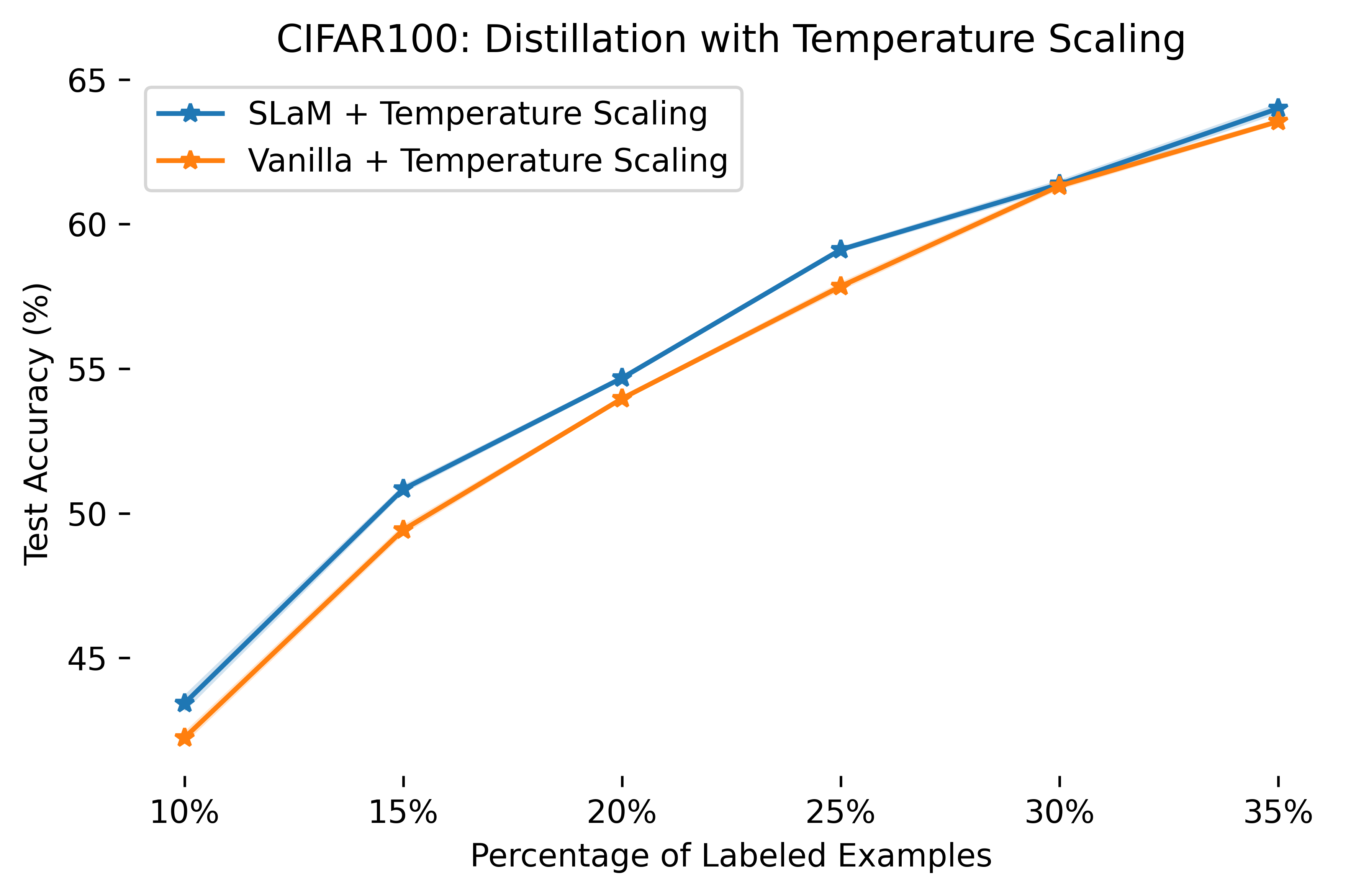} 
\caption{CIFAR100: Temperature Ablation.
On the x-axis we have the size of the labeled dataset (as a percentage of the whole training dataset) that the teacher model uses for training.}
\label{fig:temperature-composability}
\end{center}
\end{figure}

\subsection{Using SLaM with other loss functions beyond cross-entropy  }\label{sec:other_losses}

In this section, we demonstrate that our method can be successfully applied when the student loss function comes from the families of losses introduced in~\cite{feng2021can} and~\cite{leng2022polyloss}. We perform experiments on CIFAR-100 and ImageNet following the setting of~\Cref{comparison_with_previous}. In particular, we compare vanilla distillation with unlabeled examples using the Taylor-CE loss of~\cite{feng2021can} and the PolyLoss of~\cite{leng2022polyloss}, with combining SLaM with these losses. For the Taylor-CE loss we set the ``degree" hyperparameter to be $2$ (as suggested in~\cite{feng2021can}) and we set the hyperparameter of the PolyLoss to be $2.0$ (as suggested in~\cite{leng2022polyloss}). The corresponding results can be found in ~\Cref{fig:SLaM-Taylor-PolyLoss}.

\begin{figure}
\includegraphics[width=0.33\columnwidth]{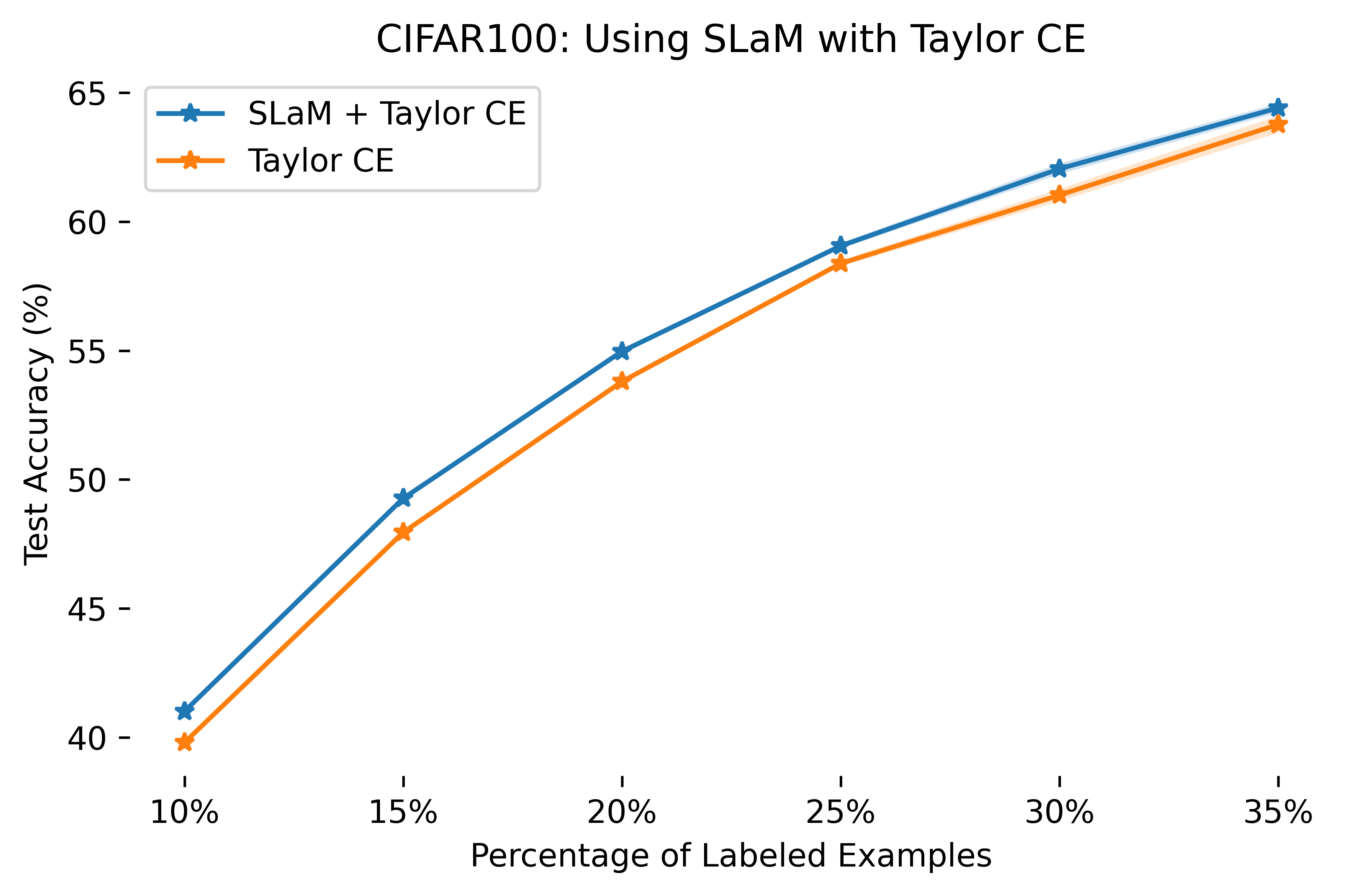} 
\includegraphics[width=0.33\columnwidth]{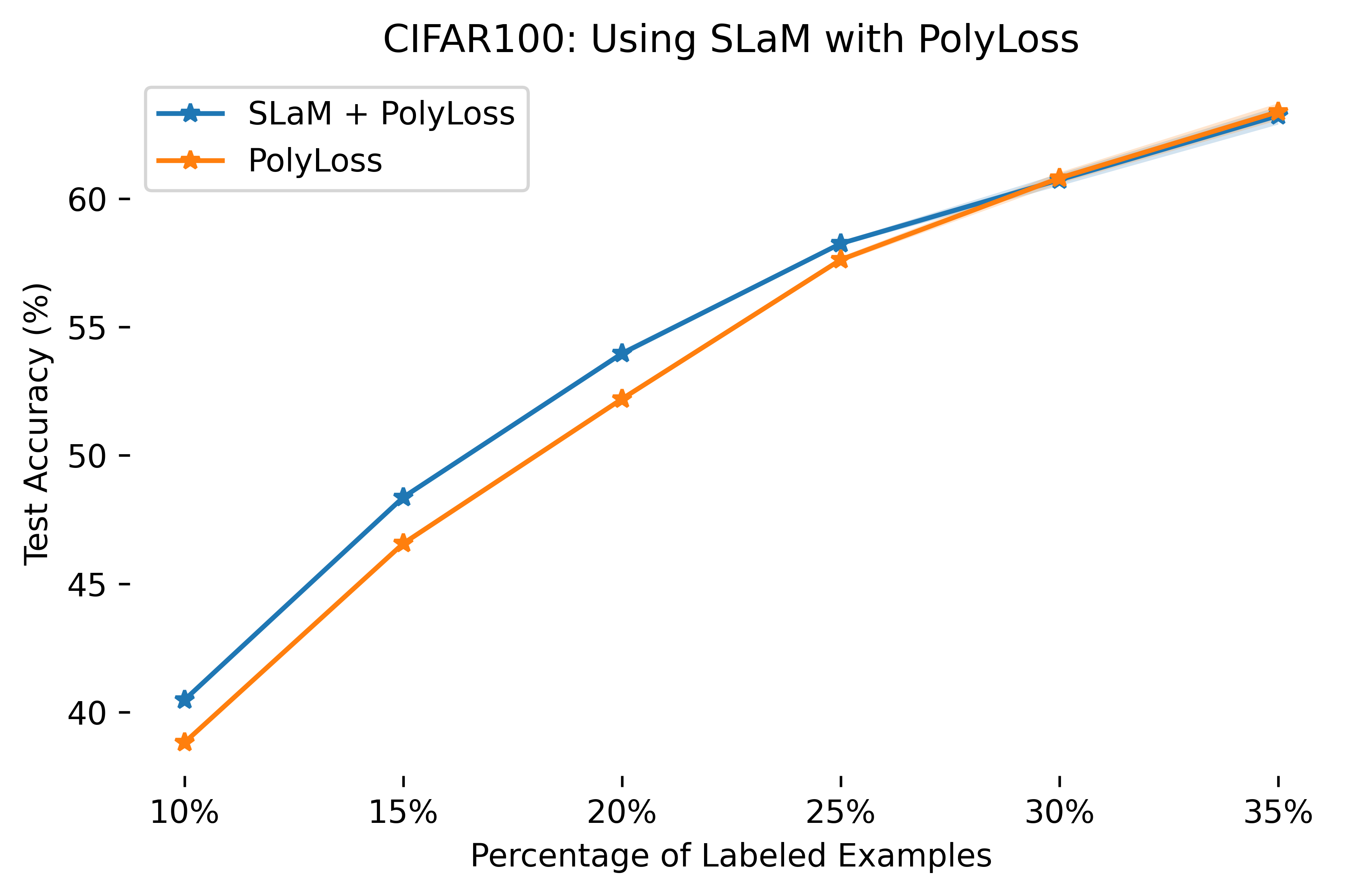} 
\includegraphics[width=0.33\columnwidth]{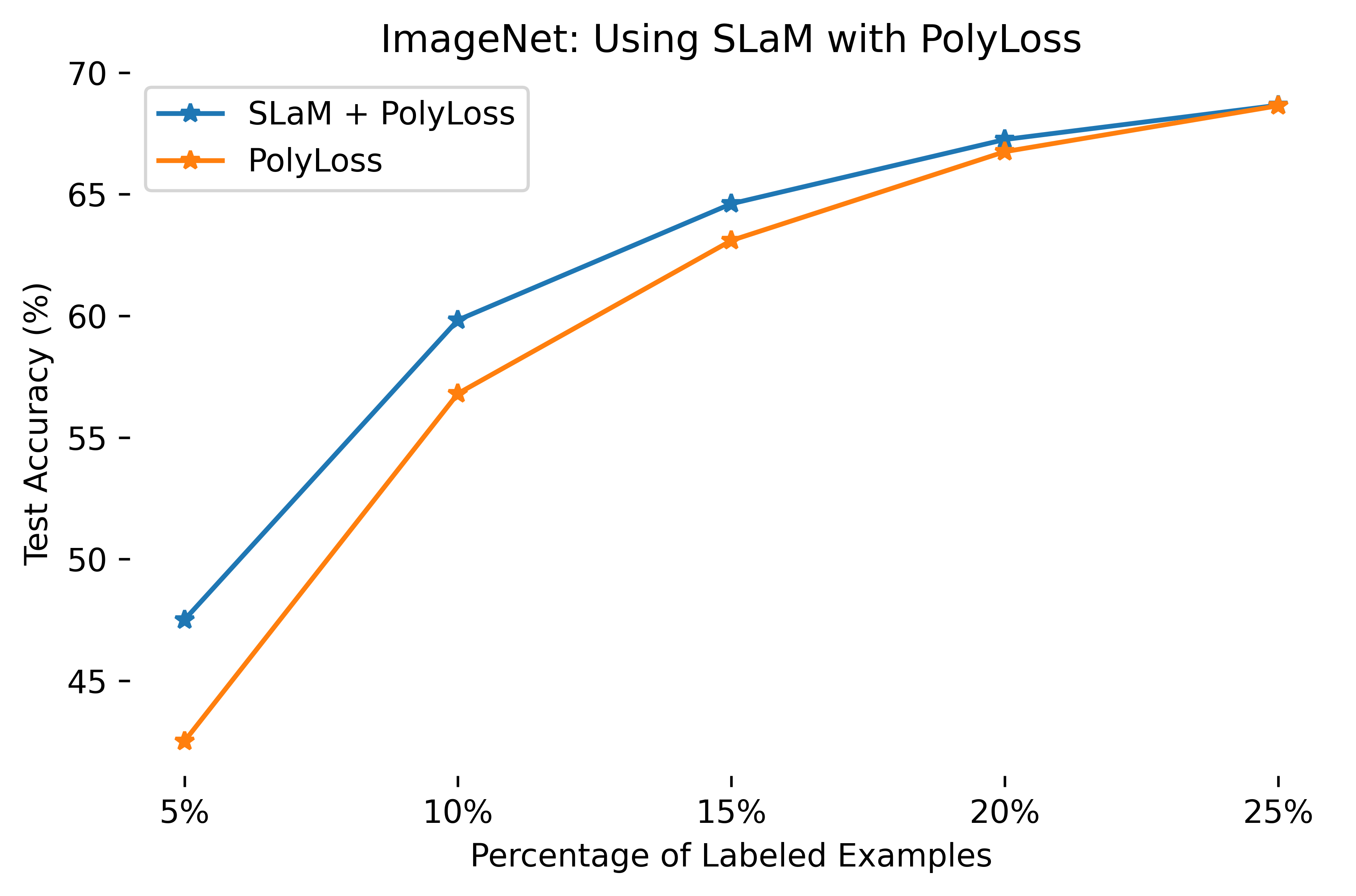} 
\caption{Using SLaM with
PolyLoss~\cite{leng2022polyloss} and Taylor CE~\cite{feng2021can}.
On the x-axis we have the size of the labeled dataset (as a percentage of the whole training dataset) that the teacher model uses for training. See~\Cref{sec:other_losses} for more details.}
\label{fig:SLaM-Taylor-PolyLoss}
\end{figure}

\section{Distilling Linear Models and Learning Noisy Halfspaces}
\label{app:slm-halfspaces}
In this section we state and prove our convergence result for the 
SLaM method when applied to linear models.  Our assumption is that
the ground-truth $g(x)$ corresponds to a halfspace, i.e.,
$g(x)=(\1\{w^\ast \cdot x > 0\}, \1\{w^\ast \cdot x \leq 0\})$ for some
unknown weight vector $w^\ast$. We show that using SLaM with a linear model as the student will recover the ground  truth classifier.  
We make the standard assumption that the ground-truth halfspace has 
$\gamma$-margin, i.e., that $\|w^\ast\|_2 = 1$ and that it holds 
$|w^\ast \cdot x| \geq  \gamma$ for all examples  $x$.
For a fixed example $x$, the observed noisy teacher-label 
$y$ satisfies \Cref{def:noisy-teacher-model}, i.e.,
$y = g(x)$ w.p. $\alpha(x)$ and $y = 1-g(x)$  w.p. $1 - \alpha(x)$
(since $k=2$ for binary classification).
Our approach consists of using the standard cross-entropy loss 
$\mathrm{ce}(p, q)$ and training a student-model consisting of 
a linear layer plus a soft-max activation, i.e.,
\[
f(x;w) = 
(f_0(x;w), f_1(x;w)) = 
\left( \frac{1}{1+e^{-w\cdot x}}, \frac{e^{-w \cdot x}}{1+e^{-w \cdot x}} \right)\,.
\]
Recall, that for binary classification, we define the mixing
operation as
\[
\mix(f(x;w);\alpha(x))
= \alpha(x) f(x;w) + (1-\alpha(x)) (1-f(x;w))
\,.
\]

\begin{algorithm}
  \caption{SLaM for Linear Models}
\begin{algorithmic}

  \STATE Initialiaze weight vector of student $w^{(0)} \gets 0$

  \FOR{$t=1,\ldots, T$}
  \STATE Draw example $x^{(t)} \sim X$. 
  \STATE Label $x^{(t)}$ with (noisy) teacher to obtain $y^{(t)}$ \;
  \STATE
  Compute the gradient of the SLaM loss at 
  $(x^{(t)}, y^{(t)})$:
  \[g^{(t)} \gets \partial_w \mathrm{ce}(y^{(t)}, 
  \mix( f(x^{(t)}); w^{(t-1)}), \alpha(x^{(t)}) ) 
  \mid_{w=w^{(t-1)}}  \]
  \STATE
  Compute step size:
  $\lambda^{(t)} \gets  1/r(f(x^{(t)}; w^{(t-1)}) , 
  \alpha(x^{(t)}) ) $ (see \Cref{lem:slm-gradient} for the 
  definition of $r(\cdot, \cdot)$).
  \STATE
  Update the student model:
 $ w^{(t)} \gets w^{(t-1)} - 
  \lambda^{(t)} ~ g^{(t)} $  
\ENDFOR
\end{algorithmic}

  \label{alg:slm-halfspace}
\end{algorithm}

\begin{theorem}[Student Label Mixing Convergence]
\label{app-thm:slm-halfspaces}
Let $X$ be a distribution on $\R^d$ and $g(x)$    
be the ground-truth halfspace with normal vector $w^\ast \in \R^d$.
Let $D$ be the distribution over (noisy) teacher-labeled examples $(x,y)$ 
whose $x$-marginal is $X$.
We denote by $\alpha(x)$ the probability that the teacher label $y \in [0, 1]^2$ is correct, i.e., 
$\alpha(x) = \pr_{(x,y) \sim D}[\argmax(y) = g(x) \mid x]$.
Assume that there exist $\beta, \gamma > 0$ such that 
for all examples $x$ in the support of $X$ it holds that 
$|w^\ast \cdot x| \geq \gamma$ and $|1/2-\alpha(x)| \leq \beta$.
Let $\epsilon > 0$.
After $T = O(1/(\beta^2 \gamma^2 \epsilon^2) )$ iterations of SLaM (\Cref{alg:slm-halfspace}), with probability at least $99\%$, 
there exists an iteration $t \leq T$ where
$\pr_{x \sim X}[ \err(f(x;w^{(t)} ), g(x)) ] \leq \epsilon$.
\end{theorem}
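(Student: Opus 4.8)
The plan is to run a potential/drift analysis for the stochastic iteration of \Cref{alg:slm-halfspace}, in the spirit of convergence proofs for SGD on margin halfspaces. Because the SLaM loss is a forward loss-adjustment whose landscape is non-convex (see \Cref{fig:landscape}), I cannot appeal to convexity; instead I will show directly that the \emph{normalized} expected gradient always correlates with the target direction $w^\ast$, with a quantitative lower bound driven by the margin $\gamma$ and the noise-signal parameter $\beta$. First I would invoke \Cref{lem:slm-gradient} to write the per-example SLaM gradient in the separable form $g^{(t)} = c(x^{(t)}, y^{(t)})\, x^{(t)}$, where the scalar factors through $\sigma'(w\cdot x)$ and the mixing coefficients. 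The role of the step size $\lambda^{(t)} = 1/r(f(x^{(t)};w^{(t-1)}),\alpha(x^{(t)}))$ is precisely to cancel this scalar, so that the update direction is controlled and the per-step displacement $\|\lambda^{(t)} g^{(t)}\| \le \|x^{(t)}\|$ stays bounded.

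The technical heart is a one-step drift lemma. Conditioning on $x$ and using that the noise model of \Cref{def:noisy-teacher-model} gives $\E[y\mid x] = \mix(g(x);\alpha(x))$, I would compute $\E[g^{(t)}\mid x^{(t)}=x, w^{(t-1)}]$ in closed form. A short sign analysis on $\tau = \sgn(w^\ast\cdot x)$ then shows that the expected correlation $-\E\langle \lambda^{(t)} g^{(t)}, w^\ast\rangle$ is always nonnegative and, more precisely, is bounded below by a quantity of order $\beta\,|w^\ast\cdot x|$ (times the student's probability mass on the incorrect class). Invoking the $\gamma$-margin assumption $|w^\ast\cdot x|\ge\gamma$ and the elementary fact that a misclassified point carries at least a $1/2$ fraction of its mass on the wrong class, this yields a population drift of the form $-\E\langle \lambda^{(t)} g^{(t)}, w^\ast\rangle \gtrsim \beta\gamma\cdot \pr_x[\err(f(x;w^{(t-1)}),g(x))]$.

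With the drift lemma in hand, I would track the potential $\Phi^{(t)} = \|w^{(t)} - \rho\, w^\ast\|^2$ for a scale $\rho = \Theta(1/(\beta\gamma\epsilon))$. Expanding one step gives $\E[\Phi^{(t)} - \Phi^{(t-1)}\mid \mathcal{F}_{t-1}] = 2\rho\,\E\langle \lambda^{(t)} g^{(t)}, w^\ast\rangle - 2\E\langle\lambda^{(t)}g^{(t)}, w^{(t-1)}\rangle + \E\|\lambda^{(t)}g^{(t)}\|^2$. The first term contributes the negative drift $-\Omega(\rho\beta\gamma)\cdot\err^{(t-1)}$, the last term is $O(1)$ by the normalization, and the middle cross term must be shown to be nonpositive (or lower order), which again follows from the closed form of the expected gradient together with the step-size normalization. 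Telescoping from $w^{(0)}=0$ (so $\Phi^{(0)} = \rho^2$) and using $\Phi^{(T)}\ge 0$ bounds the average error by $\tfrac{1}{T}\sum_t \E[\err^{(t-1)}] \lesssim \tfrac{\rho}{T\beta\gamma} + \tfrac{1}{\rho\beta\gamma}$; optimizing over $\rho$ then forces $T = O(1/(\beta^2\gamma^2\epsilon^2))$ to drive this below $\epsilon/100$.

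Finally, to pass from an expectation bound to the high-probability existence of a good iterate, I would apply Markov's inequality to the nonnegative empirical average of $\err^{(t-1)}$ over the random trajectory (or, for a cleaner constant, a bounded-difference/Azuma argument, available since each step moves $w$ by at most $\|x\|$): with probability at least $99\%$ the average of $\err^{(t-1)}$ over $t\le T$ is at most $\epsilon$, so some iterate $t\le T$ satisfies $\pr_x[\err(f(x;w^{(t)}),g(x))]\le\epsilon$. I expect the main obstacle to be the drift lemma of the second paragraph — specifically, simultaneously (i) lower-bounding the expected gradient's alignment with $w^\ast$ proportionally to the $0/1$ error \emph{over the entire non-convex landscape}, and (ii) controlling the cross term $\langle\lambda^{(t)}g^{(t)}, w^{(t-1)}\rangle$ that governs norm growth. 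Both hinge on the exact gradient form from \Cref{lem:slm-gradient} and on the step-size normalization, which is what ultimately substitutes for the missing convexity.
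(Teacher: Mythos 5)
Your proposal is correct in outline and rests on the same two technical pillars as the paper's proof: the gradient formula with step-size cancellation (\Cref{lem:slm-gradient}), and the one-step drift bound $\E[u^{(t)}\mid \cF^{(t-1)}]\cdot w^\ast \geq (\beta\gamma/2)\,\pr_{x\sim X}[\err(f(x;w^{(t-1)}),g(x))]$ obtained from the sign analysis, the $\gamma$-margin, and the observation that a misclassified sigmoid puts mass at least $1/2$ on the wrong class. Where you genuinely diverge is in the final assembly. The paper argues by contradiction: assuming the error exceeds $\epsilon$ at every step $t\leq T$, \Cref{clm:correlation-with-opt} shows via Azuma--Hoeffding that $w^{(T)}\cdot w^\ast = \Omega(\beta\gamma\epsilon)\,T$, while \Cref{clm:bounded-norm} shows via Markov that $\|w^{(T)}\|_2 = O(\sqrt{T})$, and for $T$ a large constant multiple of $1/(\beta^2\gamma^2\epsilon^2)$ these two facts contradict Cauchy--Schwarz. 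You instead run the perceptron-style potential $\|w^{(t)}-\rho w^\ast\|_2^2$ with $\rho = \Theta(1/(\beta\gamma\epsilon))$, telescope, and apply Markov to the trajectory-averaged error; this dispenses with the contradiction hypothesis and the martingale concentration step and yields an average-error bound directly, at the cost of introducing and tuning the scale $\rho$. The two assemblies are interchangeable here and give the same $T = O(1/(\beta^2\gamma^2\epsilon^2))$. One correction to your plan: your hope that the cross term $\E[\langle u^{(t)}, w^{(t-1)}\rangle\mid\cF^{(t-1)}]$ is nonpositive is false in general --- for a correctly classified point with $g_0(x)=1$ and small positive $w^{(t-1)}\cdot x$ its contribution is strictly positive. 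What is true, and what the paper proves inside \Cref{clm:bounded-norm} via the elementary inequality $z e^{-z}\leq 1/e$ applied to the sigmoid, is that this term is at most $1/e$ per step. A constant per step is exactly the ``lower order'' fallback you allowed yourself: summed it contributes $O(T)$, the same order as $\sum_t\|u^{(t)}\|_2^2$, against a drift of $-\Omega(\rho\beta\gamma)\sum_t \err^{(t-1)}$, so your telescoping closes once you carry out that sigmoid computation. In short, the obstacle you flagged as the main difficulty is real but is resolved by the same pointwise bound the paper uses, not by a sign argument.
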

\begin{remark}[High-Probability Result]
We remark that even though our learner succeeds with constant probability (at
least $\%99$) we can amplify its success probability to $1-\delta$ by standard amplification techniques (i.e., by repeating the algorithm $O(\log(1/\delta))$
times and keeping the best result).  To achieve success probability
$1-\delta$ the total sample complexity is $O(\log(1/\delta)/(\epsilon^2 \gamma^2 \beta^2))$.
\end{remark}

\begin{proof}

We first provide simplified expressions for the gradient of the SLaM objective
and the update vectors $\lambda^{(t)} g^{(t)}$ used in \Cref{alg:slm-halfspace}.
In what follows we remark that for any binary classification model $f(x;w) = (f_0(x;w), f_1(x;w))$ we 
have the following identities:
(i) $(\mix(f(x;w); \alpha(x)))_0 = \mix(f_0(x;w); \alpha(x))$,
where to simplify notation we overload the mixing operation
to also act on the scalar $f_0(x;w)$, i.e.,
$\mix(f_0(x;w); \alpha(x)) = 
\alpha(x) f_0(x;w) + (1-\alpha(x)) (1-f_0(x;w))$;
and  (ii) $f_1(x;w) = 1- f_0(x;w)$.

\begin{lemma}[SLaM Gradient]
\label{lem:slm-gradient}
The gradient of the SLaM objective is equal to 
\[
\partial_{w} \mathrm{ce}(y, \mathrm{mix}(f(x;w); \alpha(x)) 
=  r(f_0(x;w); \alpha(x)) ~ \sgn(2 \alpha(x) - 1) ~ ((\mix(f_0(x;w); \alpha(x)) - y_0) x,
\]
where  \[r(f(x;w); \alpha(x)) = 
\frac{f_0(x;w) (1- f_0(x;w))}{\mix(f_0(x;w); \alpha(x)) (1- \mix(f_0(x;w), \alpha(x) ) ) }
~ |2 \alpha(x) - 1|
\]
Let $L(x;w) = 
\E_{(x,y) \sim D }[\mathrm{ce}(y, \mathrm{mix}(f(x;w), \alpha(x))  \mid x]$ be the 
expected student label mixing loss conditional on some example $x \in \R^d$.
It holds 
\(
\partial_{w} L(x;w)
= 
r(f(x;w), \alpha(x)) ~ |2 \alpha(x) - 1| ~ (f_0(x;w) - g_0(x)) ~ x \,.
\)
\end{lemma}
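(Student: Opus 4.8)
The plan is to obtain both gradients by a direct application of the chain rule, exploiting the fact that the binary mixing operation is governed by a single scalar. First I would record the simplification of the mixing operator: writing $f_0 = f_0(x;w)$ and using $1 - f(x;w) = (f_1, f_0) = (1-f_0, f_0)$, the first coordinate of the mixed prediction is $m_0 := \mix(f_0(x;w); \alpha(x)) = \alpha(x) f_0 + (1-\alpha(x))(1-f_0)$, while the second coordinate is simply $1 - m_0$. Thus the mixed label is a genuine two-point distribution parametrized by $m_0$ alone, and the cross-entropy reduces to $\mathrm{ce}(y, \mix(f;\alpha)) = -y_0 \log m_0 - (1-y_0)\log(1-m_0)$, using that the teacher label $y$ is one-hot so $y_1 = 1-y_0$.

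For the first claim I would apply the chain rule $\partial_w \mathrm{ce} = (\partial \mathrm{ce}/\partial m_0)(\partial m_0/\partial f_0)(\partial_w f_0)$ and evaluate the three factors separately. The loss derivative is $\partial \mathrm{ce}/\partial m_0 = (m_0 - y_0)/(m_0(1-m_0))$; the mixing derivative is the constant $\partial m_0/\partial f_0 = \alpha(x) - (1-\alpha(x)) = 2\alpha(x)-1$; and since $f_0 = \sigma(w\cdot x)$ is the sigmoid, $\partial_w f_0 = f_0(1-f_0)\, x$. Multiplying gives $\partial_w \mathrm{ce} = \frac{f_0(1-f_0)}{m_0(1-m_0)}(2\alpha(x)-1)(m_0-y_0)\,x$. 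Writing $2\alpha(x)-1 = \sgn(2\alpha(x)-1)\,|2\alpha(x)-1|$ and folding $|2\alpha(x)-1|$ together with $f_0(1-f_0)/(m_0(1-m_0))$ into the quantity $r$ yields exactly the claimed per-example expression.

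For the expected-gradient claim I would take the expectation of the per-example gradient over the teacher label $y$, noting that the only $y$-dependence is the linear term $m_0 - y_0$. By \Cref{def:noisy-teacher-model} in the binary case, $\E[y_0 \mid x] = \mix(g_0(x); \alpha(x)) =: \mu_0$, so the expectation amounts to replacing $y_0$ with $\mu_0$. The one remaining step is the algebraic identity $m_0 - \mu_0 = (2\alpha(x)-1)(f_0 - g_0(x))$, which follows by direct subtraction since $m_0$ and $\mu_0$ have the same affine form in $f_0$ resp. $g_0(x)$. Substituting this and combining the new factor of $(2\alpha(x)-1)$ with the existing one (so that $(2\alpha(x)-1)^2 = |2\alpha(x)-1|^2$) produces $r(f;\alpha)\,|2\alpha(x)-1|\,(f_0 - g_0)\,x$, as claimed.

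There is no genuine obstacle here; the statement is a routine differentiation, and the only place to be careful is the bookkeeping of the sign. One must track that $2\alpha(x)-1$ can be negative (when the teacher is anti-correlated, $\alpha(x) < 1/2$), which is precisely why the clean form carries $\sgn(2\alpha(x)-1)$ in the per-example gradient and an even power $|2\alpha(x)-1|^2$ in the expectation; conflating $|2\alpha(x)-1|$ with $2\alpha(x)-1$ is the easy way to get the constant wrong. This sign structure is exactly what the downstream convergence analysis relies on, since the expected update carries a nonnegative multiple of $(f_0 - g_0(x))$ regardless of the sign of $2\alpha(x)-1$.
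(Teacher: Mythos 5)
Your proof is correct and follows essentially the same route as the paper's: the same chain-rule factorization $(\partial\,\mathrm{ce}/\partial m_0)\,(\partial m_0/\partial f_0)\,(\partial_w f_0)$ with the sigmoid derivative, the sign split $2\alpha(x)-1=\sgn(2\alpha(x)-1)\,|2\alpha(x)-1|$ folded into $r$, and for the expectation the identity $\E[y_0\mid x]=\mix(g_0(x);\alpha(x))$ together with $\mix(f_0;\alpha)-\mix(g_0;\alpha)=(2\alpha(x)-1)(f_0-g_0)$. The only cosmetic difference is that you reduce the cross-entropy to its scalar binary form in $m_0$ up front, whereas the paper writes both coordinates and simplifies via $y_1=1-y_0$ and $\mix(f_1;\alpha)=1-\mix(f_0;\alpha)$.
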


\begin{proof}
We first show the formula 
\begin{equation}
\label{eq:SLaM-gradient-y-formula}
\partial_{w} \mathrm{ce}(y, \mathrm{mix}(f(x;w), \alpha(x)) 
= 
r(f_0(x;w), \alpha(x)) ~ \sgn(2 \alpha(x) - 1) ~ ((\mix(f_0(x;w), \alpha(x)) - y_0) x\,.
\end{equation}
Using the chain rule, we obtain
\begin{align*}
\partial_{w} \mathrm{ce}(y, &\mathrm{mix}(f(x;w); \alpha(x)) 
= 
\\
&-\frac{y_0}{\mix(f_0(x;w), \alpha(x))} 
\partial_w( \mix (f_0(x;w); \alpha(x))
\\
&- 
\frac{y_1}{\mix(f_1(x;w), \alpha(x))} 
\partial_w( \mix (f_1(x;w); \alpha(x))
\,.
\end{align*}
Now we observe that that for binary classification,
it holds that $y_1 = 1-y_0$, 
$\mix(f_1(x;w);\alpha(x)) = 
1-  \mix(f_0(x;w);\alpha(x))$, 
and therefore, also
$\partial_w \mix(f(x;w); \alpha(x))_1)
= - \partial_w \mix(f(x;w); \alpha(x))_0) $
to obtain the simplified
expression:
\begin{align*}
\partial_{w} \mathrm{ce}(y, &\mathrm{mix}(f(x;w); \alpha(x)) 
= 
\\
&-\frac{y_0}{\mix(f_0(x;w), \alpha(x))} 
\partial_w( \mix (f_0(x;w); \alpha(x))
\\
&+ 
\frac{1-y_0}{1-\mix(f_0(x;w), \alpha(x))} 
\partial_w( \mix (f_0(x;w); \alpha(x))
\,.
\end{align*}
Further simplifying the above expression, we obtain:
\begin{align*}
&\partial_{w} \mathrm{ce}(y, \mathrm{mix}(f(x;w); \alpha(x)) =
\\
&= 
\frac{\mix(f_0(x;w), \alpha(x)) - y_0} 
{\mix(f_0(x;w), \alpha(x)) ~ (1-\mix(f_0(x;w), \alpha(x) ) )}
\partial_w( \mix (f_0(x;w); \alpha(x))
\,.
\end{align*}
Using again the chain rule we obtain that
\[
\partial_w( \mix (f_0(x;w); \alpha(x))
= \alpha(x)  \partial_w(f_0(x;w)) + (1-\alpha(x))
\partial_w (1-f_0(x;w))
= (2 \alpha(x) - 1) ~ \partial_w f_0(x;w)
\,.
\]
Using the fact that the derivative of the sigmoid function $r(t) = 1/(1+ e^{-t})$, is $r'(t) = e^{-t}/(1-e^{-t})^2
= r(t) (1-r(t)) $,
and the chain rule, we obtain that 
$\partial_w f_0(x;w) = f_0(x;w) (1-f_0(x;w)) x $.
Putting everything together we obtain the claimed formula
for $\partial_w \mathrm{ce}(y, \mix(f(x;w); \alpha(x)) )$.

To obtain the gradient formula for the expected loss
conditional on some fixed example $x$, we can 
use the fact that 
$\partial_w \E[\mathrm{ce}(y, \mix(f(x;w);\alpha(x))) \mid x]
=
\E[\partial_w \mathrm{ce}(y, \mix(f(x;w);\alpha(x)) ) \mid x].
$
Now using the formula of \Cref{eq:SLaM-gradient-y-formula}
and the fact that $\E[y_0 \mid x] = \mix(g_0(x); \alpha(x))$
by the definition of our noise model, we obtain that
\begin{align*}
\partial_w L(x;w) &=
r(f_0(x;w);\alpha(x)) \sgn(2\alpha(x) - 1) 
(\mix(f_0(x;w);\alpha(x)) - \mix(g_0(x);\alpha(x)))
\\
&= r(f_0(x;w);\alpha(x)) (2\alpha(x) - 1)  (f_0(x;w) - g_0(x))
\end{align*}

\end{proof}

We first show the following claim proving that after
roughly 
$T = 1/(\beta^2 \gamma^2 \epsilon^2)$ gradient iterations 
the student parameter vector $w^{(t)}$ will have 
good correlation with the ground-truth vector $w^\ast$.
\begin{claim}
\label{clm:correlation-with-opt}
Fix any $T$ larger than a sufficiently large constant multiple of $\log(1/\delta) / (\epsilon^2 \gamma^2 \beta^2)$,
and assume that for all $t \leq T$ it holds that 
$\pr_{x \sim X}[ \err(f(x; w^{(t)}), g(x))] > \epsilon$.
Then, we have 
\( w^{(T)} \cdot w^\ast = \Omega (\beta \gamma \epsilon) ~ T \),
with probability at least $1-\delta$.
\end{claim}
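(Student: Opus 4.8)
The plan is to treat $w^{(t)}\cdot w^\ast$ as a stochastic process and show it has a strictly positive drift of order $\beta\gamma\epsilon$ at every step so long as the error is still large, then upgrade the expected growth to a high-probability statement via a martingale concentration bound. First I would simplify the update rule: by the choice $\lambda^{(t)} = 1/r(f(x^{(t)};w^{(t-1)}),\alpha(x^{(t)}))$ together with the gradient formula of \Cref{lem:slm-gradient}, the scaling factor $r(\cdot)$ cancels exactly and the update collapses to
\[
w^{(t)} = w^{(t-1)} - \sgn(2\alpha(x^{(t)})-1)\,\big(\mix(f_0(x^{(t)};w^{(t-1)});\alpha(x^{(t)})) - y_0^{(t)}\big)\,x^{(t)}.
\]
Taking the inner product with $w^\ast$ and telescoping from $w^{(0)}=0$ gives $w^{(T)}\cdot w^\ast = \sum_{t=1}^T Z_t$, where $Z_t := w^{(t)}\cdot w^\ast - w^{(t-1)}\cdot w^\ast$ is the per-step increment.

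Next I would compute the conditional drift $\mu_t := \E[Z_t \mid \mathcal F_{t-1}]$. Conditioning on $w^{(t-1)}$ and $x^{(t)}$ and using $\E[y_0 \mid x]=\mix(g_0(x);\alpha(x))$ from the noise model, together with the identity $\mix(f_0;\alpha)-\mix(g_0;\alpha) = (2\alpha-1)(f_0-g_0)$, the $\sgn$ absorbs the signed factor and yields
\[
\E[Z_t \mid \mathcal F_{t-1},x^{(t)}] = |2\alpha(x)-1|\,\big(-(f_0-g_0)(x\cdot w^\ast)\big).
\]
The crucial observation is a sign alignment: since $f_0=1/(1+e^{-w^{(t-1)}\cdot x})$ and $g_0=\1\{w^\ast\cdot x>0\}$, the product $(f_0-g_0)(x\cdot w^\ast)$ is nonpositive for \emph{every} $x$, so each conditional term equals $|2\alpha(x)-1|\,|f_0-g_0|\,|x\cdot w^\ast|\ge 0$. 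Hence $\mu_t\ge 0$ unconditionally, which is exactly what lets me lower-bound $\mu_t$ by restricting the expectation to the error region $\{\err(f(x;w^{(t-1)}),g(x))=1\}$ and discarding the correctly-classified mass.

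On that error region the student's top-$1$ prediction lies on the wrong side of the threshold, so $f_0$ and $g_0\in\{0,1\}$ sit on opposite sides of $1/2$, giving $|f_0-g_0|\ge 1/2$; the $\gamma$-margin assumption gives $|x\cdot w^\ast|\ge\gamma$; and the hypothesis on the teacher's bias (equivalently $|2\alpha(x)-1|\ge 2\beta$) bounds the signal below. Combining with the standing assumption $\pr_x[\err(f(x;w^{(t-1)}),g(x))]>\epsilon$, I obtain $\mu_t \ge \gamma\cdot 2\beta\cdot\tfrac12\cdot\epsilon = \beta\gamma\epsilon$, so $\sum_{t\le T}\E[Z_t\mid\mathcal F_{t-1}]\ge \beta\gamma\epsilon\,T$. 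Finally, since $|Z_t|\le |x\cdot w^\ast|\le \|x\|\,\|w^\ast\|\le 1$ (using $\|x\|\le 1$ and $\|w^\ast\|=1$), the differences $Z_t-\mu_t$ form a bounded martingale difference sequence; Azuma--Hoeffding with deviation $s=\tfrac12\beta\gamma\epsilon\,T$ gives $\pr[\,w^{(T)}\cdot w^\ast \le \tfrac12\beta\gamma\epsilon\,T\,]\le \exp(-\Omega(\beta^2\gamma^2\epsilon^2 T))$, which is at most $\delta$ once $T$ exceeds a sufficiently large constant multiple of $\log(1/\delta)/(\beta^2\gamma^2\epsilon^2)$. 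This is precisely the asserted $w^{(T)}\cdot w^\ast = \Omega(\beta\gamma\epsilon)\,T$.

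The main obstacle is the sign bookkeeping in the drift computation: establishing that the conditional increment is manifestly nonnegative for all $x$ rather than just on the error set, since that nonnegativity is what justifies collapsing the drift onto the $\epsilon$-fraction of errors. A secondary technical point is controlling the increments so Azuma applies, which requires the norm of the examples to be bounded (the standard normalization $\|x\|\le 1$ accompanying the $\gamma$-margin assumption); without it the per-step variance is unbounded and the concentration step fails.
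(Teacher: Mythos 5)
Your proposal is correct and follows essentially the same route as the paper's proof: the same cancellation of the $r(\cdot)$ factor via the step-size choice, the same drift computation using $\E[y_0 \mid x] = \mix(g_0(x);\alpha(x))$ and the sign-alignment $(g_0 - f_0)\,\sgn(x\cdot w^\ast) = |g_0 - f_0|$, the same restriction to the error region with $|f_0 - g_0|\ge 1/2$, and the same Azuma--Hoeffding concentration (your increments $Z_t - \mu_t$ are exactly the paper's martingale $q^{(T)}\cdot w^\ast$ written termwise). Your closing remarks are also consistent with the paper, which likewise relies implicitly on $\|x\|_2 \le 1$ and reads the theorem's hypothesis on $\alpha(x)$ as a lower bound on $|2\alpha(x)-1|$.
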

\begin{proof}
Denote by $u^{(t)} = -\lambda^{(t)} g^{(t)}$ the update vector
used in \Cref{alg:slm-halfspace}.
We observe that the weight vector at round $T$ is equal to
$w^{(T)} = \sum_{t=1}^T u^{(t)}$.
In what follows we denote by $\cF^{(t)}$ the filtration corresponding to the 
randomness of the updates of \Cref{alg:slm-halfspace}.
We define the martingale 
$q^{(T)} = \sum_{t=1}^T (\E[u^{(t)} \mid \cF^{(t-1)}] - u^{(t)} )$
with $q^{(0)} = 0$. We first show that under the assumption that 
$\pr_{x \sim X}[ \argmax(f(x; w^{(t)})) \neq g(x)] > \epsilon$, for all $t \leq T$, 
it holds that 
$\sum_{t=1}^T \E[u^{(t)} \mid \cF^{(t-1)}] \cdot w^\ast 
\geq (\epsilon \gamma \beta /2) ~ T$. 
Using the SLaM gradient expression of \Cref{lem:slm-gradient} and 
the definition of the step size $\lambda^{(t)}$ we obtain that
$\E[u^{(t)} \mid \cF^{(t-1)}] = 
\E_{x \sim X}[|2\alpha(x) - 1| ~ (g_0(x) - f_0(x;w^{(t-1)})) ~ x] $.
Take any step $t$. We have that 
\begin{align*}
\E[u^{(t)} \mid \cF^{(t-1)}] \cdot w^\ast 
&= \E_{x \sim X}[|2\alpha(x) - 1| ~ (g_0(x) - f_0(x;w^{(t-1)})) ~ (x \cdot w^\ast)] 
\\
&=
\E_{x \sim X}[
|2 \alpha(x) - 1| ~ 
|g_0(x) - f_0(x;w^{(t-1)}) ~ |x \cdot w^\ast|] \,,
\end{align*}
where we used the fact that $ (g_0(x) - f_0(x;w^{(t-1)})) ~ \sgn(x \cdot w^\ast) = 
|g_0(x) - f_0(x;w^{(t-1)})|$.
Now, using the $\gamma$-margin assumption of the distribution $D$ and the fact that $|2\alpha(x) - 1| \geq \beta$ 
we obtain
\begin{align*}
\E[u^{(t)} \mid \cF^{(t-1)}] \cdot w^\ast 
&\geq \beta \gamma ~ \E_{x \sim X}[|g_0(x) - f_0(x;w^{(t-1)})|] 
\\
&\geq \beta \gamma ~ \E_{x \sim X}[|g_0(x) - f_0(x;w^{(t-1)})| 
~ \err(g(x),  f(x;w^{(t-1)})) ] 
\\
&\geq (\beta \gamma/ 2) ~
\pr_{x \sim X}[\err(g(x), f(x;w^{(t-1)}))] 
\geq \beta \gamma \epsilon /2
\,,
\end{align*}
where for the penultimate inequality we used the fact that when 
$g(x)$ and $f(x;w^{(t-1)})$ disagree it holds that 
$|g_0(x) - f_0(x;w^{(t-1)})| \geq 1/2$.  Take, for example, the case where
$g_0(x) = 1$. Then $f_0(x;w^{(t-1)})$ must be smaller than $1/2$ otherwise
the prediction of the model $\argmax f(x;w^{(t-1)})$ would also be $0$ 
(and would agree with the prediction of $g(x)$).
Finally, for the last inequality we used the fact that, by our assumption,
it holds that $ \pr_{x \sim X}[\err(g(x), f(x;w^{(t-1)}))]  \geq \epsilon$.
Therefore, we conclude that 
$\sum_{t=1}^T \E[u^{(t)} \mid \cF^{(t-1)}] \cdot w^\ast 
\geq (\epsilon \gamma \beta /2) ~ T$. 
Next, we shall show that 
$w^{(T)}$ also achieves good correlation with the optimal direction
$w^\ast$ with high probability.  We will use the fact that $q^{(t)}$ is 
a martingale and the Azuma-Hoeffding inequality to show that
$w^{(T)} \cdot w^\ast$ will not be very far from its expectation.
\begin{lemma}[Azuma-Hoeffding]
\label{lem:azuma}
Let $\xi^{(t)}$ be a martingale with bounded increments, i.e.,
$|\xi^{(t)} - \xi^{(t-1)}| \leq M$.
It holds that $\pr[\xi^{(T)} \geq \xi^{(0)} + \lambda] 
\leq e^{-\lambda^2/(2 M^2 T)}$.
\end{lemma}
Recall that from \Cref{lem:slm-gradient} 
we have that 
$\E[u^{(t)} \mid \cF^{(t-1)}] = 
\E_{x \sim X}[|2\alpha(x) - 1| ~ (g_0(x) - f_0(x;w^{(t-1)})) ~ x] $
and 
\[
u^{(t)} =  \sgn(2 \alpha(x^{(t)}) - 1) ~ (y_0^{(t)} - 
\mix(f_0(x^{(t)}; w^{(t-1)}), \alpha(x^{(t)})  )  ~ x^{(t)}
\,.
\]
Observe that since $\|x\|_2 \leq 1$ for all $x$ it holds  
that $\|u^{(t)}\|_2 \leq 1$.  Therefore,
the difference $\|\E[u^{(t)} \mid \cF^{(t-1)}] - u^{(t)}\| \leq 2$ with probability $1$.  Since $\| w^{\ast}\|_2 = 1$, using Cauchy-Schwarz, we also
obtain that 
$|\E[u^{(t)} \cdot w^\ast \mid \cF^{(t-1)}] - u^{(t)} \cdot w^{\ast} | \leq 2$.

Using \Cref{lem:azuma}, and the fact that $q^{(0)} = 0$ 
we obtain that 
\(
\pr[q^{(t)} \cdot w^\ast \geq (\beta \gamma \epsilon/4) ~ T] 
\leq e^{- \beta^2 \gamma^2 \epsilon^2 T / 128 } \,.
\)
Therefore we conclude that for any $T$ larger than $128 \log(1/\delta)/(\beta^2 \gamma^2 \epsilon^2)$, with probability
at least $1-\delta$, it holds that 
$q^{(T)} \cdot w^\ast \geq (\beta \gamma \epsilon/4) T$
or equivalently $w^{(T)} \cdot w^\ast \geq  (\beta \gamma \epsilon/4) ~ T$, where we used our previously obtained bound for the expected updates $\sum_{t=1}^T \E[u^{(t)} \mid \cF^{(t-1)}] \cdot w^\ast 
\geq (\beta \gamma \epsilon / 2) ~ T$.

\end{proof}

\begin{claim}
\label{clm:bounded-norm}
Fix any $T \geq 1$. 
Then, we have \( \|w^{(T)}\|_2 = O(\sqrt{T}) \), with probability at least $99\%$.
\end{claim}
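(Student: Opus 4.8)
The plan is to control the second moment $\E[\|w^{(T)}\|_2^2]$ by a drift argument and then invoke Markov's inequality. Writing the update as $u^{(t)} = w^{(t)} - w^{(t-1)}$, I would start from the identity $\|w^{(t)}\|_2^2 = \|w^{(t-1)}\|_2^2 + 2\, w^{(t-1)}\cdot u^{(t)} + \|u^{(t)}\|_2^2$. Using the explicit form of the update from \Cref{lem:slm-gradient}, namely $u^{(t)} = \sgn(2\alpha(x^{(t)})-1)\,(y_0^{(t)} - \mix(f_0(x^{(t)};w^{(t-1)});\alpha(x^{(t)})))\,x^{(t)}$, together with the facts $y_0^{(t)}\in\{0,1\}$, $\mix(f_0;\alpha)\in[0,1]$, and $\|x\|_2\le 1$, one gets $\|u^{(t)}\|_2 \le 1$ deterministically, hence $\E[\|u^{(t)}\|_2^2 \mid \cF^{(t-1)}]\le 1$. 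Taking conditional expectations in the identity, the whole problem reduces to bounding the drift term $w^{(t-1)}\cdot \mu^{(t)}$, where $\mu^{(t)} := \E[u^{(t)}\mid\cF^{(t-1)}] = \E_{x\sim X}[\,|2\alpha(x)-1|\,(g_0(x) - f_0(x;w^{(t-1)}))\,x\,]$ is the conditional mean update computed in \Cref{lem:slm-gradient}.

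The crux is that $w^{(t-1)}\cdot\mu^{(t)}$ is bounded by a universal constant, uniformly in the possibly large norm $\|w^{(t-1)}\|_2$. Writing $s = w^{(t-1)}\cdot x$ so that $f_0(x;w^{(t-1)}) = \sigma(s) := 1/(1+e^{-s})$, the integrand is $|2\alpha(x)-1|\,(g_0(x)-\sigma(s))\,s$, and I would establish the pointwise inequality $(g_0 - \sigma(s))\,s \le 1/e$ for every $s\in\R$ and every $g_0\in\{0,1\}$. For $g_0 = 1$ this reads $(1-\sigma(s))\,s = \sigma(-s)\,s$, which is non-positive for $s\le 0$ and at most $s\,e^{-s}\le 1/e$ for $s>0$ (using $1-\sigma(s)\le e^{-s}$); the case $g_0=0$ follows from the symmetry $\sigma(s) = 1-\sigma(-s)$. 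Since $|2\alpha(x)-1|\le 1$, integrating yields $w^{(t-1)}\cdot\mu^{(t)} \le 1/e$. Intuitively, when $\|w^{(t-1)}\|_2$ is large the sigmoid saturates to the hard prediction of the current model, so the only non-negligible contribution to $\mu^{(t)}$ comes from the region where the model disagrees with the ground truth, and there $(g_0-f_0)$ and $w^{(t-1)}\cdot x$ carry opposite signs, making the drift negative.

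Combining the two bounds gives $\E[\|w^{(t)}\|_2^2 \mid \cF^{(t-1)}] \le \|w^{(t-1)}\|_2^2 + 2/e + 1 \le \|w^{(t-1)}\|_2^2 + 2$. Since $w^{(0)} = 0$, unrolling over $t=1,\dots,T$ and taking total expectations yields $\E[\|w^{(T)}\|_2^2] \le 2T$, and Markov's inequality then gives $\pr[\|w^{(T)}\|_2^2 \ge 200\,T]\le 1/100$, i.e.\ $\|w^{(T)}\|_2 \le \sqrt{200\,T} = O(\sqrt{T})$ with probability at least $99\%$, as claimed. I expect the main obstacle to be precisely the drift estimate of the second paragraph: the naive hope that $w^{(t-1)}\cdot\mu^{(t)}\le 0$ is false, because $(g_0-\sigma(s))\,s$ is positive exactly on the region where the current model already agrees with the ground truth. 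One must therefore exploit the saturation of $\sigma$ to trade the potentially unbounded factor $\|w^{(t-1)}\|_2$ against the exponentially small factor $1-\sigma(s)$ on the agreement region, which is what produces the constant bound $1/e$.
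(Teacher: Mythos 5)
Your proposal is correct and follows essentially the same route as the paper's proof: the same squared-norm expansion, the same deterministic bound $\|u^{(t)}\|_2 \le 1$, the same pointwise estimate $(g_0 - \sigma(s))\,s \le 1/e$ via $z e^{-z}\le 1/e$ to control the drift term, and the same conclusion by the tower rule and Markov's inequality. The only (cosmetic) difference is that you take conditional expectations step by step in supermartingale style, whereas the paper unrolls the full sum before taking expectations.
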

\begin{proof}
We have that 
\(
\|w^{(T)}\|_2^2 = 
    \|w^{(T-1)}\|_2^2 + 2 u^{(T)} \cdot w^{(T-1)} + \| u^{(T)} \|_2^2 
\). Unrolling the iteration, we obtain that
\begin{equation}\label{eq:norm-decomposition}
\|w^{(T)}\|_2^2 = 
2 \sum_{t=1}^T u^{(t)} \cdot w^{(t-1)}
+ \sum_{t=1}^T \|u^{(t)}\|_2^2
\leq 
2 \sum_{t=1}^T u^{(t)} \cdot w^{(t-1)}
+ T \,,
\end{equation}
where we used the fact that, since $\|x^{(t)} \|_2 \leq 1$, it holds that $\|u^{(t)}\|_2 \leq 1$ (see the proof of \Cref{clm:correlation-with-opt}).
We first show that 
$ \sum_{t=1}^T \E[u^{(t)} \mid \cF^{(t-1)}] \cdot w^{(t-1)} 
= O(T) $.  
We have 
\begin{align*}
\E[u^{(t)} \mid \cF^{(t-1)}] \cdot w^{(t-1)} 
&= 
\E_{x \sim X}[|2\alpha(x) - 1| ~ (g_0(x) - f_0(x;w^{(t-1)})) ~ (x \cdot w^{(t-1)} )] 
\\
&\leq 
\E_{x \sim X}[(g_0(x) - f_0(x;w^{(t-1)})) ~ (x \cdot w^{(t-1)} )] 
\,.
\end{align*}
We will show that for $x$ it holds that
\[
g_0(x) - f(x;w^{(t-1)}) (x \cdot w^{(t-1)}) \leq \frac{1}{e} \,.
\]
Fix some $x$ and let $s = w^{(t-1)} \cdot x$. 
Assume first that $g_0(x) = 1$.
 Then, we have 
\[
g_0(x) - f(x;w^{(t-1)}) (x \cdot w^{(t-1)}) 
= \left(1 - \frac{1}{1 + e^{-s}}\right) s
=  s ~ \frac{e^{-s}}{1+ e^{-s}}
\leq \frac{1}{e}
\,,
\]
where we used the fact that  $ s ~ \frac{e^{-s}}{1+ e^{-s}} \leq 0 $ for $s \leq 0$
and  $s ~ \frac{e^{-s}}{1+ e^{-s}} \leq s e^{-s} \leq 1/e$
for $s \geq 0$ (using the elementary inequality $z e^{-z} \leq 1/e$
for all $z \in \R$).
When $g_0(x) = 0$ we similarly have that 
\[
g_0(x) - f(x;w^{(t-1)}) (x \cdot w^{(t-1)}) 
=  - \frac{s}{1+ e^{-s}}
\leq \frac{1}{e}
\,,
\]
where we used the fact that when $s \geq 0$ it holds that 
$ - \frac{s}{1+ e^{-s}} \leq 0$ and when $s \leq 0$,
$- \frac{s}{1+ e^{-s}} \leq -s /e^{-s} = -s e^{s}$.
For the final inequality,  we used again the inequality 
$z e^{-z} \leq 1/e$ for all $z \in \R$ (where we replaced $z$ with $-s$).

Therefore, we obtain that 
$ \E[u^{(t)} \mid \cF^{(t-1)}] \cdot w^{(t-1)} \leq 1/e$ and 
$ \sum_{t=1}^T \E[u^{(t)} \mid \cF^{(t-1)}] \cdot w^{(t-1)} \leq T/e$.
Using the decomposition of \Cref{eq:norm-decomposition}, 
linearity of expectation, and  the tower rule for conditional expectations, 
we conclude that  $\E[\|w^{(T)}\|_2^2] \leq (2/e + 1) T$. 
Using Markov's inequality we obtain that
with probability at least 99\% it holds that $\|w^{(T)}\|_2^2 = O(T)$
or equivalently $\|w^{(T)}\|_2 = O(\sqrt{T})$.

\end{proof}
We can now finish the proof of \Cref{thm:slm-halfspaces}.
Assume, in order to reach a contradiction, that for all $t\leq T$ it holds
that $\pr_{x \sim X}[\err(f(x;w^{(t)}), g(x))] > \epsilon$.
Now picking $T$ to be larger than a sufficiently large
constant multiple of $1/(\epsilon^2 \gamma^2 \beta^2)$ and using \Cref{clm:correlation-with-opt} and \Cref{clm:bounded-norm} we obtain
that, with probability at least $99\%$, it holds that 
$w^{(T)} \cdot w^\ast/ \|w^{(T)}\|_2 \geq \Omega(\beta \gamma \epsilon \sqrt{T})$, which can be made to be larger than $1$ by our choice of $T$.
However, this is a contradiction as by Cauchy-Schwarz 
we have  $w^{(T)} \cdot w^\ast/ \|w^{(T)}\|_2 \leq \|w^\ast\|_2 \leq 1$.
Therefore, with probability at least $99\%$, it must be that for some
$t \leq T$ it holds that $\pr_{x \sim X}[ \err(f(x;w^{(t)}), g(x))] \leq \epsilon$.

\end{proof}

\begin{figure*}
\begin{center}
\includegraphics[scale=0.3]{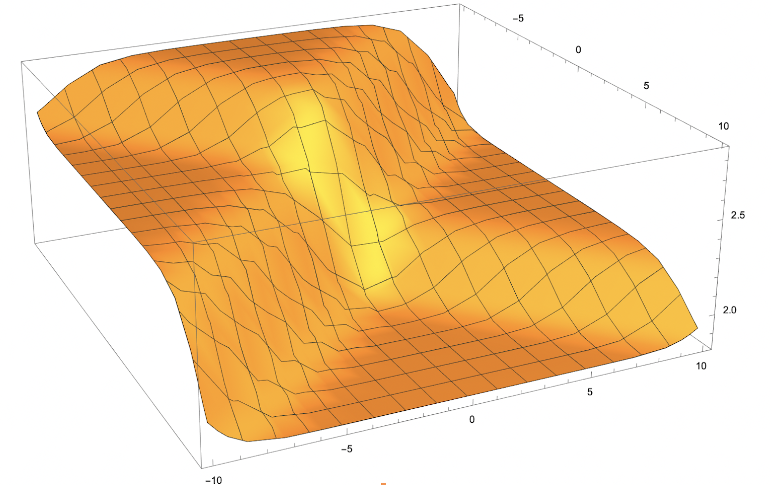}\includegraphics[scale=0.26]{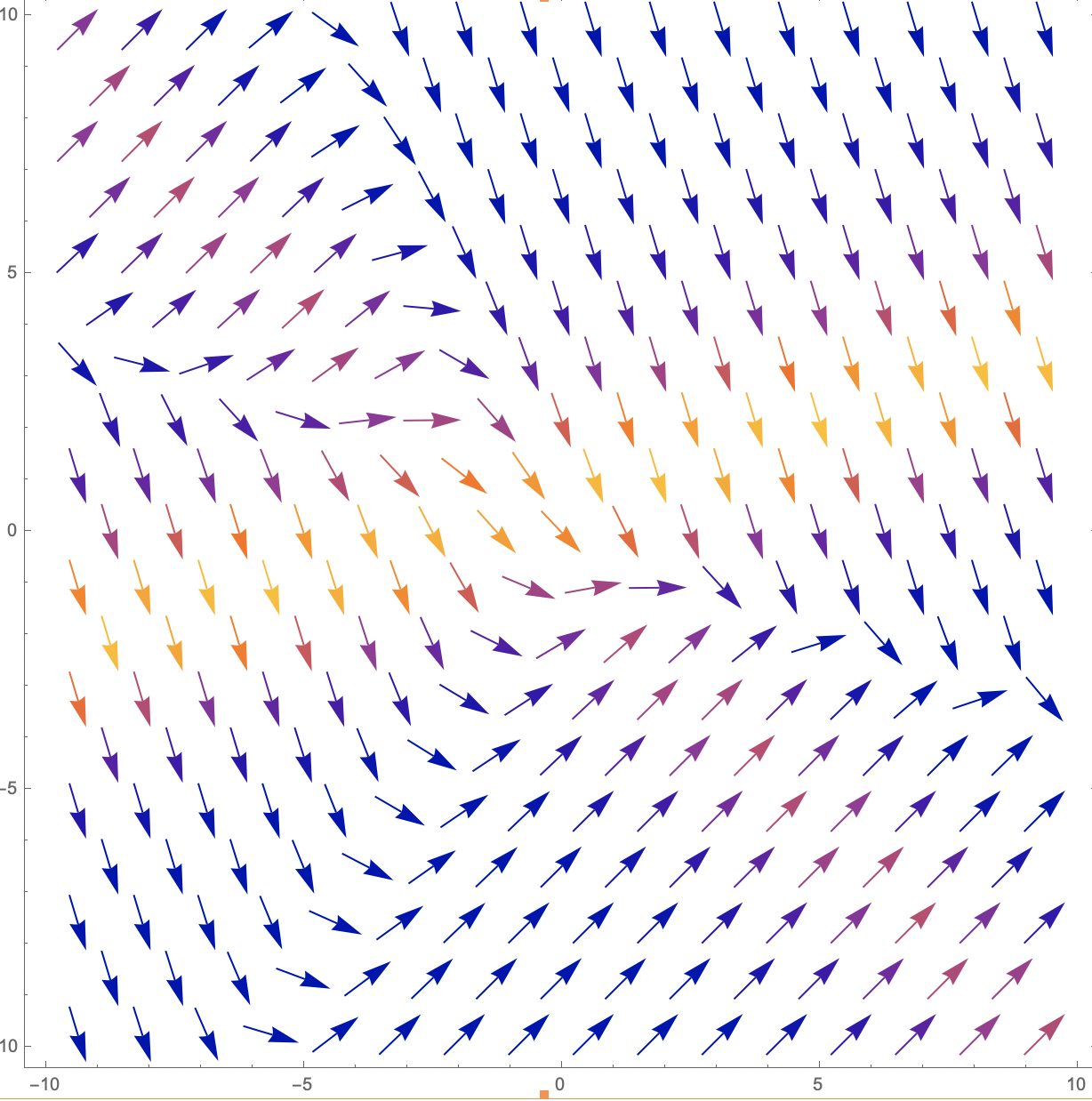}
\end{center}
\caption{The landscape and gradient field of the population 
student label
mixing loss for a simple 2 dimensional feature problem with 
a ground truth corresponding to a halfspace.  We observe that the landscape is non-convex; however we can see that the corresponding gradient field ``points towards the optimal direction'' and therefore gradient descent converges to the global minimizer. A potential issue is the fact
that the landscape contains regions where the gradients may almost vanish and this could lead to the gradient iteration of the student getting trapped there. 
To handle this issue, in \Cref{alg:slm-halfspace} we multiply the gradient of SLaM with an appropriate step-size.
}
\label{fig:landscape}
\end{figure*}

\end{document}